\definecolor{DarkGreen}{rgb}{0.1,0.5,0.1}
\definecolor{DarkRed}{rgb}{0.5,0.1,0.1}
\definecolor{DarkBlue}{rgb}{0.1,0.1,0.5}
\newcommand{\R}{{\mathbb{R}}}
\newcommand{\E}{{\mathbb{E}}}
\newtheorem*{theorem*}{Theorem}
\newtheorem{theorem}{Theorem}
\newtheorem{claim}{Claim}
\newtheorem{lemma}{Lemma}
\newtheorem{proposition}{Proposition}
\newtheorem{assumption}{Assumption}
\newtheorem{definition}{Definition}
\newtheorem*{remark*}{Remark}
\newtheorem{remark}{Remark}
\newcommand{\newreptheorem}[2]
{\newenvironment{rep#1}[1]
	{\def\rep@title{#2 \ref{##1}} \begin{rep@theorem}}%
		{\end{rep@theorem}}}
\newcommand{\removelatexerror}{\let\@latex@error\@gobble}
\newcommand{\argmin}{\mathop{\mathrm{argmin}}}
\def\eqref#1{equation~\ref{#1}}
\def\ceil#1{\lceil #1 \rceil}
\def\1{\bm{1}}
\def\vb{{\bm{b}}}
\def\vp{{\bm{p}}}
\def\vw{{\bm{w}}}
\def\vx{{\bm{x}}}
\def\vy{{\bm{y}}}
\def\mD{{\bm{D}}}
\def\mE{{\bm{E}}}
\def\mW{{\bm{W}}}
\def\mX{{\bm{X}}}
\def\mZ{{\bm{Z}}}
\DeclareMathAlphabet{\mathsfit}{\encodingdefault}{\sfdefault}{m}{sl}
\SetMathAlphabet{\mathsfit}{bold}{\encodingdefault}{\sfdefault}{bx}{n}
\title{
How Do Transformers Learn Topic Structure: \\ Towards a Mechanistic Understanding
} 
\author{
Yuchen Li$^1$
\quad Yuanzhi Li$^{1,2}$
\quad Andrej Risteski$^1$ \\
 \normalsize{$^1$Carnegie Mellon University\qquad $^2$Microsoft Research}\\
\normalsize{ \texttt{yuchenl4@cs.cmu.edu, yuanzhil@andrew.cmu.edu, aristesk@andrew.cmu.edu}}}
\date{}
\begin{document}

\maketitle

\begin{abstract}
While the successes of transformers across many domains are indisputable, accurate understanding of the learning mechanics is still largely lacking. Their capabilities have been probed on benchmarks which include a variety of structured and reasoning tasks---but mathematical understanding is lagging substantially behind.   
Recent lines of work have begun studying representational aspects of this question: that is, the size/depth/complexity of attention-based networks to perform certain tasks. However, there is no guarantee the learning dynamics will converge to the constructions proposed. 
In our paper, we provide fine-grained mechanistic understanding of how transformers learn ``semantic structure'', understood as capturing co-occurrence structure of words. 
Precisely, we show, through a combination of mathematical analysis and experiments on Wikipedia data and synthetic data modeled by Latent Dirichlet Allocation (LDA), that the embedding layer and the self-attention layer encode the topical structure. 
In the former case, this manifests as higher average inner product of embeddings between same-topic words. 
In the latter, it manifests as higher average pairwise attention between same-topic words. 
The mathematical results involve several assumptions to make the analysis tractable, which we verify on data, and might be of independent interest as well. 

\end{abstract}

\vspace{-0.2cm}
\section{INTRODUCTION}
\vspace{-1mm}
The transformer architecture \citep{vaswani2017attention} is a critical building block of many leading approaches to natural language processing \citep{devlin2019bert, brown2020language}, and other domains such as vision \citep{dosovitskiy2021an} and protein structure prediction \citep{jumper2021highly}. 
While the NLP community has produced a large body of work on probing and visualizing trained networks \citep{hewitt2019structural, clark2019bert, tenney2019bert, kovaleva2019revealing}, we still have little formal understanding of the mechanisms by which transformers, trained with simple gradient-descent based algorithms, learn from their training data. 
The challenge is that the training dynamics are non-trivial, even for relatively simple structured data distributions, and even for simple (e.g. 1-layer) transformers. 

In particular, we study \emph{semantic structure}, as understood through the lens of \emph{co-occurrences} of words, and their topical structure. 
Precisely, if we fit topics to a real-life corpus like Wikipedia using a \emph{Latent Dirichlet Allocation} (LDA, \citealp{blei2003latent}) model, we find a pretrained BERT model produces token embeddings that are more similar (in terms of inner product or cosine similarity) if they belong to the same topic, and more different if they belong to different topics (see e.g. Figure~\ref{fig:wiki_emb_dot_eg}).

Inspired by these observations, we study LDA-generated data as a sandbox to understand---both through experiments on such synthetic data, and theoretical results---the process by which the embeddings and attention learn the topical structure. We find that the above observations from Wikipedia data are even more pronounced on synthetic LDA data. Moreover, we mathematically prove why such structure arises by analyzing a simplified \emph{two-stage training dynamics} for a single-layer transformer trained under the masked language modeling objective. We also verify the two-stage nature of training dynamics obtains for a wide variety of optimizers and hyperparameter settings.  
\footnote{Code is released at \url{https://github.com/YuchenLi01/transformer_topic_model_LDA}}

\vspace{-0.1cm}
\section{OVERVIEW OF RESULTS} \label{sec:overview}
\vspace{-0.2cm}

We focus on understanding the optimization dynamics of transformers in a simple sandbox: a single-layer transformer trained on (synthetic) data following a topic model distribution---and validate that our results robustly transfer to real data (Wikipedia \citealp{wikidump}). We show that topic structure can be encoded both in the embedding layer, and in the attention mechanism of the network. Moreover, even if one of these components is not trained (i.e. handicapped), the other can ``compensate'' for it.  

Theoretically, we characterize precisely how the topic structure is learned in the two extremal cases: when the attention mechanism is frozen to be uniform, and the only model parameters that are trained are the token embeddings; and when the token embeddings are frozen to be one-hot vectors, and the attention parameters (the key, query, and value matrices) are trained. 
We empirically verify our characterization on synthetic LDA-generated data, and also show that on real Wikipedia data, topic structure is learned both in the embeddings, and the attention mechanism.

\vspace{-0.1cm}
\subsection{Topic structure is encoded in token embeddings} 
\vspace{-0.1cm}

\begin{figure}[!t]
  \centering
  \begin{minipage}[b]{0.23\textwidth}
    \includegraphics[width=\textwidth]{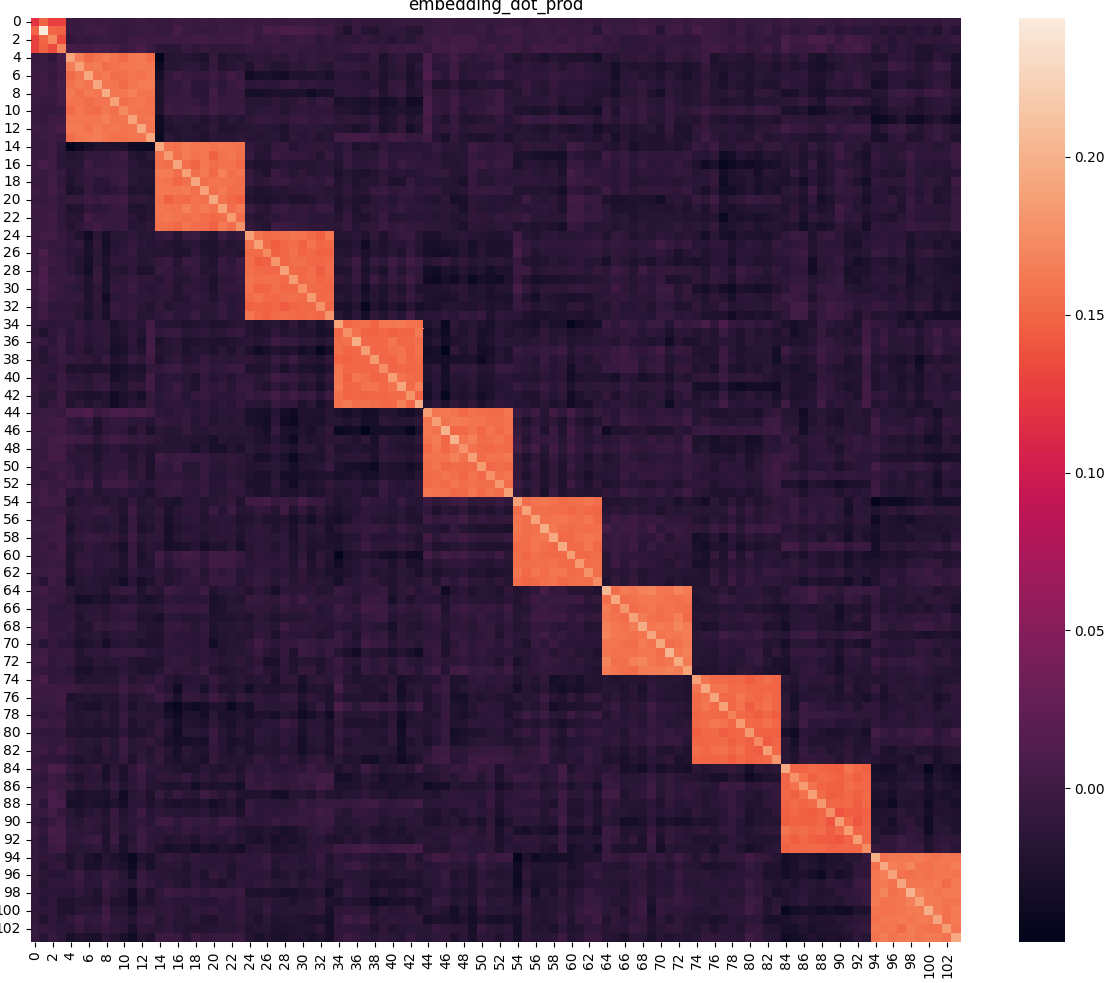}
  \end{minipage}
  \hfill
  \begin{minipage}[b]{0.23\textwidth}
    \includegraphics[width=\textwidth]{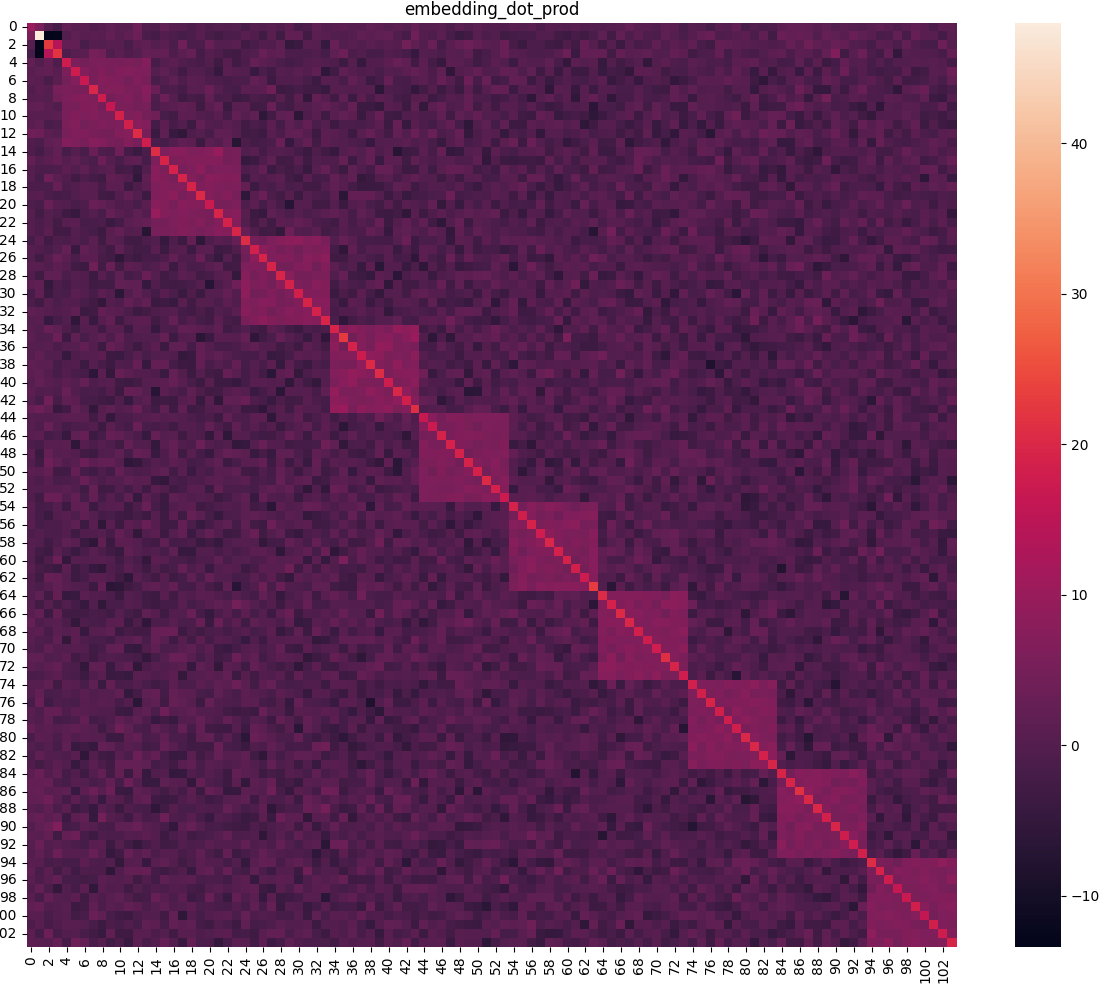}
  \end{minipage}
  \begin{minipage}[b]{0.23\textwidth}
    \includegraphics[width=\textwidth]{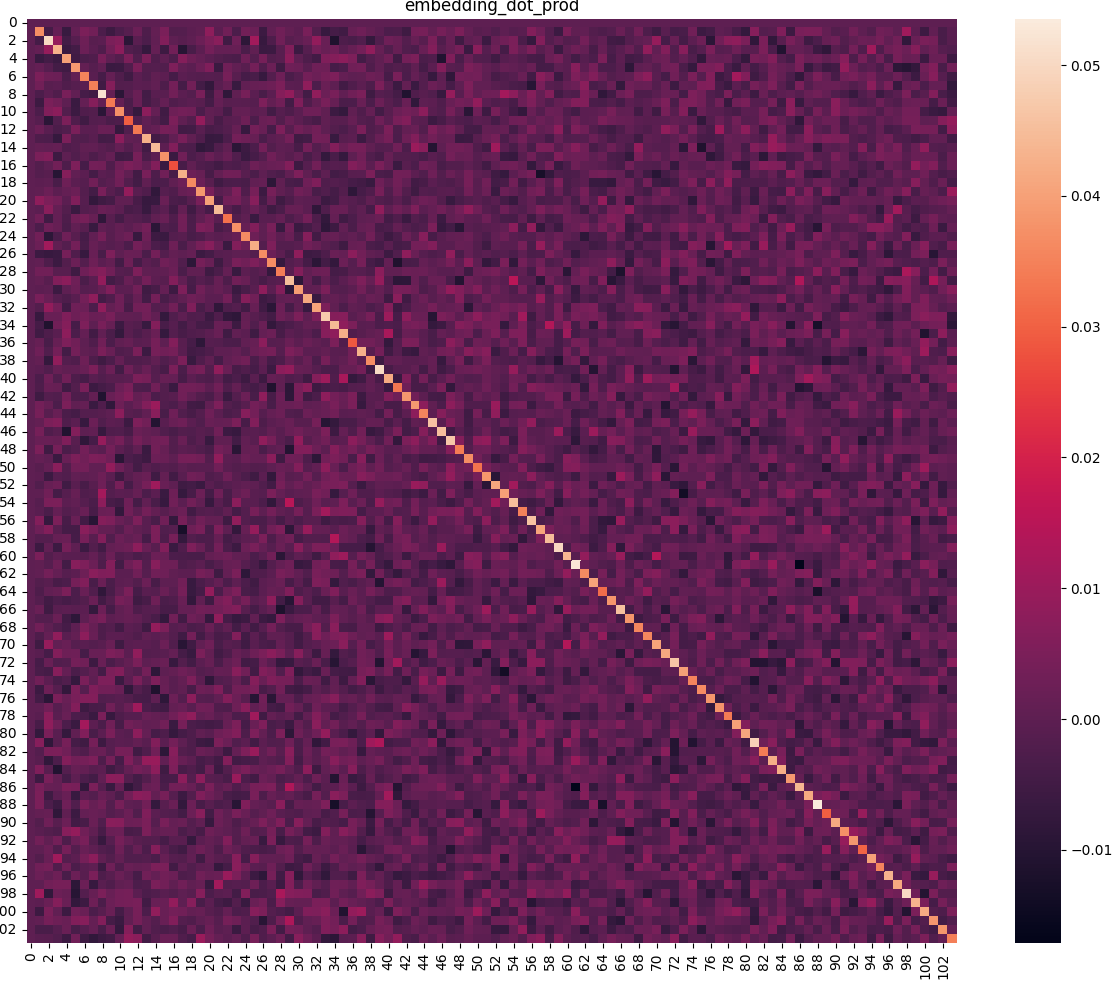}
  \end{minipage}
  \hfill
  \begin{minipage}[b]{0.23\textwidth}
    \includegraphics[width=\textwidth]{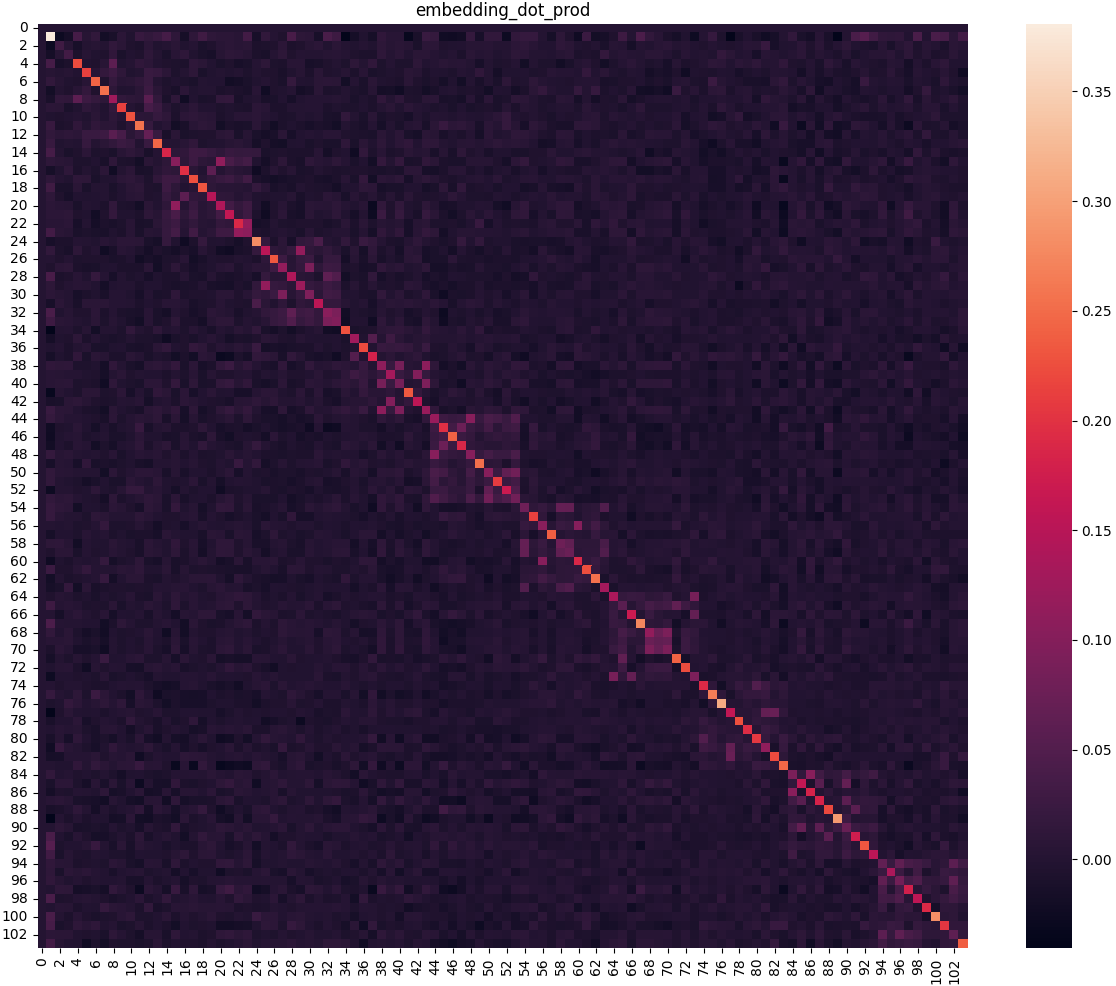}
  \end{minipage}
  \vspace{-0.3cm}
  \caption{Embedding weight dot product of models trained on synthetic topic modeling data (Section~\ref{sec:experiments:setup}). 
  The four plots correspond to different combinations of loss function and optimizer: 
  (left to right) cross-entropy with SGD, cross-entropy with Adam, squared loss with SGD, squared loss with Adam,
  all using learning rate 0.01.
  The block-wise pattern verifies our theory in Section~\ref{sec:embedding}.
  The 10 blocks correspond to the 10 topics in the data distribution in Section~\ref{sec:setup:topic_modeling}.
  In particular, a diagonal pattern is a special case of the block-wise optima that we prove (see Theorem~\ref{thm:optimal_embedding}).
  }
\label{fig:embedding_dot}
\vspace{-0.3cm}
\end{figure}

In the first extremal case, we analyze the optima when we solely train the embedding layer. Precisely, we show that even when we freeze the attention scores to be uniform and all other elements of the transformer are set to identity, the model can still achieve near optimal loss by ``encoding'' the topic structure in the embedding weights:

\begin{theorem*}[Optimal word embedding, informal] 
Suppose the training data follows a topic model data distribution,
and the transformer has trainable embedding layer, frozen (uniform) attention scores, and all other components set to identity.
Then, the optimal embedding layer of a single layer transformer is such that 
the inner product of the embeddings of a pair of words is larger when the words belong to the same topic, and smaller when they belong to different topics. 
\end{theorem*}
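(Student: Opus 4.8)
The approach is to recast the training problem as a convex optimization over the Gram matrix of the embeddings, use the symmetries of the topic model to reduce the optimum to a two- or three-parameter family, and then settle one scalar inequality by a perturbation argument that invokes the actual predictive structure of the topic model. To begin, I would spell out what the handicapped network computes at a masked position: with uniform attention and with the value map, output map, MLP and LayerNorms all set to identity and a tied read-out, the logit assigned to a word \(v\) is \(\ell_v=\tfrac1n\sum_{j}\langle e_v,e_{w_j}\rangle\) up to a constant that does not depend on \(v\) (the [MASK] token contributes equally to all logits, by symmetry, and cancels in the softmax). Thus the loss depends on the embedding matrix \(E\) only through the matrix of pairwise inner products \(M_{vw}=\langle e_v,e_w\rangle\). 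For the cross-entropy objective this loss is \emph{convex} in \(M\): \(\ell\) is linear in \(M\) and \(-\log\softmax(\ell)_{w_i}=-\ell_{w_i}+\mathrm{logsumexp}(\ell)\) is convex in \(\ell\); the variant with squared loss and no softmax is likewise convex in \(M\). So it suffices to minimize a convex function over the PSD cone (with an extra rank constraint when the embedding dimension is below the vocabulary size, which can be carried through the same arguments).

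Next I would exploit the invariances of the data. Under the paper's simplifying assumptions (symmetric Dirichlet priors, equal-size topics, and the corresponding symmetry of the topic--word distributions) the distribution of a document is invariant under the group \(G\) generated by permutations of the topics and permutations of words within a topic. Because the loss \(L(M)\) is convex and \(G\)-invariant and \(G\) maps the PSD cone to itself, averaging any minimizer over its \(G\)-orbit produces, by Jensen, a \(G\)-invariant minimizer; and a \(G\)-invariant symmetric matrix has just three distinct entries — a diagonal value \(\alpha\), a same-topic off-diagonal value \(\beta\), and a cross-topic value \(\gamma\). Hence there is an optimal embedding with \(\langle e_v,e_v\rangle=\alpha\), \(\langle e_v,e_w\rangle=\beta\) for distinct same-topic \(v,w\), and \(\langle e_v,e_w\rangle=\gamma\) otherwise — precisely the block pattern of Figure~\ref{fig:embedding_dot} — and the theorem reduces to proving \(\beta>\gamma\). (If one drops the within-topic word symmetry, the same argument still yields a block structure indexed by topic pairs, and the claim becomes that the average same-topic inner product exceeds the average cross-topic one.)

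Finally, for \(\beta>\gamma\) I would use a perturbation/exchange argument. With the three-parameter \(M\), the model's predicted probability of \(v\) is proportional to \(\exp\!\big(\tfrac1n[(\beta-\gamma)\,n_v+(\alpha-\beta)\,m_v]\big)\), where \(n_v\) is the number of context words in \(v\)'s topic and \(m_v\) the number of context occurrences of \(v\) itself. Starting from any candidate optimum with \(\beta\le\gamma\), increasing \(\beta-\gamma\) tilts the predicted distribution toward topics that are well represented in the context; since the true conditional \(p(w_i=v\mid\text{context})\) is itself increasing in \(n_v\), this strictly decreases the cross-entropy (concretely, the derivative of \(L\) in this direction equals, up to a positive factor, \(-\Cov\) of the topic-\(k\) context count with the event that the masked word lies in topic \(k\), summed over \(k\)), contradicting optimality; convexity then guarantees this local comparison pins down the global optimum. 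I expect the crux to be the quantitative input here: showing that the posterior probability that the masked token belongs to topic \(k\) is strictly increasing in the number of topic-\(k\) words in the observed context. This is the step that genuinely uses the topic model — it follows from the Dirichlet--multinomial/Pólya-urn structure of LDA, and the concentration/sparsity assumptions on the topic mixture (which the paper checks on data) are what make the monotonicity clean, either through a direct computation of the posterior or a coupling between contexts differing in one word. The same exchange argument, run within a single topic and across scalings of \(E\), fixes the sign of \(\alpha-\beta\) and the overall temperature of the embeddings (the latter being determined by the non-degeneracy of the true conditional).
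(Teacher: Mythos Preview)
Your approach is \emph{genuinely different} from the paper's, and it is worth spelling out where they diverge and what each buys.

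\textbf{What the paper actually does.} The formal statement (Theorem~\ref{thm:optimal_embedding}) is for the \emph{squared loss} with a specific bias $\vb^{\text{pred}}$, under the simplified topic model of Section~\ref{sec:setup:topic_modeling} together with Assumption~\ref{assumption:infinitely_long_document} (infinitely long documents). With uniform attention and $\mW^V=I$, the model output is $\mE\,\tilde{\mX}\,\tfrac1N\1_{N\times N}+\vb^{\text{pred}}$, which the paper rewrites as $\mE'\,\tilde{\mX}\,\tfrac1N\1_{N\times N}$ by absorbing the bias into the $0$-th column. The optimal prediction under squared loss at every masked position is exactly $P_{\vw}(\cdot)$, the empirical word distribution in the unmasked document; Assumption~\ref{assumption:infinitely_long_document} makes this deterministic given the topic set. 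The proof (via Lemma~\ref{lemma:optimal_linear_transform_given_uniform_attention}) then solves a system of \emph{linear} equations in the entries of $\mE'$ that must hold for every admissible $P_{\vw}$, using a contradiction argument (Claim~\ref{claim:diff_topics_avg}) to force $\sum_{l\in t}\mE_{il}$ to be equal across all $t\neq\texttt{topic}(i)$, and a coefficient-matching step to obtain $\sum_{l\in\texttt{topic}(i)}\mE_{il}-\sum_{l\in t}\mE_{il}=\tfrac{1}{1-(1-p_c)p_m}>0$. This yields a full description of the \emph{entire} optimal set (a family indexed by free constants $u_0,\dots,u_{Tv}$), plus explicit realizability by $\mW^E=\tfrac{1}{\sqrt{1-(1-p_c)p_m}}I$.

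\textbf{What you do differently.} You work primarily with cross-entropy, argue convexity in the Gram matrix $M$, average over the symmetry group $G$ to land on a three-parameter $(\alpha,\beta,\gamma)$ minimizer, and then use a first-order perturbation (a covariance sign) to get $\beta>\gamma$. This is elegant and more conceptual than the paper's brute-force linear algebra, and the symmetry reduction is exactly the mechanism behind the block pattern in Figure~\ref{fig:embedding_dot}. The paper does not use group-averaging at all; it instead exploits the infinite-document assumption to turn the problem into matching linear coefficients.

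\textbf{Two places to be careful.} First, the paper deliberately avoids cross-entropy (see Remark~\ref{rem:difficulty_cross_entropy}): with softmax, the optimal logits are not unique and typically sit at infinity, so ``the optimal $M$'' need not exist and your perturbation argument, as written, establishes only that no finite $M$ with $\beta\le\gamma$ is optimal, not that a finite optimum with $\beta>\gamma$ exists. For the squared loss the paper actually treats, your ``cancels in the softmax'' remark about the \texttt{[MASK]} token no longer applies, and the specific bias $\vb^{\text{pred}}$ is what makes the problem land cleanly. Second, your appeal to ``Dirichlet--multinomial/P\'olya-urn structure'' is off-target for this paper: the data model here is \emph{not} LDA with a Dirichlet prior but the simplified generative process of Section~\ref{sec:setup:topic_modeling} plus Assumption~\ref{assumption:infinitely_long_document}. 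Under that assumption the posterior over topics given the context is essentially a $0/1$ indicator of the $\tau$ topics present, so the monotonicity you need is trivial --- but it is Assumption~\ref{assumption:infinitely_long_document}, not P\'olya-urn exchangeability, that delivers it.
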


Intuitively, this result states that words of the same topic, after training, have more similar embeddings than words of different topics.
In this sense, the embedding layer captures the topic structure. 
We also empirically show (Section~\ref{sec:experiments} and Figure~\ref{fig:embedding_dot}) that this phenomenon is robust to differences in loss function and optimization method. 
See Section~\ref{sec:embedding} for the formal theorem and Appendix~\ref{sec:appendix:embedding} for the proof.

\vspace{-0.1cm}
\subsection{Topic structure is encoded in self-attention} 
\vspace{-0.1cm}

\begin{figure}[t!]
  \centering
  \begin{minipage}[b]{0.23\textwidth}
    \includegraphics[width=\textwidth]{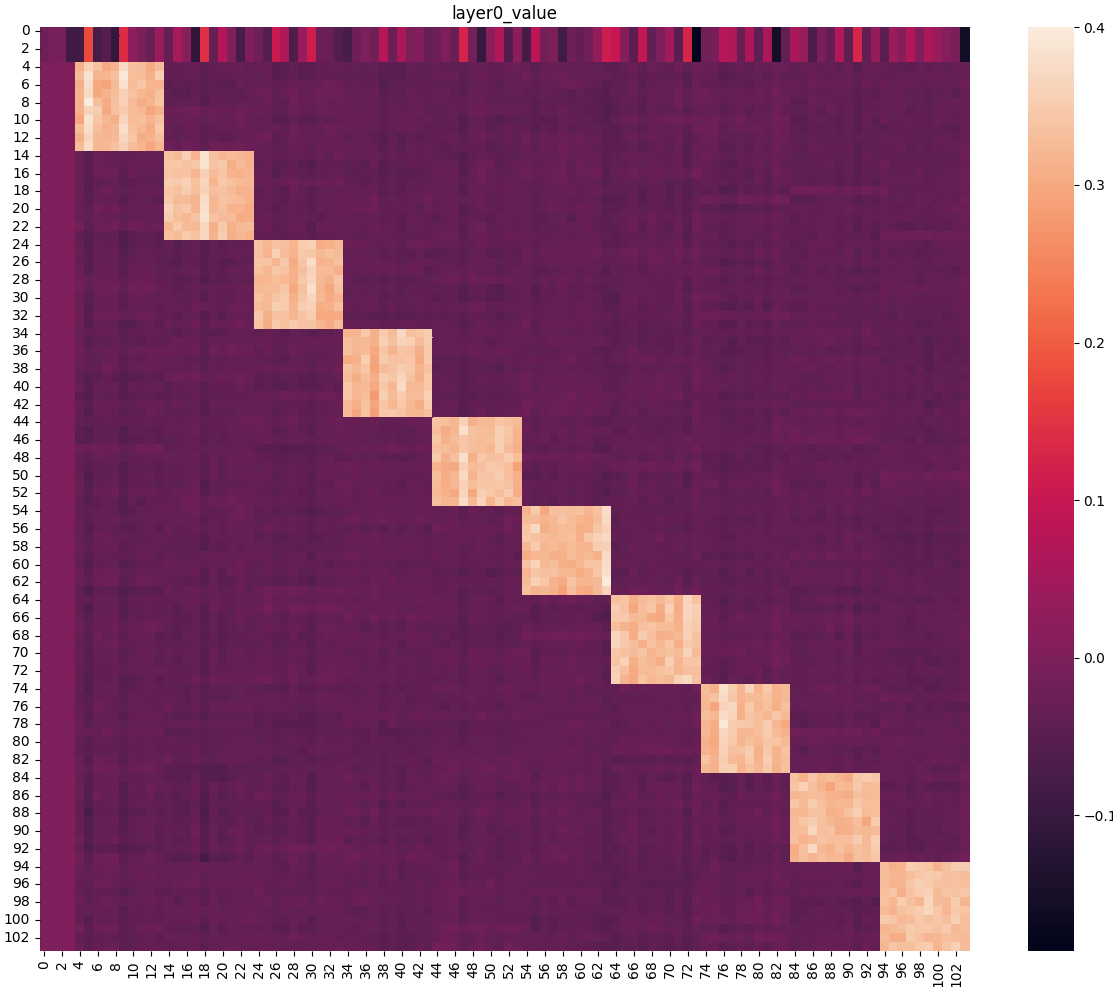}
  \end{minipage}
  \hfill
  \begin{minipage}[b]{0.23\textwidth}
    \includegraphics[width=\textwidth]{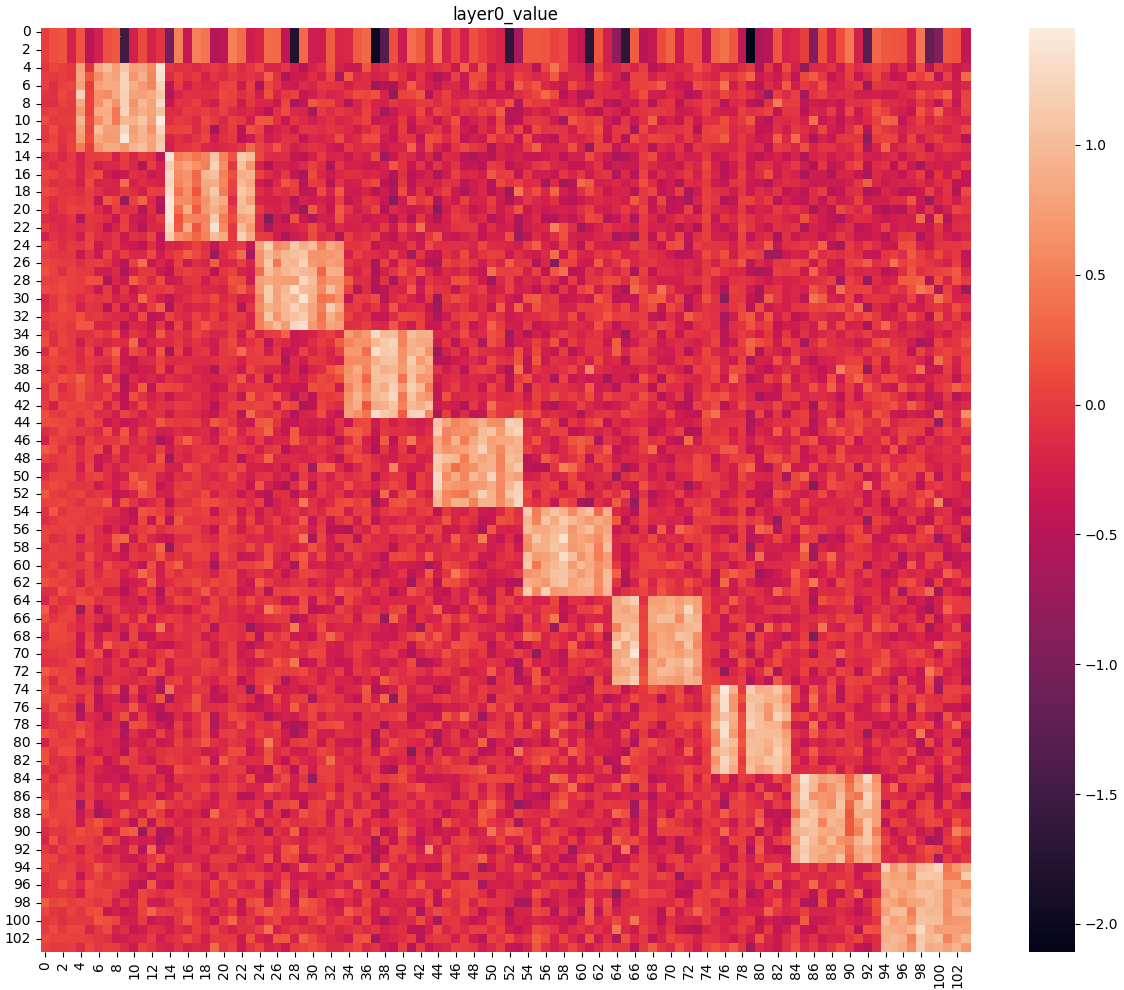}
  \end{minipage}
  \begin{minipage}[b]{0.23\textwidth}
    \includegraphics[width=\textwidth]{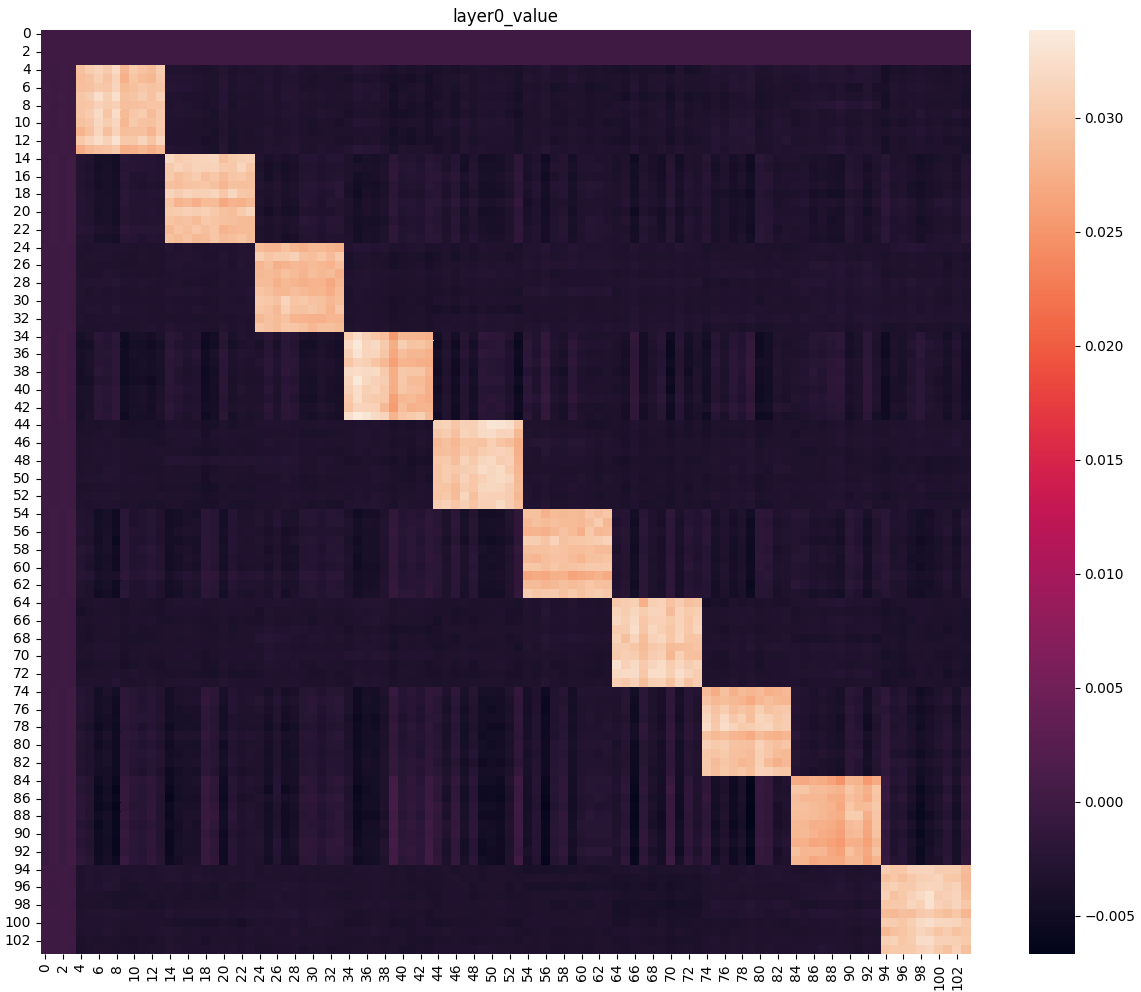}
  \end{minipage}
  \hfill
  \begin{minipage}[b]{0.23\textwidth}
    \includegraphics[width=\textwidth]{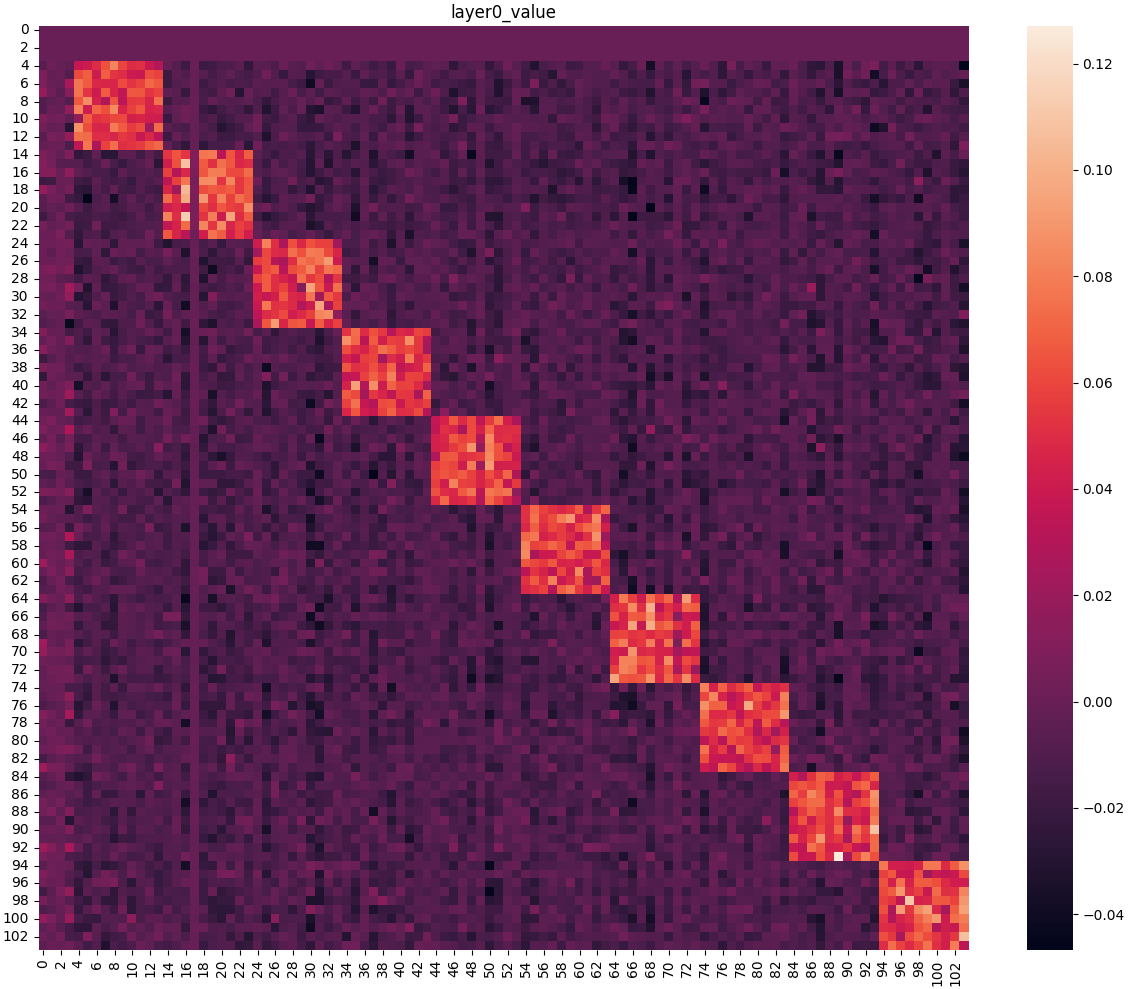}
  \end{minipage}
  \vspace{-0.3cm}
  \caption{Convergence point of trained $\mW^V$ (with $L_2$-regularization) when freezing uniform attention weights and one-hot word embedding. 
  The four plots correspond to different combinations of loss function and optimizer. 
  (Left to right) cross-entropy with SGD, cross-entropy with Adam, squared loss with SGD, squared loss with Adam,
  all using learning rate 0.01.
  The block-wise pattern verifies our theory in Section~\ref{sec:attention:value}.
  The 10 blocks correspond to the 10 topics in the data distribution.
  Results are qualitatively similar without $L_2$-regularization, or if we train $\mW^K$ and $\mW^Q$ instead of freezing them (see Appendix~\ref{sec:appendix:experiments:Wv}).
  }
\label{fig:Wv_one_hot_freeze_uniform_attention_l2reg}
\vspace{-0.3cm}
\end{figure}

In the second extreme, we study the behavior of the self-attention in a transformer trained on a topic modeling distribution, 
without the aid of trained token embeddings --- 
i.e. when we use hard-coded, \emph{one-hot} embeddings. 
The attention weight matrices $\mW^K$, $\mW^Q$, and $\mW^V$ are initialized to near-zero matrices.  
To make the analysis feasible, we break down the training process into two separate stages, and characterize the optima in each stage. In the \emph{first stage}, the attention is frozen to be uniform, and the matrix $\mW^V$ is trained. In the \emph{second stage}, the matrix $\mW^V$ is frozen to the optimal value from the first stage, and the optimal attention weights is analyzed. Intuitively, such a two-stage approximation is reasonable, because in the initial stages of training, the gradients for the value matrix are much larger than those for the key and query matrices 
(see Section~\ref{sec:discussions}).
While this is an approximation, this two-stage phenomenon can be observed empirically for a variety of hyperparameter settings (see Section~\ref{sec:attention:two_stage} and in particular Figure~\ref{fig:no_layernorm_trained_zero_init_emb_dec}).
We also provide empirical evidence that the optima characterized in our analysis closely track the actual convergence points of models. 

In brief, the self-attention function is $\text{Attn}(Z) \coloneqq \mW^V \mZ A(\mZ)$
in which $A(\mZ)$ denotes the attention weights,
and $\mW^V$ is the value matrix weight. Intuitively, $A(Z)_{ij}$ is the importance of the i-th word for predicting the j-th word,
and $\mW^V$ is aggregates the word embeddings in a sentence, weighted by the attention weights $A(\mZ)$.
The formal definition of the model architecture is in Section~\ref{sec:setup:transformer}.

\vspace{-0.2cm}
\subsubsection{Optimal \texorpdfstring{$W^V$}{Wv} in Stage 1}
\label{sec:overview:attention:value}
\vspace{-0.2cm}

We characterize the optimal $\mW^V$ in the initial stage of training:
$\mW^V$ will learn a block-wise structure (see Figure~\ref{fig:Wv_one_hot_freeze_uniform_attention_l2reg}), in which each block corresponds to a topic:

\begin{theorem*}[Optimal $\mW^V$, informal] 
Suppose the training data follows a topic model data distribution,
the token embeddings are frozen to be one-hot vectors, 
and attention scores are frozen to be uniform.
Then, under mild $L_2$ regularization,
the optimal $\mW^V$ for the masked language modeling objective has block-wise structure,
namely the $(i, j)$-th entry of $\mW^V$ is on average larger when the tokens $i$ and $j$ belong to the same topic, 
and on average smaller when the tokens $i$ and $j$ belong to different topics.   
\end{theorem*}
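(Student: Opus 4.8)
The plan is to reduce the masked-language-modeling training problem to an explicit convex program in $\mW^V$, compute the two population moments that govern its minimizer under the LDA prior, and then read off the block structure.

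First I would carry out the reduction. With one-hot token embeddings and frozen uniform attention, the value branch $\mathrm{Attn}(\mZ)=\mW^V\mZ A(\mZ)$ produces, at the masked position $j$, the vector $\mW^V\bar e$ where $\bar e$ is the average of the input one-hot embeddings; up to a fixed offset from the \texttt{[MASK]} token and an overall rescaling (both absorbed into $\mW^V$ and $\lambda$), this equals $\mW^V\hat p_{-j}$ with $\hat p_{-j}:=\tfrac{1}{n-1}\sum_{l\ne j}e_{w_l}$ the empirical word distribution of the context. Since all other components are the identity, the regularized population objective is
\begin{equation}
L(\mW^V)=\E_{w_{1:n},\,j}\!\big[\ell(\mW^V\hat p_{-j},\,e_{w_j})\big]+\lambda\,\norm{\mW^V}_F^2 ,
\end{equation}
where $\ell$ is the squared loss $\norm{\cdot-\cdot}^2$ or the cross-entropy $\mathrm{CE}(\softmax(\cdot),\cdot)$. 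In both cases $\ell$ is convex in its logit argument, which is linear in $\mW^V$, so $L$ is strongly convex with a unique minimizer $\mW^V_\star$ determined by the stationarity condition $2\lambda\,\mW^V_\star=-\,\nabla_{\mW^V}\E[\ell]\big|_{\mW^V_\star}$.

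Next I would identify the controlling objects $M:=\E[e_{w_j}\hat p_{-j}^\top]$ and $\Sigma:=\E[\hat p_{-j}\hat p_{-j}^\top]$. For the squared loss the stationarity condition is linear and yields $\mW^V_\star=M(\Sigma+2\lambda I)^{-1}$; for cross-entropy, the $L_2$ term forces $\norm{\mW^V_\star}=O(1/\lambda)$, so expanding the first-order condition about $\mW^V=0$ (where $\softmax$ of the logits is the uniform vector $u$) gives $\mW^V_\star=\tfrac{1}{2\lambda}\big(M-u\bar p^\top\big)+O(1/\lambda^2)$ with $\bar p:=\E[\hat p_{-j}]$. In both cases the block structure of $\mW^V_\star$ is inherited from that of $M$ and $\Sigma$, so the heart of the matter is a moment computation. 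Because the words of an LDA document are i.i.d.\ given the topic proportions $\theta\sim\mathrm{Dir}(\alpha)$ and (disjoint supports) $\Pr[w=i\mid\theta]=\theta_{\tau(i)}\beta_{\tau(i),i}$, where $\tau(i)$ is the topic of word $i$, I obtain
\begin{equation}
M_{i,i'}=\beta_{\tau(i),i}\,\beta_{\tau(i'),i'}\;\E_\theta\!\big[\theta_{\tau(i)}\theta_{\tau(i')}\big],
\end{equation}
and the analogous expression for $\Sigma$ up to an $O(1/n)$ diagonal ``same-token'' correction. Dirichlet coordinates satisfy $\Var(\theta_k)>0$ and $\Cov(\theta_k,\theta_{k'})<0$ for $k\ne k'$, so $\E[\theta_k^2]>\E[\theta_k\theta_{k'}]$ (directly in the symmetric case, and after block-averaging in general). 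Averaging $M_{i,i'}$ over pairs inside a topic block versus across two blocks, the $\beta$ factors enter only through within-topic averages while this Dirichlet gap supplies the separation, giving the claimed on-average ordering for $M$, and likewise for $\Sigma$.

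The last step — showing that passing from $M,\Sigma$ to $\mW^V_\star$ does not reverse the ordering — is the one I expect to be the main obstacle. In the symmetric model (equal topic sizes, symmetric $\alpha$, uniform $\beta$ within topics), or in a long-document / mild-regularization limit, $\Sigma+\lambda I$ takes the form $aI+(\text{block matrix with constant within-block and constant cross-block entries})$, whose inverse has the same scalar-plus-block form by a direct Sherman--Morrison/Woodbury computation; a product of two such matrices is again block-structured with dominant diagonal blocks, so $\mW^V_\star$ inherits the structure exactly, matching the block-diagonal picture of Figure~\ref{fig:Wv_one_hot_freeze_uniform_attention_l2reg}. For the general asymmetric model I would bound the off-block entries of $(\Sigma+\lambda I)^{-1}$ via a Neumann-series perturbation around its block-diagonal part and show the induced cross-block contamination of $\mW^V_\star$ is smaller than the within-block gap produced by the Dirichlet moment inequality; the need for this quantitative comparison, together with within-topic heterogeneity of $\beta$, is exactly why the statement is phrased with block averages. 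The same perturbation accounting reconciles the two losses: the cross-entropy and squared-loss minimizers agree to leading order in $1/\lambda$, and the strong convexity from $\lambda\norm{\cdot}_F^2$ controls the remainders, so the on-average block ordering holds for the MLM objective in either form.
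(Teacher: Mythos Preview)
Your approach is genuinely different from the paper's, and for the squared loss it is essentially viable, but there are real gaps you should be aware of.

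\textbf{How the paper does it.} The paper does \emph{not} invert any second-moment matrix. Under its specific data model (Section~\ref{sec:setup:topic_modeling}: $\tau$ topics chosen uniformly, uniform words within a topic, disjoint supports) together with Assumption~\ref{assumption:infinitely_long_document} (infinitely long documents), the masked-document empirical distribution $\tilde{\mX}\,[1/N]_{N\times N}$ is a \emph{deterministic} function of the topic set, given exactly by \eqref{eq:observed_token_distribution}. The unregularized squared-loss problem then becomes a set of linear equations in $\mW^V$, and Lemma~\ref{lemma:optimal_linear_transform_given_uniform_attention} pins down the minimizer set as an affine family parameterized by free scalars $u_0,\dots,u_{Tv}$ with only the within-topic \emph{row sums} fixed. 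The ``mild $L_2$'' statement is literally $\lambda\to 0$: the proof shows the regularized minimizer must lie in this affine family and then picks the minimum-Frobenius-norm element, which by convexity is uniform within each block and has explicit entries (Theorem~\ref{thm:optimal_Wv_given_uniform_attention_l2reg}). No Sherman--Morrison/Neumann step is needed.

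\textbf{Where your route diverges and where it breaks.} Your ridge formula $\mW^V_\star=M(\Sigma+2\lambda I)^{-1}$ is correct for squared loss and, as $\lambda\to 0$, converges to $M\Sigma^+$, which coincides with the paper's min-norm solution; so for the paper's symmetric model your inversion step would go through cleanly. But several things are off relative to the actual theorem: (i) the paper's topic prior is \emph{not} Dirichlet, so your $\E[\theta_k\theta_{k'}]$ calculation targets a different model (the inequality still holds for the paper's prior, but you should compute with that prior); (ii) the input to the value branch is the \emph{masked} document including the \texttt{[MASK]} coordinate and random replacements, and uniform attention averages \emph{all} $N$ positions including $j$ --- this is not ``a fixed offset absorbed into $\mW^V$ and $\lambda$'' but produces the specific constants $c_1,c_2,c_3$ and the nonzero $0$-th column in the paper's answer; (iii) your cross-entropy argument is in the wrong regime: if $\|\mW^V_\star\|=O(1/\lambda)$ and $\lambda$ is small, then $\mW^V_\star$ is \emph{far} from $0$, so linearizing the softmax at $\mW^V=0$ is not justified. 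The paper, correspondingly, only proves the result for squared loss (see Remark~\ref{rem:difficulty_cross_entropy}). Finally, the part you flag as the main obstacle --- propagating block structure through $(\Sigma+\lambda I)^{-1}$ in a general asymmetric LDA --- is exactly what the paper avoids by working in a fully symmetric model; your Neumann-series plan is plausible but would need quantitative control you have not yet supplied.
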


For the formal theorem statement, see Section~\ref{sec:attention}.
The proof is deferred to Appendix~\ref{sec:appendix:attention}. 
We also empirically show (Section~\ref{sec:experiments} and Figure~\ref{fig:Wv_one_hot_freeze_uniform_attention_l2reg}) that this phenomenon is robust to differences in training loss and optimization method.

\begin{figure}[!tbp]
  \centering
  \begin{minipage}[b]{0.5\textwidth}  %
    \centering
    \includegraphics[width=1.0\textwidth]{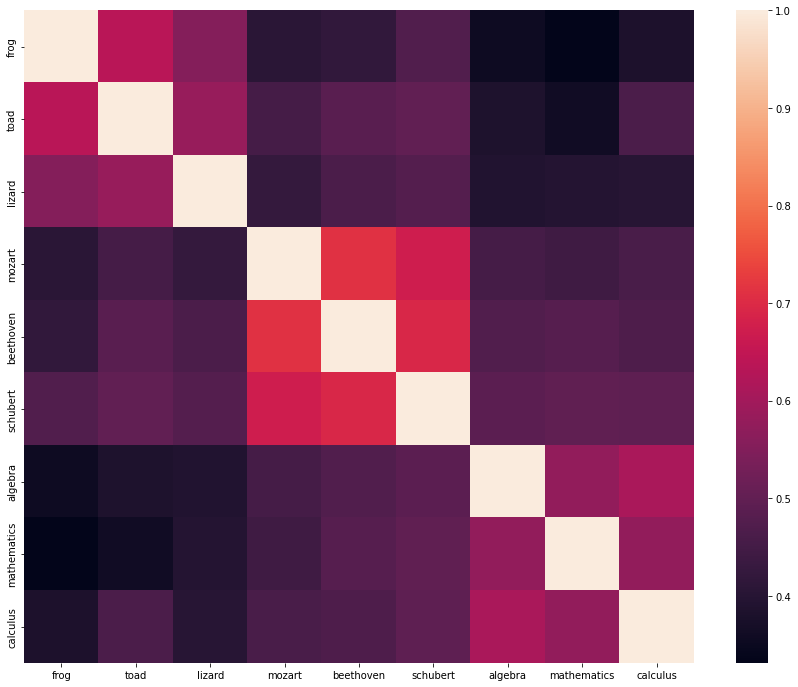}  %
  \end{minipage}
  \caption{For a BERT model pre-trained on Wikipedia corpus, 
  the cosine similarity of the word embeddings encodes topical structures,
  i.e. it is larger if the two words belong to the same topic, and smaller if they belong to different topics. This phenomenon is more pronounced for words that are very likely only under a few topics. 
  In this figure, the nine words fall into three topics: 
  \{frog, toad, lizard\} are animals,
  \{mozart, beethoven, schubert\} are musicians,
  and \{algebra, arithmetic, calculus\} are mathematical concepts.
  }  
  \label{fig:wiki_emb_dot_eg}
\end{figure}

\subsubsection{Optimal attention weights in Stage 2}

For the second stage of the training dynamics, we assume $\mW^V$ is frozen to the optimal value in the first stage, and train the attention weights. 

\begin{theorem*}[Optimal attention weights, informal]
Suppose a single layer transformer is trained on a topic model data distribution,
and $\mW^V$ is frozen to the block-wise first-stage optima. 
Then,
the optimal attention weight for the masked language modeling objective  
is such that on average:
a convex combination of \emph{same-word} attention and \emph{same-topic-different-words} attention should be relatively large, compared to \emph{different-topic} attention.
\end{theorem*}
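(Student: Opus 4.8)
The plan is to reduce the Stage-2 problem to a low-dimensional optimization over a vocabulary-sized score matrix, and then argue by a perturbation/exchange argument that any minimizer must pull attention away from different-topic tokens. Concretely, with the embeddings frozen to one-hot vectors, the attention logit between positions $i$ and $j$ in a sentence $\mZ$ equals $((\mW^K)^\top\mW^Q)_{w_i,w_j}=:S_{w_i,w_j}$, so the attention weights $A(\mZ)_{ij}$ depend on the sentence only through the word identities and through $S$; the Stage-2 objective becomes $\mathcal{L}(S)=\E[\ell(\hat p_j,w_j)]$, where the expectation is over the LDA draw of the sentence and the masked position $j$, the prediction is $\hat p_j=\sum_i A(\mZ)_{ij}\,\mW^V e_{w_i}$, and $\ell$ is the MLM loss (I would run the clean argument with squared loss, the cross-entropy case being analogous). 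Fixing a masked word $w$ in topic $t$, I would partition the remaining positions into $G_{\mathrm{same}}$ (positions holding $w$), $G_{\mathrm{topic}}$ (positions holding a topic-$t$ word $\neq w$), and $G_{\mathrm{diff}}$ (positions holding an off-topic word), with total attention masses $\alpha,\beta,\gamma$ respectively ($\alpha+\beta+\gamma=1$).

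The next step is to read off the geometry of $\hat p_j$ from the block structure of $\mW^V$. Writing $\hat p_j=\alpha\,\bar v_{\mathrm{same}}+\beta\,\bar v_{\mathrm{topic}}+\gamma\,\bar v_{\mathrm{diff}}$ with $\bar v_\bullet$ the attention-weighted average of the columns $\mW^V e_{w_i}$ over the corresponding group, I invoke the Stage-1 theorem: each column of $\mW^V$ indexed by a topic-$t$ word has its large entries supported on the topic-$t$ coordinates, with the $w$-th coordinate large both for the same-word column and, on average, for same-topic columns, whereas columns indexed by other-topic words have a small $w$-th coordinate and place mass on other topics' coordinates. Hence $\bar v_{\mathrm{same}}$ and $\bar v_{\mathrm{topic}}$ are both well-aligned with the target $e_w$ (differing only by a diagonal-versus-same-block-off-diagonal discrepancy inside the block), while $\bar v_{\mathrm{diff}}$ contributes essentially noise in directions orthogonal to $e_w$.

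Plugging this decomposition into $\ell(\hat p_j,e_w)$ and using the block structure, I would show the per-example loss is, to leading order, decreasing in a convex combination $\lambda\alpha+(1-\lambda)\beta$ — with $\lambda$ determined by the ratio of the average diagonal entry to the average same-block off-diagonal entry of $\mW^V$ — and increasing in $\gamma$. Then the exchange step: from any candidate optimum, transferring attention mass off $G_{\mathrm{diff}}$ onto $G_{\mathrm{same}}\cup G_{\mathrm{topic}}$ (achievable by raising $S_{a,w}$ for $a\in\{w\}\cup\text{topic}(w)$) strictly decreases $\mathcal{L}(S)$, contradicting optimality. Averaging over the LDA generative process and the masked position then yields the population statement: at the optimum $\E[\lambda\alpha+(1-\lambda)\beta]$ exceeds $\E[\gamma]$ by a positive margin, i.e. a convex combination of same-word and same-topic-different-words attention is large relative to different-topic attention. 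For the co-occurrence statistics needed here (how often a masked topic-$t$ word is accompanied by another topic-$t$ word versus an off-topic word, as a function of the Dirichlet concentration) I would reuse the Dirichlet–multinomial moment computations already employed for the embedding theorem.

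The main obstacle is that the softmax couples all positions: perturbing a single logit $S_{a,w}$ changes attention everywhere and $\mathcal{L}(S)$ is non-convex, so the exchange argument must be made robust to these global effects rather than being a purely local first-order statement. A second, related difficulty is heterogeneity across sentences — when $w$ does not recur, $G_{\mathrm{same}}=\emptyset$ and the model must lean on $G_{\mathrm{topic}}$, whereas sentences with few same-topic companions do the opposite; it is exactly to absorb this variability that the theorem is phrased with a convex combination instead of asserting that $\alpha$ and $\beta$ are each individually large. The remaining technical hot-spot is the interaction between the block-structured $\mW^V$ and the softmax link in the cross-entropy case (normalization of $\hat p_j$), which I expect to handle by the same leading-order expansion used in the squared-loss analysis.
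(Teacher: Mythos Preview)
Your exchange argument has a genuine gap: the per-example loss is \emph{not} monotonically decreasing in the same-topic attention mass, so pushing mass from $G_{\mathrm{diff}}$ onto $G_{\mathrm{same}}\cup G_{\mathrm{topic}}$ does not always help. The reason is the random-replacement branch of the masking scheme. With probability $p_r$, the observed token $\tilde w_j$ is a uniformly random word, typically from a topic \emph{not} present in the document. For such a position, the query is $\tilde w_j$, not the true $w_j$; ``same-topic'' attention (relative to $\tilde w_j$) then concentrates the prediction on $\texttt{topic}(\tilde w_j)\neq\texttt{topic}(w_j)$, which under the block-wise $\mW^V$ pushes $\hat p_j$ away from $e_{w_j}$ and \emph{increases} the squared loss. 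This is precisely why the paper's formal statement is a two-sided bound, $\frac{v-1}{v}\alpha+\frac{1}{v}\beta\in(\lambda_1(\tau-1),\lambda_2 T)$: the lower bound comes from the correct-token branch (probability $p_c$), the upper bound from the random-replacement branch (probability $p_r$). Your setup also silently identifies $w_j$ with $\tilde w_j$ when you define the groups $G_\bullet$ and then perturb $S_{a,w}$; the attention column at position $j$ is indexed by $\tilde w_j$, so the perturbation you propose acts on the ``wrong'' positions in the random-replacement case.

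The paper takes a quite different route. It first restricts by symmetry and convexity to the three-parameter family of attention patterns in Assumption~2 (same-word, same-topic-different-word, different-topic), observes that with uniform-block $\mW^V$ the loss depends only on the scalar $\gamma=\frac{v-1}{v}\alpha+\frac{1}{v}\beta$, and then writes $L(\gamma)$ in closed form under the infinitely-long-document assumption (no Dirichlet moments are needed---the empirical token frequencies are deterministic). A case analysis over whether $\tilde w_j$ lands in $\texttt{topic}(w_j)$, in another in-document topic, or in an out-of-document topic, plus the asymptotics $\tau=o(T)$, reduces $L(\gamma)$ to two non-vanishing terms whose interplay is then bounded by comparing $L$ at $\gamma\le \lambda_1(\tau-1)$, at $\gamma\ge \lambda_2 T$, and at the test point $\gamma=\sqrt{\tau T}$. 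The comparison shows the middle regime is strictly best by an $\Omega(1)$ margin. If you want to salvage a perturbation-style argument, you would need to balance the $p_c$ and $p_r$ branches explicitly rather than claim monotonicity, which effectively brings you back to the paper's computation.
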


For the formal assumption and theorem statements, see Section~\ref{sec:attention}.
The proof is deferred to Appendix~\ref{sec:appendix:attention}.

We empirically show (in Section~\ref{sec:experiments}) that even when the all the self-attention weight matrices are \emph{jointly} trained (instead of trained with the two-stage process described), 
the behavior of attention weights still follows the relations that the above theorem describes.

\subsection{Empirical results}
We provide empirical evidence that the main conclusions in our theoretical findings remain robust even under settings that are more complex and realistic than our theoretical setup, and under variations of the training algorithm and loss. 
For example, we also test on synthetic data using a Latent Dirichlet Allocation (LDA) topic model \citep{blei2003latent} instead of our simplified topic modeling distribution; finally, we report results for a model pre-trained on the Wikipedia textual corpus,
and discuss the connections with our conclusions derived in the synthetic setting.
We describe detailed experimental setup and results in Section~\ref{sec:experiments}, as well as Appendix~\ref{sec:appendix:experiments}.

\begin{figure}[t]
  \centering
  \begin{minipage}[b]{0.5\linewidth}
    \centering
    \includegraphics[width=1.0\linewidth]{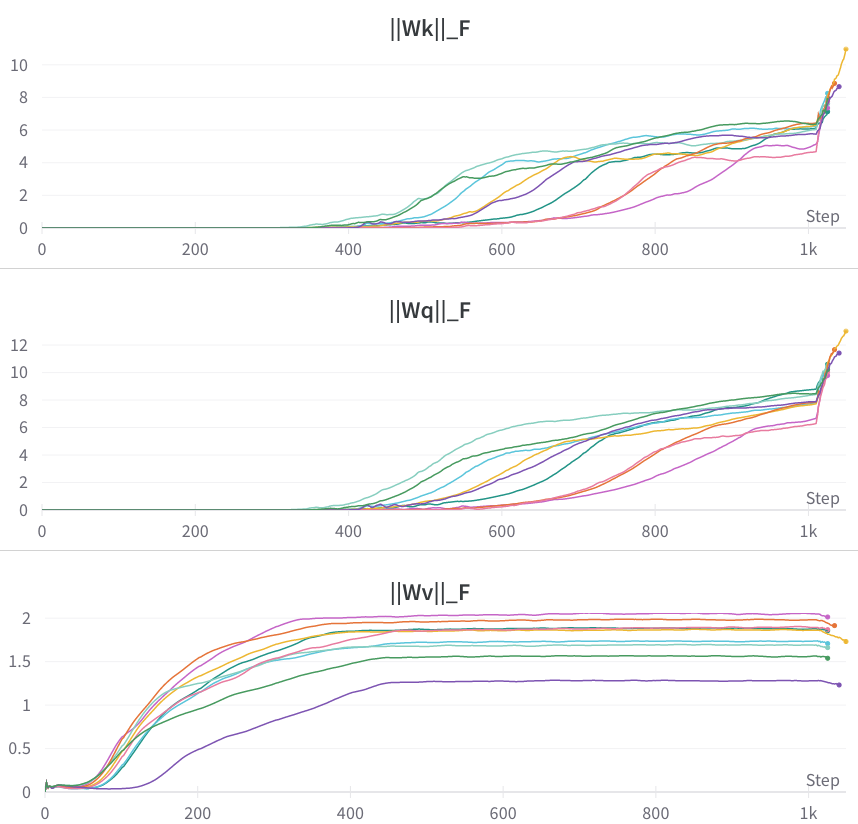}  %
  \end{minipage}
  \caption{
    Two-stage learning dynamics of a single-layer transformer trained on LDA data distribution. 
    All weight matrices are initialized to random matrices near zero, and \emph{simultaneously trained}.
    The learning dynamics naturally exhibits a \emph{two-stage} phenomenon:
    in \textbf{Stage 1} (steps 0-400), the norms of the key matrix ($W^K$, top) and the query matrix ($W^Q$, middle) stay close to 0,
    while the norm of the value matrix ($W^V$, bottom) increases significantly.
    In \textbf{Stage 2} (steps 400-1000), the norms of $W^K$ and $W^Q$ start increasing significantly,
    while the norm of $W^V$ stays relatively flat.
    Different curves in the figure correspond to different settings of the hyperparameters as well as different runs in each setting.
    (See Section~\ref{sec:discussions} for more details.)
    }
  \label{fig:no_layernorm_trained_zero_init_emb_dec}
\end{figure}

\section{PROBLEM SETUP}
\label{sec:setup}

\subsection{Topic models} \label{sec:setup:topic_modeling}

For our theoretical analysis, in order to have a well-defined notion of a ``ground truth'', we will consider data distribution generated by a topic model consisting of $T$ topics $\{1, \cdots, T\}$ and $T v$ words $\{1, \cdots, T v\}$. We will in fact, consider a special case of an LDA (Latent Dirichlet Allocation) model \citep{blei2003latent}. Precisely, each document $\vw$ is a sequence of words $w_1, \cdots, w_N$, and is generated by:
\footnote{Our theoretical results crucially depend on all topics being disjoint, i.e. they do not share common words. 
It is not crucial that the words in the same topic all have the same probabilities. 
Allowing these probabilities to be different would lead to results of similar flavor, but complicates the notation.
}
\begin{enumerate}
    \item \label{setup:topic_distribution} Randomly choose $\tau$ distinct topics $t_1, \cdots, t_\tau$ from $[T]$.
    \item For $n \in [N]$: 
    \begin{enumerate}[leftmargin=0mm]
        \item Randomly choose a topic $t$ from $\{t_1, \cdots, t_\tau\}$.
        \item Randomly choose $w_n$ from $\{{(t-1)v+1}, \cdots, t v\}$.
    \end{enumerate}
\end{enumerate}

Note, under this data distribution, each word belongs to exactly one topic, and different topics do not share common words. 
\begin{definition}[Topic-word indicator]
    \label{def:topic_word_belonging}
    A word $i$ belongs to topic $t$ (denoted as $i \in t$) if
    $i \in \{{(t-1)v+1}, \cdots, t v\}$.
    Correspondingly, $\texttt{topic}(i) \coloneqq \ceil{\frac{i}{v}}$ 
\end{definition}

Let $\mathcal{D}_\vw$ denote the distribution of documents following the above generative process.
Furthermore, for each document $\vw$, let $\mX \in \{0, 1\}^{(T v + 1) \times N}$ denote its \emph{one-hot} encoding, in which
$X_{ij} = 1$ if $w_j = i$, and 0 otherwise.
Analogous to $\mathcal{D}_\vw$, let $\mathcal{D}_\mX$ denote the distribution of document one-hot encodings. 

To simplify our theoretical analysis, we consider the \emph{infinitely-long-document} setting, such that within each document, the empirical token distribution is equal to the groundtruth token distribution:
\begin{assumption}[Infinitely-long documents]
    \label{assumption:infinitely_long_document}
    Each document $\vw$ consists of exactly $\tau$ topics $\{t_1, \cdots, t_\tau\}$. 
    Moreover, for each word $i \in \{1, \cdots, T v\}$ in the vocabulary, its empirical probability in the document 
    \[ \vp_\vw(i) = \frac{\sum_{n=1}^{N} \mathbbm{1}_{w_n = i}}{N} = \begin{cases}
        \frac{1}{\tau v}, \quad &\text{if } i \in \cup_{j=1}^\tau t_j \\ 
        0, \quad &\text{otherwise}
    \end{cases} \]
\end{assumption}

In our synthetic data experiments, we use a finite $N$ and generate data using an LDA model \citep{blei2003latent} which allows for slightly more variability---and demonstrates that our results are robust to changes in the setting. 
Detailed experimental setup is described in Section~\ref{sec:experiments}.

\subsection{Training objective} \label{sec:setup:training}

Given data following the distribution defined in Section~\ref{sec:setup:topic_modeling}, we train a transformer network using the masked language modeling objective \citep{devlin2019bert}.
We first define the token $\texttt{[MASK]} = 0$ in addition to the words $\{1, \cdots, T v\}$ of the topic model.
Three constant probabilities $p_m, p_c, p_r \in (0, 1)$ specify the masking scheme:
\begin{enumerate}
    \item \label{step:choose_masked_positions} For the original document $\vw = w_1 \cdots w_N$, first randomly choose a set of masked indices $M(\vw) \subset [N]$ such that $\forall i \in [N]$, with probability $p_m$, $i \in M(\vw)$. 
    \item \label{step:generate_masked_doc} Define the masked document $\tilde{\vw} = \tilde{w}_1 \cdots \tilde{w}_N$ such that for each $i \in [N]$,
    \begin{enumerate}
        \item \label{step:generate_masked_doc:not_masked} If $i \notin M(\vw)$, then $\tilde{w}_i = w_i$. 
        \item \label{step:generate_masked_doc:masked} If $i \in M(\vw)$, then $\tilde{w}_{i} = 
        \begin{cases}
            w_i, \text{with probability } p_c \\ 
            \text{random word in } [T v], \text{with probability } p_r \\ 
            \texttt{[MASK] = 0}, \text{with probability } 1 - p_c - p_r
        \end{cases}
        $
    \end{enumerate}
\end{enumerate}

Given a document $\vw$ and its masked version $\tilde{\vw}$, the model $f_\theta$ (parameterized by $\theta$) observes $\tilde{\vw}$ and is trained to predict the original words at the masked positions $M$.
More formally, given the one-hot encoding of the masked document $\tilde{\mX}$,
and the model prediction $\hat{\mX} = f_\theta(\tilde{\mX}) \in \R^{(T v + 1) \times N}$,
letting $\mX_{:j}$ denote the $j$-th column of matrix $\mX$,
for some loss function $l(\cdot, \cdot) \to \R$,
the training objective is $\min_\theta L(\theta)$ for
\begin{equation}
    \label{eq:objective}
    L(\theta) = \E_{\mX \sim \mathcal{D}_\mX} \E_{M} \frac{1}{|M|} \sum_{j \in M} l(f_\theta(\tilde{\mX})_{:j}, \mX_{:j} )
\end{equation}

Motivated by the empirical success of applying weight decay to training transformers, 
we also consider a regularized version of the above masked language modeling objective.
For $L_2$-regularization
\footnote{When $\theta$ is a vector, $L_2$-regularization penalizes $\| \theta \|_2$. When $\theta$ is a matrix, the correct norm to regularize is $\| \theta \|_F$.}
with parameter $\lambda > 0$:

\begin{equation}
    \label{eq:objective_l2reg}
    L_{\text{l2reg}}(\theta) = L(\theta) + \lambda \| \theta \|_2^2
\end{equation}

Our theoretical analysis uses the squared loss:
given a prediction vector $\vx \in \R^d$ and an one-hot label vector $\vy \in \{0, 1\}^d$ in which $y_i = 1$ and $\forall j \ne i, y_j = 0$ 
\begin{equation}
    \label{eq:squared_loss}
    l(\vx, \vy) \coloneqq l_{\text{sq}}(\vx, \vy) = \| \vx - \vy \|_2^2
\end{equation}

Our experiments additionally study the cross entropy loss:
\begin{equation}
    \label{eq:cross_entropy_loss}
    l(\vx, \vy) \coloneqq l_{\text{ce}}(\vx, \vy) = -\log \frac{\exp{(\vx_i)}}{\sum_{j=1}^d \exp{(\vx_j)}}
\end{equation}

\begin{remark} \label{rem:difficulty_cross_entropy}
    We give results for both types of loss functions because the cross-entropy loss, albeit practically more commonly used, is theoretically less convenient. Concretely, it involves the softmax operation which is invariant under addition by the same constant in each dimension 
    (implying that the optimal logits are not necessarily unique); 
    moreover, the optimal logits are often at infinity. By contrast, with squared loss, the set of optima is more easily characterized using some finite-valued closed form expressions. 
    
    Empirically, we will show (in Section~\ref{sec:experiments}) that the conclusions in our theoretical analyses hold for both the cross-entropy loss and the squared loss, as well as with variants of the training algorithm like SGD and Adam.
\end{remark}

\subsection{Transformer network architecture}  \label{sec:setup:transformer}

To theoretically reason about the role played by the embedding layer and the self-attention layer, we consider a one-layer transformer model \citep{vaswani2017attention} with the simplification that the residual connection and normalization layers are removed. Precisely:
\[ f(\mZ) = \mW^{\text{pred}} (\mW^V \mZ) \sigma(\frac{(\mW^K \mZ)^\top (\mW^Q \mZ)}{\sqrt{d_a}}) + \vb^{\text{pred}} \]
 
$\mZ \in \R^{d \times N}$ is the input representation. $d$ is the embedding dimension.
$\mW^{\text{pred}} \in \R^{V \times d}$ and $\vb^{\text{pred}} \in \R^V$ are the prediction head weights and biases. 
$V$ is the vocabulary size. 
In our masked language modeling setting (Section~\ref{sec:setup:training}), $V = T v + 1$.
$\mW^V \in \R^{d \times d}$ is the value matrix weight.
$\sigma: \R^{N \times N} \mapsto (0, 1)^{N \times N}$ is the column-wise softmax operation, such that 
$\sigma(A)_{ij} = \frac{\exp{(A_{ij})}}{\sum_{l=1}^N \exp{(A_{lj})}}$.
$d_a$ is the attention head size.
$\mW^K \in \R^{d_a \times d}$ is the key matrix. 
$\mW^Q \in \R^{d_a \times d}$ is the query matrix.
Let $A(\mZ)$ denote the attention weights:
\begin{equation}
    \label{eq:attention_weights}
    A(\mZ) \coloneqq \sigma\left(\frac{(\mW^K \mZ)^\top (\mW^Q \mZ)}{\sqrt{d_a}}\right) \in (0, 1)^{N \times N}
\end{equation}

Appendix~\ref{sec:appendix:setup} includes additional remarks on the architecture.

In our setting, the input $\mZ$ is the embedding of the masked document, 
i.e. $\mZ = \mW^E \tilde{\mX}$ for some embedding weights $\mW^E \in \R^{d \times (T v + 1)}$.
Moreover, following empirical best practice \citep{press2017using} and standard implementation in \citep{wolf2020transformers}, 
we weight-tie the prediction head weight $\mW^{\text{pred}}$ and the embedding weight $\mW^E$:
\begin{equation}
    \label{eq:simplified_transformer_with_emb}
    f(\tilde{\mX}) = {\mW^E}^\top \mW^V \mW^E \tilde{\mX} A(\mW^E \tilde{\mX}) + \vb^{\text{pred}}
\end{equation}

In part of our theoretical analysis (in Section~\ref{sec:attention}) and experiments (in Section~\ref{sec:experiments}), 
we freeze \emph{one-hot} word embeddings, to study the mechanism that self-attention represents the topic structures without the aid of trained token embeddings. That is, set $d = T v + 1$ and $\mW^E = I$:
\begin{equation}
    \label{eq:simplified_transformer_no_emb}
    f(\tilde{\mX}) = \mW^V \tilde{\mX} A(\tilde{\mX}) + \vb^{\text{pred}}
\end{equation}

\vspace{-3mm} %
\section{TOPIC STRUCTURE CAN BE ENCODED IN TOKEN EMBEDDINGS}  \label{sec:embedding}

The first result shows that, under the topic model data distribution, even if we freeze the self-attention to be uniform, the embedding layer can encode the topic structure. Precisely:

\begin{theorem}[Optimal token embedding] 
\label{thm:optimal_embedding}
Suppose the data distribution follows the topic modeling assumption in Section~\ref{sec:setup:topic_modeling} and Assumption~\ref{assumption:infinitely_long_document}. 
Suppose we train a single layer transformer given by \eqref{eq:simplified_transformer_with_emb} with $\mW^K = 0, \mW^Q = 0, \mW^V = I$ and $\forall i, \vb^{\text{pred}}_i = -\frac{p_m p_r}{\left( 1-(1-p_c)p_m \right) T v}$,
under the masked language modeling objective (\eqref{eq:objective}) with the squared loss (\eqref{eq:squared_loss}). 
Then, there exist constants $u_0, \cdots, u_{Tv} \in \R$ such that
the optimal word embedding weight $\mW^E$ and $\mE \coloneqq {\mW^E}^\top \mW^E$ satisfy:
\begin{enumerate}[leftmargin=*]
    \item The 0-th row of $\mE$ satisfies:
    \begin{enumerate}[leftmargin=*]
        \item $\mE_{00} = -\left(\frac{1}{p_m (1-p_c-p_r)} - 1\right) \cdot u_0$
        \item $\forall t \in [T], \sum_{l \in t} \mE_{0l} = u_0 v$
    \end{enumerate}
    \item The 0-th column of $\mE$ satisfies $\forall i \in \{1, \cdots, T v\}$:
    \begin{enumerate}[leftmargin=*]
        \item $\mE_{i0} = - \left( \frac{1}{(1-p_c-p_r) p_m} - 1 \right) u_i $
    \end{enumerate}
    \item $\mE_{ij}$ ($\forall i, j \in \{1, \cdots, T v\}$) satisfy:
    \begin{enumerate}[leftmargin=*]
        \item $\sum_{l \in \texttt{topic}(i)} \mE_{il} = u_i v + \frac{1}{1-(1-p_c)p_m}$
        \item $\forall t \in [T]$ such that $\texttt{topic}(i) \ne t$, $\sum_{l \in t} \mE_{il} = u_i v $
    \end{enumerate}
\end{enumerate}
\end{theorem}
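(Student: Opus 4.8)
The plan is to collapse the transformer to a single linear map and then solve a coordinatewise least-squares problem. \textbf{Step 1 (reduction to regression).} Since $\mW^K=\mW^Q=0$, the attention weights $A(\mZ)$ in \eqref{eq:attention_weights} equal the uniform matrix $\tfrac1N\vone\vone^\top$, so for every masked document the network output \eqref{eq:simplified_transformer_with_emb} is the \emph{same} vector in all $N$ columns, namely $\mE\,\vp_{\tilde\vw}+\vb^{\text{pred}}$ with $\mE={\mW^E}^\top\mW^V\mW^E={\mW^E}^\top\mW^E$ (using $\mW^V=I$) and $\vp_{\tilde\vw}$ the empirical token-frequency vector of the \emph{masked} document. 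By Assumption~\ref{assumption:infinitely_long_document}, $\vp_{\tilde\vw}$ and the averaged one-hot label over the masked positions equal their population counterparts, which are deterministic functions of the set $S$ of the document's $\tau$ topics; unrolling the masking scheme of Section~\ref{sec:setup:training} gives $(\vp_S)_0=p_m(1-p_c-p_r)$ and $(\vp_S)_i=\tfrac{1-(1-p_c)p_m}{\tau v}\mathbbm{1}[\texttt{topic}(i)\in S]+\tfrac{p_mp_r}{Tv}$ for $i\ge1$, while the label average is $\vq_S$ with $(\vq_S)_0=0$ and $(\vq_S)_i=\tfrac1{\tau v}\mathbbm{1}[\texttt{topic}(i)\in S]$. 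Expanding \eqref{eq:squared_loss} and using that $\|\vq_S\|_2^2=\tfrac1{\tau v}$ is constant, the objective \eqref{eq:objective} equals $\E_S\|\mE\vp_S+\vb^{\text{pred}}-\vq_S\|_2^2$ up to an additive constant.

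\textbf{Step 2 (solving the least squares).} Both $\vp_S$ and $\vq_S$ are affine in the indicators $Y_t:=\mathbbm{1}[t\in S]$, and, since $1\le\tau\le T-1$, the only affine relation among $\{1\}\cup\{Y_t\}_{t\in[T]}$ is $\sum_tY_t=\tau$. Hence the minimum of the objective is $0$ (one exhibits an $\mE$ with $\mE\vp_S+\vb^{\text{pred}}\equiv\vq_S$, and $0$ is a trivial lower bound), and $\mE$ is optimal iff $\mE\vp_S+\vb^{\text{pred}}-\vq_S$ vanishes on the affine span of $\{\vp_S\}$. By the single relation $\sum_tY_t=\tau$ this condition splits in two: matching the part linear in $(Y_t)$ forces $\mE\vone_t=\tfrac1{1-(1-p_c)p_m}\vone_t+\vh$ for a $t$-independent vector $\vh$ (with $\vone_t$ the indicator vector of the $t$-th word block), whose coordinatewise reading, on setting $u_i:=h_i/v$, is exactly items (1b), (3a), (3b); and matching the remaining ($Y$-constant) part determines the $0$-th column $\mE\ve_0$ as an affine function of $\vh$ and $\vb^{\text{pred}}$, which after substituting the prescribed $\vb^{\text{pred}}$---chosen precisely so that the $\tfrac{p_mp_r}{Tv}$ ``background'' contribution of $\vp_S$ cancels---gives items (1a) and (2a). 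So the optimal set over all symmetric $\mE$ is exactly the zero-residual affine family described by items (1)--(3), and the informal ``larger inner product within a topic'' claim follows by averaging (3a) against (3b) over a block.

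\textbf{Step 3 (feasibility on the PSD / low-rank cone).} Since $\mE$ ranges over $\{{\mW^E}^\top\mW^E:\mW^E\in\R^{d\times(Tv+1)}\}$, i.e.\ PSD matrices of rank $\le d$, it remains to check that the zero-residual family of Step 2 intersects this cone, so that the constrained optimum is still zero-residual and the characterization is unchanged. The slack left by items (1)--(3)---which fix only the $0$-th row/column, the diagonal, and the block sums, leaving the remaining $[Tv]\times[Tv]$ entries free---makes this routine: taking $u_i\equiv u$ for a small $u>0$, one can take the $[Tv]\times[Tv]$ part of $\mE$ to be $u\,\vone\vone^\top$ plus a block-diagonal PSD correction realizing the prescribed block sums, and fill the $0$-th row/column with the values forced by (1a)/(2a); a Schur-complement computation then shows the full $\mE$ is PSD of rank $O(T)$ once $u$ is small. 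Thus, provided $d=\Omega(T)$, every optimal $\mW^E$ yields an $\mE$ satisfying items (1)--(3) and, conversely, every feasible $\mE$ satisfying them is optimal.

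\textbf{Main obstacle.} Step 1 and the linear algebra of Step 2 are essentially bookkeeping; the delicate points are (i) verifying that the prescribed $\vb^{\text{pred}}$ is exactly what makes the $0$-th-column formulas close up with the \emph{same} constants $u_i$ that appear in the block-sum conditions---i.e.\ that (1a)/(2a) are compatible with (1b)/(3a)/(3b)---and (ii) confirming that positive semidefiniteness and the rank bound are not binding, so the clean block structure is not distorted by a projection onto the PSD cone. One must also keep the hypothesis $1\le\tau\le T-1$ in view: it is precisely what guarantees $\sum_tY_t=\tau$ is the \emph{only} relation among the $Y_t$, hence that (1b)/(3a)/(3b) hold in the stated form rather than merely in an averaged one.
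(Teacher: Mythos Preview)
Your reduction and Step~2 follow essentially the paper's proof: collapse the network to $\mE\,\vp_S+\vb^{\text{pred}}$, characterize the zero-residual set, and then check PSD feasibility. The paper does the middle step by a change of variables $\mE\mapsto\mE'$ that absorbs $\vb^{\text{pred}}$ into the $0$-th column and then invokes its Lemma~1, whose Claim~1 (equality of off-topic block sums) is proved by swapping one topic in a document; your phrasing via the single affine relation $\sum_tY_t=\tau$ among the indicators is a clean equivalent of that claim, and your observation that the hypothesis $1\le\tau\le T-1$ is exactly what makes this the \emph{only} relation is a point the paper leaves implicit.

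Step~3, however, contains a concrete error. With $u_i\equiv u>0$ (in particular $u_0=u>0$), item (1a) forces
\[
\mE_{00}=-\Bigl(\tfrac{1}{p_m(1-p_c-p_r)}-1\Bigr)u<0,
\]
so $\mE$ cannot be PSD, and no Schur-complement argument can rescue a negative diagonal entry. Independently, writing the $[Tv]\times[Tv]$ block as $u\,\vone\vone^\top$ plus a block-diagonal correction makes every off-diagonal $v\times v$ block sum equal to $uv^2$, whereas (3b) requires it to be $u_iv=uv$; these match only when $u=0$. The easy fix is to take all $u_i=0$: then the $0$-th row and column of $\mE$ vanish, and the $[Tv]\times[Tv]$ part can be taken block-diagonal with each $v\times v$ block equal to $\tfrac{1}{v(1-(1-p_c)p_m)}$ times the all-ones matrix, which is PSD of rank exactly $T$. (The paper's own realizability witness $\mE^\star=\tfrac{1}{1-(1-p_c)p_m}I$ with $u_i\equiv0$ has a similar slip at the $(0,0)$ entry, so you are in good company---but since you present Step~3 as an argument rather than an example, you should repair it.)
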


\begin{remark}
    Point 3 is the important one among the list of conclusions.
    The way to read the theorem is that, among the entries of an optimal $\mE$:
    for $i$ and $j$ corresponding to the indices of tokens of the \textbf{same topic}, $\mE_{ij}$ is (on average) larger, meaning that the embeddings of same-topic tokens are more similar;
    for $i$ and $j$ corresponding to \textbf{different topics}, $\mE_{ij}$ is (on average) smaller, meaning that the embeddings of different-topic tokens are less similar.
    In particular, when the constants $u_0, \cdots, u_{Tv}$ are all zero, then the above larger-vs-smaller difference becomes a positive-vs-zero difference,
    which we roughly observe in practice.
\end{remark}

\begin{remark}
    Intuitively, the setting of the bias $\vb^{\text{pred}}$ is used to ``denoise" the masked sequence, 
    i.e. to subtract the probability caused by filling in random words in the masking process (described in Section~\ref{sec:setup:training}). 
\end{remark}

The proof of this theorem is deferred to Appendix~\ref{sec:appendix:embedding}.

Proving comparable results under cross-entropy loss (\eqref{eq:cross_entropy_loss}) is more challenging considering Remark~\ref{rem:difficulty_cross_entropy}. 
However, we empirically show that, such blockwise pattern in $\mE \coloneqq {\mW^E}^\top \mW^E$ tends to exist in a trained model under both the squared loss and the cross-entropy loss,
and regardless of whether we (i) train all layers or 
(ii) only train the embedding layer while freezing all other layers.
Moreover, the loss achieved in case (ii) is only slightly worse than in case (i). 
Finally, we also show (Figure~\ref{fig:wiki_emb_dot_eg}) that on real data, words that are unambiguous (e.g. ``calculus'', ``Mozart'') exhibit a similar pattern as Theorem~\ref{thm:optimal_embedding} states: 
same-topic words have more similar embeddings, and therefore larger embedding dot products, than different-topic words.
Quantitatively, if we only restrict ourselves to words that are unambigious (i.e. likely to be emitted only under few topics), a similar phenomenon can be observed (see Table~\ref{tab:wiki_emb_attn}).

\section{TOPIC STRUCTURE CAN BE ENCODED IN SELF-ATTENTION}
\label{sec:attention}
\vspace{-0.2cm} 

Whereas the previous section showed that the token embedding layer can in principle perform the heavy-lifting in learning the topic-modeling distribution, 
we further show that self-attention \emph{also} can encode the topic structures, when we disallow training the embedding layer.
That is, we freeze the token embeddings to be one-hot. 

\subsection{The two-stage optimization process of self-attention}
\label{sec:attention:two_stage}

While inspecting the training dynamics of this one-layer transformer on the topic modeling data distribution,
we observed a roughly \emph{two-stage} process (illustrated by Figure~\ref{fig:no_layernorm_trained_zero_init_emb_dec}):
with certain initialization and learning rate settings, 
in \textbf{Stage 1}, the key matrix ($W^K$) and the query matrix ($W^Q$) stay close to 0,
i.e. each position pays a near-uniform attention to all positions in the document,
while the norm of the value matrix ($W^V$) increases significantly.
In \textbf{Stage 2}, the norm of the the value matrix ($W^V$) already plateaus, and only after that, do the key and query matrices ($W^K$ and $W^Q$) start to move.

Thus, while reasoning about the training process of transformers in our data distribution, 
we take motivation from the above empirical observation of such two-stage process,
and consider a corresponding simplification: 
in Stage 1, the attention is frozen to be uniform, and only $\mW^V$ is trained; 
in Stage 2, $\mW^V$ is frozen, while $\mW^K$ and $\mW^Q$ are trained.
This simplification is a reasonable proxy for standard training, 
and we furthermore validate that our theoretical characterizations are robust to standard training, both using SGD and Adam. We provide more discussion on the two-stage optimization process in 
Section~\ref{sec:discussions}.

\subsection{Optimal \texorpdfstring{$W^V$}{Wv} given uniform attention}
\label{sec:attention:value}

The Stage 1 of optimization process is convex (but not strongly convex) in $\mW^V$, 
and we show that the set of minima consist of exactly the set of $\mW^V$ that exhibits a \emph{block-wise} pattern:

\begin{theorem}[Optimal $\mW^V$ with mild $L_2$-regularization when freezing uniform attention] 
    \label{thm:optimal_Wv_given_uniform_attention_l2reg}
    Suppose the data distribution follows the topic modeling assumption in Section~\ref{sec:setup:topic_modeling} and Assumption~\ref{assumption:infinitely_long_document}.
    Suppose we train a single layer transformer given by \eqref{eq:simplified_transformer_no_emb}
    with $\mW^K = 0, \mW^Q = 0, \vb^{\text{pred}} = 0$,
    under the $L_2$-regularized masked language modeling objective (\eqref{eq:objective_l2reg}) with the squared loss (\eqref{eq:squared_loss}).
    Then,
    $\lim_{\lambda \to 0} \argmin L_{\text{l2reg}}(\mW^V) = \{ \mW^{V*} \}$ in which $\mW^{V*} \in \R^{(T v + 1) \times (T v + 1)}$ satisfies:
    \begin{enumerate}[leftmargin=*]
        \item The 0-th row of $\mW^{V*}$:
        \begin{enumerate}[leftmargin=*]
            \item $\forall j \in \{0, \cdots, T v\}, \mW^{V*}_{0j} = 0$
        \end{enumerate}
        \item The 0-th column of $\mW^{V*}$:
        \begin{enumerate}[leftmargin=*]
            \item $\forall i \in \{1, \cdots, T v\}, \mW^{V*}_{i0} = \frac{c_2 c_3 - c_1 T v}{c_2^2 + T v}$
        \end{enumerate}
        \item $\mW^{V*}_{ij}$ ($\forall i, j \in \{1, \cdots, T v\}$):
        \begin{enumerate}[leftmargin=-0mm]
            \item $\forall l \notin \texttt{topic}(i), \; \mW^{V*}_{il} = \mW^{V*}_{\text{diff-topic}} \coloneqq -\frac{c_1 c_2 + c_3}{c_2^2 + T v} $
            \item $\forall l \in \texttt{topic}(i), \mW^{V*}_{il} = \mW^{V*}_{\text{same-topic}} \coloneqq \mW^{V*}_{\text{diff-topic}} + \frac{c_3}{v}$
        \end{enumerate}
    \end{enumerate}
    in which the constants are:
    \begin{itemize}
        \item $c_1 = \frac{p_r}{(1-p_c-p_r) \left(1-(1-p_c)p_m \right) T v} \in (0, 1)$
        \item $c_2 = \frac{1}{(1-p_c-p_r) p_m} - 1 \in (0, +\infty)$
        \item $c_3 = \frac{1}{1-(1-p_c)p_m} \in (1, +\infty)$
    \end{itemize}
\end{theorem}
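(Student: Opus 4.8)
\textbf{Proof strategy for Theorem~\ref{thm:optimal_Wv_given_uniform_attention_l2reg}.}
The plan is to exploit that with frozen uniform attention, the architecture \eqref{eq:simplified_transformer_no_emb} collapses to a linear-in-$\mW^V$ map. First I would compute the forward pass on a masked one-hot document $\tilde{\mX}$: since $A(\tilde{\mX})$ is uniform (every entry $1/N$), the product $\tilde{\mX}A(\tilde{\mX})$ has every column equal to the empirical token-frequency vector $\vp_{\tilde{\vw}}$ of the masked document, so $f(\tilde{\mX})_{:j} = \mW^V \vp_{\tilde{\vw}}$ is the same for all columns $j$. Using Assumption~\ref{assumption:infinitely_long_document} and the masking scheme of Section~\ref{sec:setup:training}, I would then compute, in closed form, the distribution of $\vp_{\tilde{\vw}}$ as a function of the document's $\tau$ topics: the $\texttt{[MASK]}$ coordinate gets mass $p_m(1-p_c-p_r)$, each in-document word gets a "kept" contribution plus a "random-replacement" contribution $\propto p_m p_r/(Tv)$, and out-of-document words get only the random-replacement contribution. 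This turns the population objective into $\E_{\vw}\E_{M}\,\E_{j\in M}\,\|\mW^V \vp_{\tilde{\vw}} - \vX_{:j}\|_2^2 + \lambda\|\mW^V\|_F^2$, where the label $\vX_{:j}$ is the one-hot of the true word $w_j$.

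Next I would diagonalize the problem row-by-row: the objective decomposes over the rows of $\mW^V$, and for row $i$ it is a quadratic $\vm_i^\top \mSigma \vm_i - 2\vb_i^\top \vm_i + \text{const} + \lambda\|\vm_i\|_2^2$, where $\mSigma = \E[\vp_{\tilde{\vw}}\vp_{\tilde{\vw}}^\top]$ is shared across rows and $\vb_i = \E[\mathbbm{1}_{w_j=i}\,\vp_{\tilde{\vw}}]$ depends on $i$ only through its topic. The minimizer is $\vm_i^* = (\mSigma + \lambda I)^{-1}\vb_i$, so I must (i) evaluate $\mSigma$ and $\vb_i$ explicitly using the topic-model symmetries, and (ii) take $\lambda \to 0$. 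The key structural observation is that $\mSigma$ is highly symmetric: it is invariant under permuting words within a topic and under permuting whole topics, so it lies in a low-dimensional algebra and its (pseudo)inverse inherits the same block structure. Concretely $\mSigma$ restricted to the word-block is a combination of $I$, the within-topic all-ones blocks, and the global all-ones matrix; inverting such a matrix (via Sherman–Morrison / eigen-decomposition on the $\{$constant-on-topic, constant-within-topic-mean-zero, global-mean-zero$\}$ eigenspaces) yields the three distinct values claimed — the $0$-th row/column entries and the same-topic vs.\ different-topic entries — and the constants $c_1, c_2, c_3$ will drop out of the masking probabilities. Because $\mSigma$ is only positive semidefinite (the all-ones direction corresponding to "total probability $=1$" combined with the degenerate $\texttt{[MASK]}$ structure can produce a kernel), the unregularized problem has a flat direction; the role of the $\lambda\to 0$ limit is exactly to select the minimum-Frobenius-norm solution, which is the unique one orthogonal to $\ker\mSigma$, giving the singleton $\{\mW^{V*}\}$.

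Finally I would verify the sign/magnitude claim that gives the theorem its meaning: $\mW^{V*}_{\text{same-topic}} - \mW^{V*}_{\text{diff-topic}} = c_3/v > 0$, which is immediate once $c_3 = 1/(1-(1-p_c)p_m) > 1$ is established from $p_c, p_m \in (0,1)$, and check that $c_1 \in (0,1)$, $c_2 \in (0,\infty)$ as asserted (pure algebra in the masking constants). I expect the main obstacle to be step (i)–(ii): carefully bookkeeping the masking process to get $\mSigma$ and $\vb_i$ right — in particular correctly handling the three-way split (mask / copy / random replacement), the coupling between "which positions are masked" and "$j \in M$", and the contribution of the $\texttt{[MASK]}$ coordinate — and then performing the pseudoinverse of the structured $\mSigma$ and proving the limit $\lim_{\lambda\to 0}\argmin L_{\text{l2reg}}$ is the single minimum-norm point rather than just arguing it converges. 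The diagonalization over rows and the representation-theoretic block structure of $\mSigma$ are the conceptual shortcuts that keep this tractable; everything else is routine quadratic optimization.
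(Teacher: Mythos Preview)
Your approach is correct and takes a genuinely different route from the paper's. The paper first proves a separate lemma (Lemma~\ref{lemma:optimal_linear_transform_given_uniform_attention}) characterizing the \emph{entire affine subspace} $S$ of unregularized optima: it argues that under Assumption~\ref{assumption:infinitely_long_document} the per-document loss is minimized iff the (column-constant) prediction equals the empirical word distribution $P_\vw$ of the \emph{unmasked} document, and then extracts linear constraints on the rows of $\mW^V$ by varying the document's topic set and matching coefficients of $P_\vw(i)$. This yields a family of optima indexed by free constants $u_0,\dots,u_{Tv}$. The proof of Theorem~\ref{thm:optimal_Wv_given_uniform_attention_l2reg} is then a two-liner: any $\lim_{\lambda\to 0}$ minimizer must lie in $S$, and among $S$ one selects the minimum-$\|\cdot\|_F$ element by optimizing over the $u_i$'s.

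Your plan instead treats each row as a ridge regression, $\vm_i^*(\lambda)=(\mSigma+\lambda I)^{-1}\vb_i$, and passes to $\mSigma^+\vb_i$ as $\lambda\to 0$, computing the pseudoinverse via the symmetry group of $\mSigma$ (constant / constant-per-topic / within-topic-mean-zero eigenspaces). This is a clean, self-contained argument and has the advantage that uniqueness of the limit follows immediately from standard Tikhonov theory; you do not need a separate realizability/coefficient-matching step. Conversely, the paper's route produces, as a byproduct, the explicit parameterization of \emph{all} unregularized optima, which it reuses elsewhere (Theorem~\ref{thm:optimal_embedding} and the diagonal-$\mW^V$ analysis in Appendix~\ref{sec:appendix:attention:weights:diagonal}); your approach would not directly yield that family. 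One minor point to make explicit in your write-up: the consistency $\vb_i\in\mathrm{range}(\mSigma)$ (equivalently, that the pseudoinverse solution actually attains the unregularized minimum) follows from the realizability you would otherwise be bypassing, so you should either verify it directly from your computation of $\mSigma,\vb_i$ or note that it is implied by exhibiting any single optimum.
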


Empirically, the loss achieved by freezing $\mW^K = \mW^Q = 0$ and only training $\mW^V$ is only slightly greater than the loss achieved by training all of them jointly, see Appendix~\ref{sec:appendix:experiments}.

Intuitively, this block-wise $\mW^V$ shows that, while inferring about the words at the masked positions:
the model looks at unmasked positions in the document,
each unmasked word only contributes to predicting words of the \emph{same topic},
each unmasked word does not contribute to predicting words of \emph{different topics},
and the model implicitly aggregates the topic distribution among the unmasked words, to infer the token distribution in the original document prior to masking.

The proof of this Theorem~\ref{thm:optimal_Wv_given_uniform_attention_l2reg} is deferred to Appendix~\ref{sec:appendix:Wv}. 
Proving a comparable result under the cross-entropy loss \eqref{eq:cross_entropy_loss} is more challenging due to the same reasons outlined in Remark~\ref{rem:difficulty_cross_entropy}.
However, empirically such block-wise $\mW^V$ shows up for both the cross-entropy loss and the squared loss, as we show in Section~\ref{sec:experiments}.

\subsection{Optimal attention weights} 
\label{sec:attention:weights}

In our analysis on the stage 2 optimization process, we freeze the $\mW^V$ to be some representative optima from stage~1 (Theorem~\ref{thm:optimal_Wv_given_uniform_attention_l2reg}), and characterize the optimal attention weights by comparing the following three types of attention weights:
among the \emph{same words} at different positions,
among different words of the \emph{same topic}, and
among words of \emph{different topics}.

We mainly consider the type of optimal $\mW^V$ characterized in Theorem~\ref{thm:optimal_Wv_given_uniform_attention_l2reg}:
$\mW^V$ with uniform blocks
(see Figure~\ref{fig:Wv_one_hot_freeze_uniform_attention_l2reg}).
Empirically, the model often approximately converges to these type of pattern (Section~\ref{sec:experiments}).

To formally reason about the behavior of average attention weights, we consider a simplified setting:

\begin{assumption}[Attention pattern]
    \label{assumption:alpha_beta_attention}
    Following the notation in \eqref{eq:attention_weights},
    assume that for any masked document $\tilde{\vw}$ with embedding $\tilde{\mX}$,
    \[ A(\tilde{\mX})_{ij} = \begin{cases}
        c_1, \text{if } \tilde{w}_i = \tilde{w}_j  \\ 
        c_2, \text{if } \tilde{w}_i \ne \tilde{w}_j \text{ but } \texttt{topic}(\tilde{w}_i) = \texttt{topic}(\tilde{w}_j) \\ 
        c_3, \text{if } \texttt{topic}(\tilde{w}_i) \ne \texttt{topic}(\tilde{w}_j) \\
        \end{cases} \]
    in which $c_2 = \alpha c_3$ and $c_1 = \beta c_3$.
\end{assumption}

We note that this family of attention weights is \emph{realizable}, and
by symmetricity (among different topics and among the words in the same topic) and convexity (in $A(\tilde{\mX})$), it is simple to prove that the attention pattern outlined in Assumption~\ref{assumption:alpha_beta_attention} is among the optimal attention patterns.

We will characterize the setting of $\alpha$ and $\beta$ that minimizes the loss, 
under the following assumptions:

\begin{assumption}
    \label{assumption:asymptotic}
    We consider these asymptotic settings:
    \begin{itemize}[leftmargin=*]
        \item $T \to \infty$, i.e. the total number of topics grows to infinity. 
        \item \textbf{(Sparse documents):} $\tau \to \infty, \tau = o(T)$, 
        i.e. the number of topics in each document also grows to infinity, but much smaller than the total number of topics. 
        (This is a common parameter regime: we typically think of each document as a sparse combination of topics.)
        \item \textbf{(No sparsely supported topics):} $v > (\frac{1}{1-(1-p_c) p_m} + 1)^2 + 1$ ($v$ is the number of tokens in each topic. $v \ge 10$ suffices under Assumption~\ref{assumption:correct_random_balance}. 
        This is also a common regime, where we assume no topic consists only of a small number of words.)
    \end{itemize}
\end{assumption}

\begin{assumption}
    \label{assumption:correct_random_balance}
    In the training objective (Section~\ref{sec:setup:training}),
    we consider the case $p_m < \frac{1}{2}, \; p_c = p_r \in (0, \frac{1}{2})$.
    \footnote{This setting is consistent with the masking scheme proposed in \citet{devlin2019bert}.}
\end{assumption}

\begin{theorem}[Optimal attention weights]
\label{thm:optimal_attention_weights_updated}
    Suppose the data distribution follows the topic modeling assumption in Section~\ref{sec:setup:topic_modeling} and Assumption~\ref{assumption:infinitely_long_document}.
    Suppose we train a single layer transformer given by \eqref{eq:simplified_transformer_no_emb}
    with $\vb^{\text{pred}} = 0$ and $\mW^V$ frozen to the optima in Theorem~\ref{thm:optimal_Wv_given_uniform_attention_l2reg},
    under masked language modeling objective (\eqref{eq:objective}) 
    with the squared loss (\eqref{eq:squared_loss}),
    under Assumption~\ref{assumption:alpha_beta_attention}, Assumption~\ref{assumption:asymptotic}, and Assumption~\ref{assumption:correct_random_balance}.
    Then, the optimal $(\alpha, \beta)$ satisfy
    \[ \frac{v-1}{v} \alpha + \frac{1}{v} \beta \in (\lambda_1 (\tau - 1), \lambda_2 T ) \]
    in which $\lambda_1 \coloneqq \frac{(1-(1-p_c) p_m + p_m p_r) (1+(1-p_c) p_m)}{2 (1-(1-p_c) p_m)}$ and 
    $\lambda_2 \coloneqq 100 (\frac{1-(1-p_c) p_m}{p_m p_r} + 1)$.
\end{theorem}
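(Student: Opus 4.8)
The plan is to reduce Theorem~\ref{thm:optimal_attention_weights_updated} to a one-dimensional optimization over the scalar quantity $s \coloneqq \frac{v-1}{v}\alpha + \frac{1}{v}\beta$, which (as the statement foreshadows) is the only combination of $\alpha$ and $\beta$ that the loss actually depends on in the relevant asymptotic regime. First I would write out the forward pass $f(\tilde\mX) = \mW^V \tilde\mX A(\tilde\mX) + \vb^{\text{pred}}$ explicitly for a fixed masked document $\tilde{\vw}$, plugging in the block-wise $\mW^V$ from Theorem~\ref{thm:optimal_Wv_given_uniform_attention_l2reg} and the three-valued attention pattern from Assumption~\ref{assumption:alpha_beta_attention}. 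Because $\mW^V$ is constant on topic-blocks and the attention weights only distinguish same-word / same-topic / different-topic pairs, the prediction $f(\tilde\mX)_{:j}$ at a masked position $j$ is a function only of the empirical counts, in the masked document, of the token at position $j$, of its topic, and of the other $\tau-1$ topics present. Using Assumption~\ref{assumption:infinitely_long_document} (so the empirical token distribution equals the ground truth) and the masking scheme of Section~\ref{sec:setup:training}, these counts concentrate, and the per-position squared loss $\|f(\tilde\mX)_{:j} - \mX_{:j}\|_2^2$ becomes an explicit quadratic in $\alpha,\beta$ that depends on them only through $s$ (after normalizing the softmax: $c_3 \cdot (\#\text{same-word}\cdot\beta + \#\text{same-topic-diff-word}\cdot\alpha + \#\text{diff-topic})=1$, which is where the convex combination $\frac{v-1}{v}\alpha+\frac1v\beta$ enters, since within a topic there are $v-1$ diff-word slots per same-word slot).

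Next I would take the expectation over $\mX\sim\mathcal D_\mX$ and over the masking $M$ to get $L$ as a scalar function $g(s)$; by the convexity noted just before the theorem (the loss is convex in $A(\tilde\mX)$, hence in the induced linear parametrization), $g$ is a convex quadratic in $s$, so it suffices to locate its minimizer $s^\star$ and then show $s^\star \in (\lambda_1(\tau-1),\, \lambda_2 T)$ — or more precisely to show the optimal $s$ lies in that interval, which for a convex $g$ reduces to checking the two sign conditions $g'(\lambda_1(\tau-1)) < 0$ and $g'(\lambda_2 T) > 0$. Each of these is an inequality among the masking constants $p_m,p_c,p_r$ and the scale parameters $v,\tau,T$; I would bound the relevant terms using Assumption~\ref{assumption:correct_random_balance} ($p_m<\tfrac12$, $p_c=p_r<\tfrac12$, so $c_3 = \frac{1}{1-(1-p_c)p_m} \in (1,2)$) and Assumption~\ref{assumption:asymptotic} (the $v$-lower-bound guarantees the same-topic contribution dominates a certain discriminant, and $\tau = o(T)$ makes the lower endpoint $\lambda_1(\tau-1)$ negligible compared to the upper endpoint $\lambda_2 T$, so the interval is nonempty and the two endpoint evaluations have the claimed signs). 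The constant $100$ in $\lambda_2$ is almost certainly just slack absorbed to avoid tracking lower-order terms, so I would aim for a crude bound rather than a tight one.

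The main obstacle I anticipate is the bookkeeping in Step 1: correctly accounting for the interaction between (i) the three cases of the masking scheme (kept as-is with prob.\ $p_c$, replaced by a uniform random word with prob.\ $p_r$, replaced by \texttt{[MASK]} with prob.\ $1-p_c-p_r$) and (ii) the softmax normalization of the attention weights, whose denominator $\sum_l \exp(\cdot)$ itself depends on how many positions in $\tilde{\vw}$ share the query token, its topic, or neither — and these counts are themselves random through $M$. Getting the expected prediction vector right, in particular the \texttt{[MASK]}-column and the random-word contamination that the bias $\vb^{\text{pred}}$ was designed (in the embedding theorem) to cancel but which here with $\vb^{\text{pred}}=0$ must be carried, is where sign errors would creep in. Once that expected-loss formula is pinned down and verified to depend on $(\alpha,\beta)$ only via $s$, the remaining optimization is a routine convex one-variable argument, and the asymptotic assumptions are exactly strong enough to make the two endpoint inequalities go through with room to spare.
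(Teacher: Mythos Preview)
Your reduction to the single scalar $s=\frac{v-1}{v}\alpha+\frac{1}{v}\beta$ is correct and matches the paper exactly (the paper calls this quantity $\gamma$): once you apply the block-wise $\mW^V$ to $\tilde\mX A(\tilde\mX)$, the prediction at coordinate $i$ depends only on the topic-wise sum $\frac{1}{v}\sum_{l\in\texttt{topic}(i)}[\tilde\mX A(\tilde\mX)]_{lj}$, and the $\beta$-term plus $(v-1)$ copies of the $\alpha$-term collapse to $v$ times $s$.

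The gap is your claim that $g(s)$ is a convex quadratic and that it therefore suffices to check the sign of $g'$ at the two endpoints. The parametrization $(\alpha,\beta)\mapsto A(\tilde\mX)$ is \emph{not} linear: the normalizing constant $c_3$ in Assumption~\ref{assumption:alpha_beta_attention} solves $c_3\cdot(\beta\cdot\#\text{same-word}+\alpha\cdot\#\text{same-topic}+\#\text{diff-topic})=1$, so $c_3$ is the reciprocal of an affine function of $(\alpha,\beta)$. Consequently the entries of $A(\tilde\mX)$, and hence the predictions $f(\tilde\mX)_{:j}$, are \emph{rational} in $s$, not affine. Concretely, after the asymptotics the paper obtains
\[
L(\gamma)=p_c\Bigl[\bigl(1-\tfrac{c_4\gamma}{v\gamma+v(\tau-1)+c_4p_mp_rv\tau}\bigr)^2+(v-1)\bigl(\tfrac{c_4\gamma}{v\gamma+v(\tau-1)+c_4p_mp_rv\tau}\bigr)^2\Bigr]+p_r+p_r\bigl(\tfrac{c_4p_mp_r\gamma}{p_mp_r\gamma v+(\tfrac{1}{c_4}+p_mp_r)vT}\bigr)^2\pm o(1),
\]
which is visibly not quadratic in $\gamma$, and the convexity remark before the theorem refers to convexity in $A(\tilde\mX)$, not in $(\alpha,\beta)$. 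So the two-derivative-signs argument does not go through as stated.

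The paper instead runs a direct three-case comparison: for $\gamma\le\lambda_1(\tau-1)$ it lower-bounds $L(\gamma)$ by a constant strictly above $p_c[1-\tfrac{c_4(2-c_4)}{v}]+p_r$; for $\gamma\ge\lambda_2 T$ it picks up an extra nonvanishing $p_r(\tfrac{100c_4}{101v})^2$ term; and for $\gamma$ in the interval it exhibits the witness $\gamma=\sqrt{\tau T}$ (using $\tau=o(T)$) achieving exactly $p_c[1-\tfrac{c_4(2-c_4)}{v}]+p_r\pm o(1)$. Since these gaps are $\Omega(1)$ while the error terms are $o(1)$, any minimizer must lie in the interval. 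You could salvage your plan by either (i) proving convexity of the rational expression above in $\gamma$ (not obvious, and the paper does not attempt it), or (ii) switching to this comparison-with-a-witness argument, which is what the assumptions are tuned for.
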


\begin{remark}
In particular, Theorem~\ref{thm:optimal_attention_weights_updated} implies that if we choose $\tau, T$ such that the lower bound exceeds 1, we expect the attention between same-topic words to be \emph{on average} larger than that between different-topic words.
\end{remark}

\begin{remark}
Note that when $\mW^V$ is block-diagonal with uniform blocks, 
it is impossible to meaningfully bound $\alpha$ or $\beta$ individually;
instead, only their weighted average ($\frac{v-1}{v} \alpha + \frac{1}{v} \beta$) matters.
In other words, different $(\alpha, \beta)$ will incur the same loss, as long as the above weighted average remains the same.
Intuitively, this is because such block-diagonal $\mW^V$ with uniform blocks \emph{sums up} the attention on all words in each topic, and make predictions \emph{solely based on the sums}. 
The proof of Theorem~\ref{thm:optimal_attention_weights_updated} is deferred to Appendix~\ref{sec:appendix:attention:weights:block}.
\end{remark}

\begin{remark}
When there is no $L_2$-regularization, the first-stage optima of $\mW^V$ is not unique.
We include additional analysis for representative cases of $\mW^V$ in Appendix~\ref{sec:appendix:attention:weights:diagonal}.
\end{remark}

\begin{remark}
When $T, \tau$ are finite, 
the loss expression turns out to be too complicated to characterize in closed form
(because all the $o(1)$ terms need to be expanded).
So we instead numerically compute the loss landscape as a function of $\alpha$ and $\beta$.
See Appendix~\ref{sec:appendix:attention:non_asymptotic}.
\end{remark}

\section{EXPERIMENTS}  \label{sec:experiments}

We analyze properties of the training dynamics via extensive experimental analysis. We will describe both the setup for synthetic (LDA-generated) data, and for Wikipedia data. 

\subsection{Results on synthetic LDA-generated data}
\label{sec:experiments:setup}

\paragraph{Experimental setup}  
In our experiments, we generate data following Section~\ref{sec:setup:topic_modeling} with $T=10, v=10$, $N$ uniformly randomly chosen from $[100, 150]$, 
except that Step~\ref{setup:topic_distribution} is changed to sampling the topic distribution according to the Dirichlet distribution (consistent with LDA, \citealp{blei2003latent}) with $\alpha = 0.1$. 
Most sentences contain 2 to 4 topics. 
Our training objective follows Section~\ref{sec:setup:training} with $p_m = 0.15, p_c = 0.1, p_r = 0.1$ following \citet{devlin2019bert}.
We use the model architecture following Section~\ref{sec:setup:transformer}
but add back the bias terms $\vb^{K}, \vb^{Q}, \vb^{V}$, following standard implementation in \citet{wolf2020transformers}.

\paragraph{Trained token embeddings}  
In Figure~\ref{fig:embedding_dot}, we show that for a model in which all components are trained, 
the learned embedding weight $\mW^E$ is such that ${\mW^E}^\top \mW^E$ displays a block-wise pattern. 
In particular, a diagonal pattern is a special case. 
These results show that our theory in Section~\ref{sec:embedding} characterizes the optima of embedding layer which can be found by using either cross-entropy or squared losses, either SGD or Adam optimizers, and even when the other layers in the model are trained instead of frozen.

\paragraph{Learned value matrix \texorpdfstring{$W^V$}{Wv}}

We show that when the word embeddings are \emph{frozen to one-hot}
and the attention weights are uniform (by setting $\mW^K = 0, \mW^Q = 0$), 
the trained $\mW^V$ has a block-wise pattern, corresponding to the topical structure
(see Figure~\ref{fig:Wv_one_hot_freeze_uniform_attention_l2reg}).

We show (in Figure~\ref{fig:Wv_one_hot_trained_attention} in Appendix~\ref{sec:appendix:experiments:Wv}) that even when the attention weights $\mW^K, \mW^Q$ are jointly trained with $\mW^V$,
the model would still approximately converge to the type of block-wise $\mW^V$ described in our analyses in Section~\ref{sec:attention:value}.

\paragraph{Convergence point of trained attention weights}

We show that, our conclusion in Theorem~\ref{thm:optimal_attention_weights_updated} holds not just when $\mW^V$ is \emph{frozen} to a block-wise pattern, 
but also when it is \emph{trained} and naturally converges to such pattern.
And we show (in Table~\ref{tab:topic_attn_trained_Wv_block} in Appendix~\ref{sec:appendix:experiments:attn}) that 
on average, each word pays more attention to words of \emph{the same topic} than to words of \emph{different topics}.

\subsection{Results on natural language data}
For a set of pre-trained transformer-based models (and their corresponding tokenizers) downloaded from Huggingface \citep{wolf2020transformers},
we compare the embedding similarity and attention weights between same-topic tokens and different-topic tokens.
The topics are determined by fitting an LDA model with 100 topics on a sample of Wikipedia corpus \citep{wikidump} tokenized by the above tokenizers. 
We filter stop words. 
For each topic, we only keep a fraction of tokens that LDA assigns the highest likelihood in this topic.
Consistent with our theoretical setting, we restrict to keeping only one topic for each word.
In Table~\ref{tab:wiki_emb_attn_1topics_per_word}, we provide the results after such pre-processing.
We provide additional details about the experimental setup and additional results (including when the last restriction of ``one topic per word" is removed) in Appendix~\ref{sec:appendix:experiments:wiki}.

\begin{table*}[!t]
\centering
\begin{tabular}{cc|ccc}
\toprule
{\bf Model} & {\bf Ambiguity} &  {\bf Avg embedding} &  {\bf Avg embedding} &  {\bf Avg attn weight}  \\ & {\bf Threshold}   & {\bf Cosine Similarity} & {\bf Dot Product} &  \bf{(Same-topic} \\ &  & {\bf (Same-topic/Diff-topic)} & {\bf (Same-topic/Diff-topic)} & \bf /Diff-topic)  \\
\midrule
Bert & 0.0005 & 1.21 & 1.19 &  1.32 \\
 & 0.001   & 1.13 & 1.15 & 1.28 \\
 & 0.002   & 1.11 & 1.13 & 1.22 \\
\hline
Albert & 0.0005  & 5.64 & 6.29 & 1.33 \\
 & 0.001  & 4.18 & 3.74 & 1.28 \\
 & 0.002  & 3.24 & 2.93 & 1.22 \\
\hline
Bart & 0.0005  & 2.80 & 2.67 & 1.35 \\
 & 0.001 & 1.95 & 1.92 & 1.31 \\
 & 0.002  & 1.63 & 1.62 & 1.23 \\
\hline
Electra & 0.0005   & 5.98 & 5.37 & 2.14 \\
 & 0.001   & 7.70 & 7.35 & 2.09 \\
 & 0.002   & 7.46 & 8.08 & 1.95 \\
\hline
Roberta & 0.0005  & 6.44 & 6.81 & 1.40 \\
 & 0.001  & 5.73 & 6.31 & 1.31 \\
 & 0.002  & 5.24 & 5.30 & 1.22 \\
\hline
Bert & 0.0005 & 1.00080 & 1.00063 & 0.99943 \\
(randomly & 0.001  & 0.99974 & 1.00036 & 0.99996  \\
initialized) & 0.002  & 1.00016 & 1.00027 & 1.00007  \\
\bottomrule
\end{tabular}
\vspace{-0.2cm} 
\caption{\label{tab:wiki_emb_attn_1topics_per_word}
For models pretained on Wikipedia dataset, their token embeddings and attention weights encode topic structure. 
The different columns are:
(1) The ``ambiguity threshold", i.e. the number of words per topic, divided by the vocabulary size; {\bf each word is only assigned one topic.}
(2) The average embedding cosine similarity between different words of the \emph{same topic}, divided by that between words of \emph{different topics}.
(3) The average embedding dot product between different words of the \emph{same topic}, divided by that between words of \emph{different topics}.
(4) The average attention weight between different words of the \emph{same topic}, divided by that between words of \emph{different topics}. (The attention weights are normalized for debiasing, see discussion in Appendix~\ref{sec:appendix:experiments:wiki} for more details).
Different rows represent different evaluation settings, controlled by ``ambiguity threshold".
Note that the avg same-topic embedding similarity and attention weight are consistently greater than the avg diff-topic counterparts,
verifying our conclusions in Theorem~\ref{thm:optimal_embedding} and Theorem~\ref{thm:optimal_attention_weights_updated}.
}
\vspace{-0.2cm} 
\end{table*}

\section{RELATED WORKS}

One line of prior works explain the success of transformers 
by empirically showing that the components (e.g. attention heads) of a trained model (e.g. BERT \citealp{devlin2019bert}), 
contain abundant information for solving a wide range of ``probing" tasks, across syntax and semantics \citep{hewitt2019structural, clark2019bert, tenney2019bert, hewitt2019designing, kovaleva2019revealing, belinkov2022probing},
or through other approaches involving the attention weights \citep{vig2019analyzing, htut2019attention, sun2021effective}.
Our result also formalizes some relevant intuitions given in \citet{elhage2021mathematical}, 
such as embedding layer capturing some bigram statistics.
In topic modeling distribution, such ``bigram statistics" translates to co-occurrence in a document.

Recent works start to combine theoretical constructions and controlled experiments to justify the expressive power of transformers through the lens of Turing completeness \citep{bhattamishra2020computational},
function approximation \citep{yun2020are},
representing formal languages \citep{bhattamishra2020ability, ebrahimi2020self, yao2021self, liu2022shortcuts},
learning abstract algebraic operations \citep{zhang2022unveiling},
statistical sample complexity \citep{wei2021statistically, edelman2022inductive},
and learning optimal latent representation \citep{zhang2023analysis}.
Methodologically, we join a long line of works that characterize the capacity of neural network models by assessing their abilities in learning some simple models of the data  
\citep{siegelmann1992rnn, gers2001lstm, weiss2018practical, suzgun2019lstm, merrill2019sequential, hewitt2020rnns, li2021limitations, yao2021self, zhang2022unveiling, liu2022shortcuts}. 
Our work extends this line of works, and in particular, our results indicate that there may be multiple reasonable \emph{representational} optima, 
which calls for formally analyzing the training dynamics 
to gain deeper understanding of what the model actually learns from such data distributions.

On the optimization side,
\citet{nguyen2019transformers, xiong2020layer, liu2020understanding, zhang2020adaptive, li2021robust} propose algorithmic improvements (often with theoretical motivations) to help stabilize the training process of transformers.
Towards explaining the training process of attention-based neural networks, 
\citet{sun2020understanding} analyzes the trends of two quantities that are relevant to model performance and interpretability in text classification setting.

Also relevant to our work, \citet{snell2021approximating} consider cross-attention in LSTM Seq2Seq models trained on machine-translation settings\footnote{Specifically, they consider a data model related to the IBM machine translation model.}.
By contrast, we focus on self-attention in transformers, and we consider a data distribution inspired by topic models. 
Notably, they also propose an intuitive simplifying assumption of a two-stage learning process of the attention heads similar to ours (but without theoretical or empirical validation).
Our work uses a similar assumption 
\footnote{
We independently proposed the two-stage training of attention heads,
and later discovered \citep{snell2021approximating} used a similar assumption.
Comparison with \citep{snell2021approximating} was added during an update of our paper.
Moreover, while \citet{snell2021approximating} is the earliest paper we are aware of that explicitly assumes a two-stage training process specifically for attention heads,
we note that similar approaches (more generally, alternating optimization) commonly appear in the optimization literature in a broad variety of settings.
}
(Section~\ref{sec:attention:two_stage}).
In our work, we validate our version of the two-stage assumption by providing a particular way to initialize the attention weight matrices, along with theoretical intuitions (Section~\ref{sec:discussions}) and empirical validation on synthetic data (Figure~\ref{fig:no_layernorm_trained_zero_init_emb_dec}) as well as real data (Figure~\ref{fig:wiki_zero_init}),
showing that this two-stage process can be a reasonable approximation to the early steps of the real training dynamics of attention-based models under the settings that we analyze.

Recent work by \citet{jelassi2022vision} theoretically shows how transformers learn the spatial structure of image-type datasets through gradient-descent-based optimization algorithms. 
In particular, their attention weights depend on the positional encodings only.
Different from their work, our result (motivated by studying the semantics in language) focuses on topic modeling distribution that actually ignores the position information, 
so the attention weights only depend on the ``bag of words" (i.e. the contents).
In that sense, \citet{jelassi2022vision} and our work complement each other,
since real-world data distribution usually involves a combination of position-dependent and position-independent factors.
An interesting future work would be studying how these factors interact during the training process.

Regarding the type of data distribution that we consider, 
we join a series of works that theoretically reason about the ability of learning under topic-modeling-based distributions \citep{sontag2011complexity, awasthi2015variational, arora2016topic, tosh2021contrastive, luo2022one}.
In particular, \citet{luo2022one} shows that if a model can achieve low loss on contrastive or mask-prediction objectives, then it can recover topic posterior. 
However, these prior works do not theoretically analyze the optimization process of the transformer architecture.
In fact, model architecture can indeed critically influence the resulting model obtained by masked-prediction-type tasks (see \citet{liu2022masked} who highlight the subtlety of the interaction between the particular form of the task and the model specification).
Hence, our analysis extends beyond the scope of these prior works by incorporating the theoretical analysis on the optimization process of transformers trained on topic modeling data distribution.
Empirically, \citet{sia2020tired, thompson2020topic, meng2022topic, zhang2022neural, talebpour2023topics} analyze topic discovery via clustering the \emph{contextualized} representations produced by pretrained language models. 
Different from these works, our theory and experiments on token embeddings focus on the convergence of embedding layer \emph{parameters}.

\section{DISCUSSION}
\label{sec:discussions}

\subsection{The two-stage optimization process}

This two-stage optimization process (Section~\ref{sec:attention:two_stage} and Figure~\ref{fig:no_layernorm_trained_zero_init_emb_dec}) can be thought of as one iteration of the alternating optimization procedure.
That is, we first train $\mW^V$ while freezing $(\mW^K, \mW^Q)$, and then freeze $\mW^V$ while training $(\mW^K, \mW^Q)$, and repeat this process.

In practice, $\mW^K, \mW^Q, \mW^V$ in transformers are typically trained jointly instead of alternatingly.
However, our empirical results show that, the conclusions drawn from the two-stage optimization analysis carry over even when they are trained jointly. 
Moreover, we don't find any qualitative aspects of normal training that are not captured by this two-stage approximation. 

Intuitively, such two-stage phenomena occurs because
if $\mW^K, \mW^Q, \mW^V$ are initialized to random matrices near zero, and simultaneously trained,
then in the initial steps, $\nabla_{\mW^K} L$ contains the term $\mW^Q$ (see \eqref{eq:attention_weights}), which is close to 0.
By contrast, $\nabla_{\mW^V} L$ contains the softmax-normalized attention weights $A(\tilde{\mX})$ (see \eqref{eq:simplified_transformer_no_emb}).
Comparing these two, we shall see that $\nabla_{\mW^V} L$ tends to be of larger in magnitude than $\nabla_{\mW^K} L$,
because each column of $\mW^Q$ sums up to approximately 0, 
whereas each column of $A(\tilde{\mX})$ sums up to exactly 1.

Therefore, in the initial steps (i.e. Stage 1), $\mW^V$ intuitively grows much faster than $\mW^K$.
For the same reason (note the symmetry between $\mW^K$ and $\mW^Q$, see \eqref{eq:attention_weights}),
$\mW^V$ intuitively grows much faster than $\mW^Q$, too.

In Stage 2, it is less intuitively clear why $\| \mW^V \|_F$ tends to plateau. 
Note that empirically, even when $\| \mW^V \|_F$ plateaus, the $\mW^V$ matrix itself still fluctuates with non-vanishing step-by-step changes. 
(That is, in each step, $\mW^V$ ``locally rotates" around the origin with an approximately constant norm.) 
Hence we refer to our Stage 2 analysis (which freezes $\mW^V$ itself) as a simplification. 
However, the final empirical convergence point of $\mW^V$ matches our theoretical analysis.

We show in Figure~\ref{fig:wiki_zero_init} 
that an approximate version of this multi-stage phenomenon can be observed on multi-layer transformers trained on Wikipedia as well.

Finally, this two-stage phenomenon is sensitive to hyperparameters like initialization and learning rate. 
In Figure~\ref{fig:no_layernorm_trained_zero_init_emb_dec}, the 
The training process is not usually visibly two-stage using the common default hyperparameters.
We leave it as an interesting future work to theoretically analyze the training dynamics when the two-stage phenomenon is not present.

\begin{figure}[!htb]
  \centering
  \begin{minipage}[b]{1.0\linewidth}
    \centering
    \includegraphics[width=1.0\linewidth]{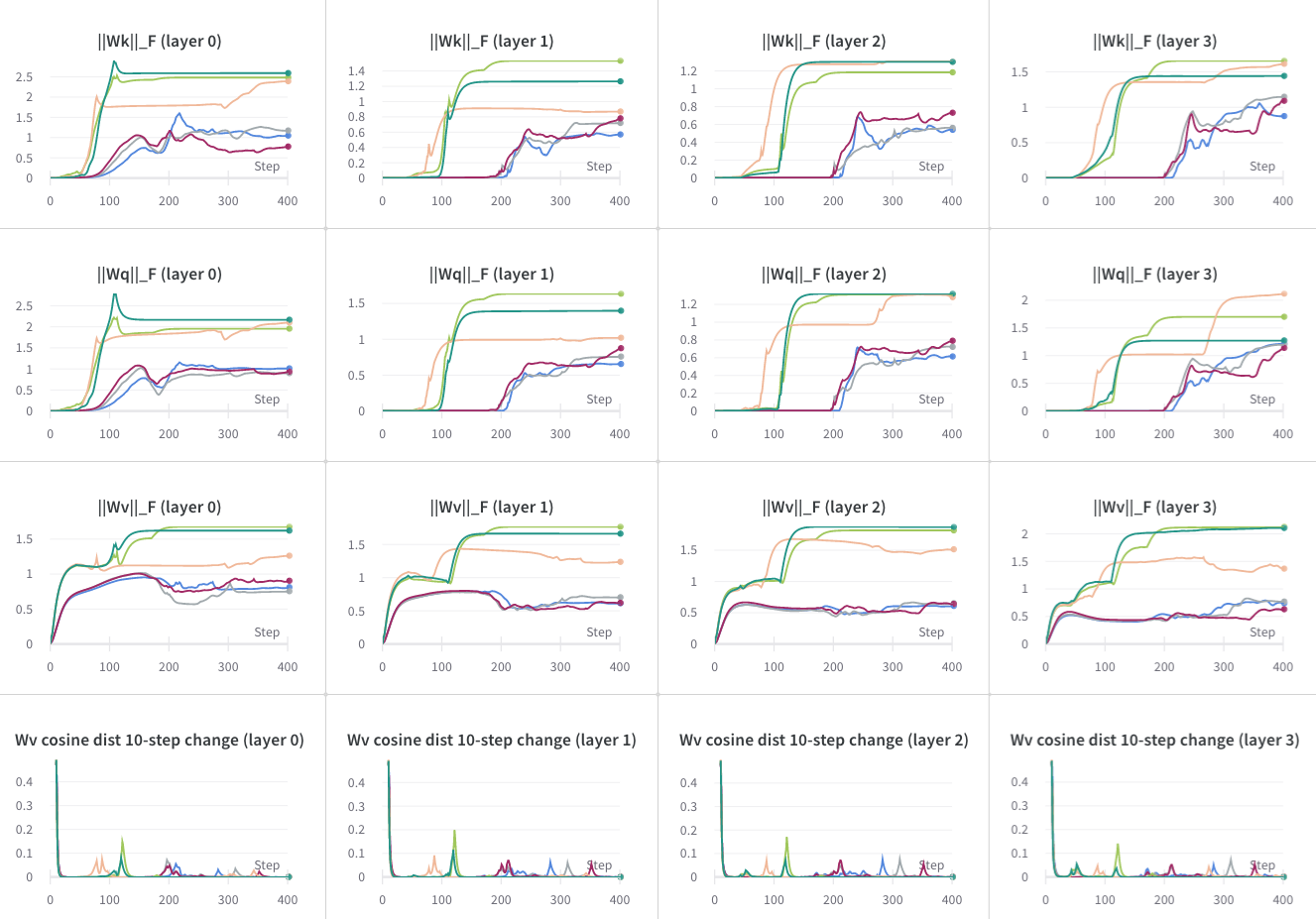}
  \end{minipage}
  \caption{
    Two-stage learning dynamics of a 4-layer, 4-head-per-layer transformer trained on Wikipedia data. 
    All weight matrices (key $\mW^K$, query $\mW^Q$, value $\mW^V$ in each layer) are initialized to random matrices near zero, and \emph{simultaneously trained}.
    Each column corresponds to one layer.
    The top 3 rows plot the trajectories of the Frobenius norms of $\mW^K$, $\mW^Q$, and $\mW^V$ (weights from all heads in the same layer are concatenated together) after each gradient step.
    The bottom row measures the rotation of $\mW^V$, i.e. the cosine distance between $\mW^V$ in step $t$ and $\mW^V$ in step $(t-10)$.
    Cosine distance is defined as $\frac{1-cs}{2} \in [0, 1]$, in which $cs$ is the classic cosine similarity. \\
    The initial 400 steps of the learning dynamics naturally exhibit an \emph{approximately two-stage} phenomenon:
    in \textbf{Stage 1} (roughly steps 0-100), for all 4 layers, the norms of $\mW^K$ and $\mW^Q$ stay close to 0,
    while the norm of $\mW^V$ increases significantly
    and the orientation of $\mW^V$ changes rapidly.
    In \textbf{Stage 2} (roughly steps 100-400), the norms of $\mW^K$'s and $\mW^Q$'s start increasing significantly,
    much later than $\mW^V$ matrices do.
    Different curves in the figure correspond to different settings of the hyperparameters as well as different runs in each setting.
    }
  \label{fig:wiki_zero_init}
\end{figure}

\clearpage
\subsection{Do topic-wise behaviors perfectly correlate with co-occurrence counts?}

Additionally, we note that fitting a topic model is closely related to word co-occurrence statistics, 
which raises the following question: should those empirical phenomenon (i.e. higher same-topic attention and more similar same-topic embeddings, shown in Table~\ref{tab:wiki_emb_attn}) be more fundamentally attributed to larger co-occurrence counts?

In the following, we also compare them with some preliminary empirical results on the behavior of embedding and attention, 
from both topic modeling and co-occurrence perspectives. 
Specifically, we compare the average attention weights and average embedding dot products, between same-topic word pairs and the $N$ pairs of words that co-occur the most frequently in a sample of the Wikipedia corpus. 
The cutoff $N$ is determined so that the number of "top co-occurring word pairs" is the same as the number of word pairs in each topic (controlled by the ambiguity threshold).
The results are summarized in Table~\ref{tab:wiki_co_occur}.

Based on those results, we conjecture that the topic-wise behavior of token embeddings and attention weights cannot be fully explained by simple co-occurrence counts. 

Reasoning about their connections more formally would require analyzing some data distributions that better decouple these factors. 
We think that would be an interesting direction of future work.

\begin{table*}[!h]
\centering
\begin{tabular}{c|cccc}
\toprule
{\bf \# Word Pairs} & {\bf Avg Attn Weight}  & {\bf Avg Attn Weight}  & {\bf Avg Embedding}  &  {\bf Avg Embedding}  \\  & {\bf (Same-Topic)}  & {\bf (Top Co-occur.)}  & {\bf Cosine Similarity} & {\bf Cosine Similarity} \\ & & & {\bf(Same-Topic)} & {\bf(Top Co-occur.)} \\
\midrule
105 & 0.00659 & 0.00751 & 0.468 & 0.316  \\
435  & 0.00621 &  0.00695 & 0.461 & 0.311  \\
1711  & 0.00597 & 0.00677 & 0.425 & 0.323  \\
\bottomrule
\end{tabular}
\caption{\label{tab:wiki_co_occur} 
For a BERT model pretained on Wikipedia dataset, the topic-wise behavior of its token embeddings and attention weights (shown in Table~\ref{tab:wiki_emb_attn_1topics_per_word}) cannot be fully explained by co-occurrence. 
The different columns are:
(1) The number of pairs of tokens that have the highest co-occurrence counts (with stop tokens removed). The cutoffs are selected so that each row contains the same number of words pairs as one topic, corresponding to the rows in Table~\ref{tab:wiki_emb_attn_1topics_per_word};
(2) The average attention weights between same-topic words;
(3) The average attention weights between tokens that co-occur the most;
(4) The average embedding cosine similarity between different words of the \emph{same topic}.
(5) The average embedding cosine similarity between between tokens that co-occur the most.
Note that for all ``\# word pairs" cutoffs considered, same-topic tokens have smaller average attention weight, but larger average embedding cosine similarity.
}
\end{table*}

\section{CONCLUSION}

We initiated the study of understanding training dynamics of transformers in the presence of semantic structure captured by a topic model.
Interesting directions of future work includes extending the analysis to data distributions that captures ``syntactic'' structure, e.g. through simple sandboxes like PCFGs. When both the model and the data distributions are complex, it remains a daunting challenge to ``disentangle" how the many different aspects of the data (e.g. semantic and syntactic elements)
are learned through the different parts of model architecture (e.g. attention, positional encodings, and embeddings).
\vspace{-2mm}

\subsubsection*{ACKNOWLEDGEMENTS}
We thank Bingbin Liu, Yusha Liu, and Tanya Marwah for proofreading and providing constructive comments,
Yewen Fan for helpful suggestions on empirically obtaining the two-stage optimization process,
and Emmy Liu and Graham Neubig for insightful discussions on the connections with empirical observations.

Andrej Risteski and Yuchen Li acknowledge support by NSF awards IIS-2211907 and CCF-2238523. Andrej Risteski also acknowledges support by Amazon Research Award ``Causal + Deep Out-of-Distribution Learning''.

\bibliography{references}
\bibliographystyle{ref_style}

\newpage
\appendix

\thispagestyle{empty}

\def\toptitlebar{
\hrule height4pt
\vskip .25in}

\def\bottomtitlebar{
\vskip .25in
\hrule height1pt
\vskip .25in}

\newcommand{\makesupplementtitle}{\hsize\textwidth
    \linewidth\hsize \toptitlebar {\centering
        {\Large\bfseries Supplementary Material \par}}
    \bottomtitlebar}

\makesupplementtitle

\renewcommand{\theequation}{\thesection.\arabic{equation}}

\tableofcontents

\newpage

\section{ADDITIONAL INFORMATION ON THE SETUP}
\label{sec:appendix:setup}

The positional encoding at the input is also removed, because the position information of a word in a document is irrelevant to the topic model defined in Section~\ref{sec:setup:topic_modeling}.

We also use a single-head attention.

\subsection{Lemma on the optimal linear transform when freezing uniform attention}
\label{sec:appendix:optimal_linear_transform}

Under our setting, we first prove the following useful Lemma~\ref{lemma:optimal_linear_transform_given_uniform_attention}.
Intuitively, it states that, when freezing uniform attention, 
the output of self-attention weights essentially counts the \emph{unmasked} tokens in the document (as a result of the masking process described in Section~\ref{sec:setup:training}).
Given those counts, the best way to predict a token at the masked positions in the \emph{original} document (i.e. prior to the masking process) is to:
\begin{enumerate}
    \item First, aggregate the counts of the unmasked words within each topic, to infer the topic distribution in the observed document. In this, we further have the restriction that: 
    \begin{itemize}
        \item Each unmasked word only contributes to predicting words of the \emph{same topic}
        \item Each unmasked word does not contribute to predicting words of \emph{different topics}
        \item Never predict the mask token (\texttt{[MASK]}), because the original document does not contain any \texttt{[MASK]}
    \end{itemize}
    \item Second, we ``denoise" the topic distribution, i.e. we subtract the probability caused by filling in random words in the masking process (described in Section~\ref{sec:setup:training}).
\end{enumerate}

In line with our single layer transformer architecture (Section~\ref{sec:setup:transformer}, \eqref{eq:simplified_transformer_no_emb}),
we consider a special case in which the attention is \emph{uniform}, 
i.e.  $\forall i, j \in \{1,\cdots,N\}, A(\tilde{\mX})_{ij} = \frac{1}{N}$,
denoted by $A(\tilde{\mX}) = \left[ \frac{1}{N} \right]_{N \times N}$.
(This can be achieved by setting $\mW^K = 0, \mW^Q = 0$.) 
\begin{equation}
    \label{eq:linear_transform_with_attention}
    f(\tilde{\mX}) = \mW \tilde{\mX} \left[ \frac{1}{N} \right]_{N \times N}
\end{equation}
which applies self-attention (\eqref{eq:attention_weights}) on the one-hot representation of the masked document $\tilde{\mX} \in \{0, 1\}^{(T v + 1) \times N}$. %

\begin{lemma}[optimal linear transform when freezing uniform attention] 
    \label{lemma:optimal_linear_transform_given_uniform_attention}
    Consider the simplified transformer architecture given by \eqref{eq:linear_transform_with_attention} with , as well as the masked language modeling objective (\eqref{eq:objective}) with squared loss (\eqref{eq:squared_loss}). 
    Then the set of minimizers $\argmin L(\mW)$ consists of all $\mW \in \R^{(T v + 1) \times (T v + 1)}$ that satisfy:
    there exist constants $u_0, \cdots, u_{Tv} \in \R$ such that
    \begin{enumerate} %
        \item The 0-th row of $\mW$:
        \begin{enumerate}
            \item $\mW_{00} = -\left(\frac{1}{p_m (1-p_c-p_r)} - 1\right) \cdot u_0$
            \item $\forall t \in [T], \sum_{l \in t} \mW_{0l} = u_0 v$
        \end{enumerate}
        \item The 0-th column of $\mW$:
        \begin{enumerate}
            \item $\forall i \in \{1, \cdots, T v\}, \mW_{i0} = -\frac{p_r}{(1-p_c-p_r) \left(1-(1-p_c)p_m \right) T v} - \left( \frac{1}{(1-p_c-p_r) p_m} - 1 \right) u_i $
        \end{enumerate}
        \item $\mW_{ij}$ ($\forall i, j \in \{1, \cdots, T v\}$):
        \begin{enumerate}
            \item $\sum_{l \in \texttt{topic}(i)} \mW_{il} = \frac{1}{1-(1-p_c)p_m} + u_i v$
            \item $\forall t \in [T]$ such that $\texttt{topic}(i) \ne t$, $\sum_{l \in t} \mW_{il} = u_i v $
        \end{enumerate}
    \end{enumerate}
\end{lemma}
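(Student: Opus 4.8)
The plan is to reduce the problem to an unconstrained least-squares problem by writing out the expected squared loss in \eqref{eq:objective} explicitly as a quadratic in the entries of $\mW$, and then characterizing the set of stationary points. First I would fix a masked document $\tilde{\mX}$ and compute $f(\tilde{\mX}) = \mW \tilde{\mX}[\tfrac1N]_{N\times N}$: since each column of $\tilde{\mX}[\tfrac1N]_{N\times N}$ equals the empirical token-frequency vector $\vp_{\tilde{\vw}} \in \R^{Tv+1}$ of the masked document, the prediction at \emph{every} position $j$ is the same vector $\mW \vp_{\tilde{\vw}}$. Hence the per-document loss is $\frac{1}{|M|}\sum_{j\in M}\|\mW\vp_{\tilde{\vw}} - \ve_{w_j}\|_2^2$, and taking the expectation over the masking process and over $\mathcal{D}_\mX$ turns the objective into $\E\big[\|\mW\vp_{\tilde{\vw}} - \ve_{w_j}\|_2^2\big]$ for a random masked position $j$. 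This is a (non-strictly) convex quadratic in $\mW$, so $\argmin L(\mW)$ is exactly the affine set of solutions of the normal equations $\mW\, \E[\vp_{\tilde{\vw}}\vp_{\tilde{\vw}}^\top] = \E[\ve_{w_j}\vp_{\tilde{\vw}}^\top]$.

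Next I would compute the two relevant second-moment-type matrices under Assumption~\ref{assumption:infinitely_long_document}. Because the document is infinitely long, conditioned on the set of topics $\{t_1,\dots,t_\tau\}$ present, the masked empirical frequency vector $\vp_{\tilde{\vw}}$ is deterministic: the masking scheme of Section~\ref{sec:setup:training} sends each token with probability $(1-(1-p_c)p_m)$ to itself (within the document's topic support, renormalized), with probability $p_m(1-p_c-p_r)$ to \texttt{[MASK]}, and with probability $p_m p_r$ to a uniformly random word in $[Tv]$; I would track these three contributions to get $\vp_{\tilde{\vw}}(0)$ and $\vp_{\tilde{\vw}}(i)$ for $i\in t_j$ versus $i\notin \cup t_j$. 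Then the expectation over which $\tau$ topics appear is a symmetric average: by the exchangeability of topics, $\E[\vp_{\tilde{\vw}}\vp_{\tilde{\vw}}^\top]$ has a block structure governed by whether two words are equal, same-topic, or different-topic, plus a border row/column for \texttt{[MASK]}; similarly $\E[\ve_{w_j}\vp_{\tilde{\vw}}^\top]$ is determined by $\Pr[w_j = i, \text{word } l \text{ appears in } \tilde{\vw}]$, again controlled by the topic-coincidence pattern. The constants $\frac{1}{1-(1-p_c)p_m}$, $\frac{1}{p_m(1-p_c-p_r)}$, etc., appearing in the statement are exactly the reciprocals of the "self-retention" and "mask" probabilities, which is a good sanity check that this bookkeeping is on track.

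Having both matrices in hand, the normal equations $\mW A = B$ (with $A = \E[\vp\vp^\top]$ and $B = \E[\ve_{w_j}\vp^\top]$) determine $\mW$ on the row space of $A$ and leave it free on $\ker A$; the free directions are precisely what the constants $u_0,\dots,u_{Tv}$ parametrize. The cleanest way to extract the stated conclusions is not to invert $A$ but to dot the normal equations against a convenient spanning set of test vectors: the all-ones vector on a single topic block $\sum_{l\in t}\ve_l$, the indicator $\ve_0$, and individual $\ve_l$'s. Dotting $\mW A = B$ with $\sum_{l\in t}\ve_l$ on the right directly yields the "topic-sum" identities in points 1(b), 3(a), 3(b), since $A$ acts nicely on topic-block-sum vectors (each such vector is close to an eigenvector of $A$ up to the low-rank border and up to a uniform shift, by symmetry). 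The entries $\mW_{00}$ and $\mW_{i0}$ in points 1(a), 2(a) come from dotting with $\ve_0$ and using the identity already derived for the topic sums to eliminate the remaining unknowns; the $u_i$'s then appear as the coordinate of $\mW$ along the one remaining null direction per row (the "shift by a constant within a topic block is invisible because attention averaged it" phenomenon).

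\textbf{Main obstacle.} The genuinely delicate part is computing $A = \E[\vp_{\tilde{\vw}}\vp_{\tilde{\vw}}^\top]$ and $B$ exactly — in particular getting the normalization right (the renormalization to $\frac{1}{\tau v}$ within the document's topic support interacts multiplicatively with the masking probabilities), and correctly handling the \texttt{[MASK]} border row/column, whose probabilities involve the "random word" term $p_m p_r$ landing anywhere in $[Tv]$. I expect the algebra to produce a rank-structured $A$ of the form (diagonal) + (block-constant within topics) + (rank-one border), and the real work is diagonalizing this enough to read off both which linear combinations of rows of $\mW$ are pinned down and what they equal; once that decomposition is clear, matching it to the claimed formulas (and verifying the free parameters are exactly one per row, captured by $u_i$) is routine. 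A secondary subtlety worth stating carefully is that $L$ is convex but not strictly convex, so one must argue the stationary-point set is the \emph{full} minimizer set — this is immediate from convexity but should be noted.
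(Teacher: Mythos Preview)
Your approach via the normal equations $\mW\,\E[\vp_{\tilde{\vw}}\vp_{\tilde{\vw}}^\top] = \E[\ve_{w_j}\vp_{\tilde{\vw}}^\top]$ is correct and would yield the stated characterization, but it takes a more laborious route than the paper's argument. The paper exploits more directly the determinism you already noticed: since, under Assumption~\ref{assumption:infinitely_long_document}, $\vp_{\tilde{\vw}}$ is a \emph{deterministic} function of the topic set $\{t_1,\dots,t_\tau\}$, and since the prediction $\mW\vp_{\tilde{\vw}}$ is constant across positions $j$, the per-document squared loss is minimized exactly when $\mW\vp_{\tilde{\vw}} = \vp_{\vw}$ (the unmasked empirical frequencies). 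The paper checks this is achievable, so the global minimizers of $L$ are precisely those $\mW$ satisfying $\mW\vp_{\tilde{\vw}} = \vp_{\vw}$ \emph{for every admissible topic set}, and no second-moment matrix ever needs to be formed or diagonalized.

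From there the paper extracts the constraints by coefficient matching and by testing at specific topic configurations, rather than by dotting normal equations against topic-block test vectors. Writing $P_{\vw}(i)$ for the frequency of token $i$, the optimality condition is an affine identity in the variables $\{P_{\vw}(tv): t\in[T]\}$ that must hold on the whole simplex of admissible frequencies; equality of all off-topic block sums $\sum_{l\in t}\mW_{il}$ (for $t\neq\texttt{topic}(i)$) is obtained by a two-line contradiction (swap which of two candidate topics is ``on'' while holding the rest fixed), and the remaining identities fall out by matching the coefficient of $P_{\vw}(i)$ and the constant term. The free parameter $u_i$ per row is simply defined to be this common off-topic block sum, rather than recovered as a kernel direction of $A$. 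What your approach buys is systematicity---it would still go through if $\vp_{\tilde{\vw}}$ were genuinely random, e.g.\ for finite-length documents---whereas the paper's approach buys a much lighter computation: the step you flag as the ``main obstacle,'' computing $\E[\vp_{\tilde{\vw}}\vp_{\tilde{\vw}}^\top]$ and identifying its kernel, is bypassed entirely.
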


\begin{remark}
\label{rem:not_strongly_convex}
At the first glance, it might seem that the objective has a unique optima because it involves a squared loss, which is strongly convex.
However, such uniqueness is undermined by the uniform attention condition: 
$\mW$ is multiplied with a rank-1 matrix $A(\tilde{\mX}) = \left[ \frac{1}{N} \right]_{N \times N}$.
This $A(\tilde{\mX})$ will appear as a matrix multiplier in the Hessian of the objective with respect to $\mW$,
and so the Hessian is of rank 1, 
and therefore cannot have a positive minimum eigenvalue,
implying that the objective is in fact \emph{not} strongly convex.

In fact, this optimization objective becomes strongly convex with an $L_2$ regularization for some $\lambda > 0$.
\[ \argmin_{\mW^V} L_{MLM}(\mW^V) + \lambda \| \mW^V \|_F \]
\end{remark}

\begin{proof}
For document $\vw$ and the corresponding (masked) one-hot embedding $ \tilde{\mX}$ :
\begin{align*}
    &\quad \left[ \tilde{\mX} A(\tilde{\mX}) \right]_{ij} \\
    &= \frac{1}{N} \sum_{l=1}^N  \tilde{\mX}_{il} \quad \text{(i.e. independent of $j$)}  \\
    &= \frac{1}{N} \sum_{l=1}^N \1_{\tilde{\mX}_{il} = 1} \quad \text{(since $\tilde{\mX}$ is one-hot)} \\
    &= \begin{cases}
        p_m (1-p_c-p_r) \quad &\text{if } i=0 \\ 
        P_{\vw}(i) (1-(1-p_c) p_m) + \frac{p_m p_r}{v T} \quad &\text{if } i \in \{1, \cdots, T v\}
    \end{cases} \quad \text{(by \eqref{eq:observed_token_distribution})}
\end{align*}

Thus, the model prediction $\mW \tilde{\mX} A(\tilde{\mX})$ satisfies
\begin{equation}
    \label{eq:pred}
    \begin{split}
        &\quad (\mW \tilde{\mX} A(\tilde{\mX}))_{ij} = \mW_{i0} p_m (1-p_c-p_r) + \sum_{l=1}^{T v} \mW_{il} \left( P_{\vw}(l) (1-(1-p_c) p_m) + \frac{p_m p_r}{v T} \right) \\ 
        &= \mW_{i0} p_m (1-p_c-p_r) + \left( 1-(1-p_c) p_m \right) \cdot \sum_{l=1}^{T v} \mW_{il} P_{\vw}(l) + \frac{p_m p_r}{v T} \cdot \sum_{l=1}^{T v} \mW_{il} \\
        &= \mW_{i0} p_m (1-p_c-p_r) + \left( 1-(1-p_c) p_m \right) \cdot \left( \sum_{l \in \texttt{topic}(i)} \mW_{il} P_{\vw}(i) + \sum_{l \notin \texttt{topic}(i)} \mW_{il} P_{\vw}(l) \right) + \frac{p_m p_r}{v T} \cdot \sum_{l=1}^{T v} \mW_{il} 
    \end{split}
\end{equation}
and the last step follows since $\forall l \in \texttt{topic}(i), P_{\vw}(l) = P_{\vw}(i)$ under our setting in Section~\ref{sec:setup:topic_modeling}. 

Recall that the loss is 
\[ L(\mW) = \E_{\mX \sim \mathcal{D}_\mX} \E_{M} \frac{1}{|M|} \sum_{j \in M} \| (\mW \tilde{\mX} A(\tilde{\mX}))_{:j} - \mX_{:j} \|_2^2 \]
We will show that the average taken over $j \in M$ is the same as the average taken over all positions $j \in [N]$,  
by Assumption~\ref{assumption:infinitely_long_document} and because $M$ is uniformly randomly sampled from $[N]$. 
Moreover, note that $A(\tilde{\mX}) = \left[ \frac{1}{N} \right]_{N \times N}$,
so $(\mW \tilde{\mX} A(\tilde{\mX}))_{:j}$ is independent of $j$.
The above observations imply that the loss can be simplified to
\[ L(\mW) = \E_{\mX \sim \mathcal{D}_\mX} \frac{1}{N} \sum_{j=1}^N \| (\mW \tilde{\mX} A(\tilde{\mX}))_{:j} - \mX_{:j} \|_2^2 \]
and so $L(\mW)$ is minimized when $\forall \mX$,
\[ (\mW \tilde{\mX} A(\tilde{\mX}))_{:j} = \frac{1}{N} \sum_{l=1}^N \mX_{:l} \]

which requires $\forall i \in \{0, \cdots, T v + 1 \}$,
\begin{equation}
    \label{eq:optimal_pred}
    \begin{split}
        (\mW \tilde{\mX} A(\tilde{\mX}))_{0j} &= 0 \\
        (\mW \tilde{\mX} A(\tilde{\mX}))_{ij} &= P_{\vw}(i), \quad \forall i \in \{1, \cdots, T v \}
    \end{split}
\end{equation}

From \eqref{eq:pred} and \eqref{eq:optimal_pred} we get:
\begin{equation}
    \label{eq:solution_pred}
    \begin{split}
        0 &= \mW_{00} p_m (1-p_c-p_r) + \left( 1-(1-p_c) p_m \right) \cdot \sum_{l=1}^{T v} \mW_{0l} P_{\vw}(l) + \frac{p_m p_r}{v T} \cdot \sum_{l=1}^{T v} \mW_{0l} \\
        P_{\vw}(i) &= \mW_{i0} p_m (1-p_c-p_r) + \left( 1-(1-p_c) p_m \right) \cdot \left( \sum_{l \in \texttt{topic}(i)} \mW_{il} P_{\vw}(i) + \sum_{l \notin \texttt{topic}(i)} \mW_{il} P_{\vw}(l) \right) + \frac{p_m p_r}{v T} \cdot \sum_{l=1}^{T v} \mW_{il}
    \end{split} 
\end{equation}

Note that under the topic modeling distribution in Section~\ref{sec:setup:topic_modeling}, 
for any topic $t \in [T]$,
\[ P_{\vw}((t-1)v+1) = P_{\vw}((t-1)v+2) = \cdots P_{\vw}(tv) \]

Hence we simplify \eqref{eq:solution_pred} by considering the proportions of the ``representative" tokens for each topic:
\[ \{ P_{\vw}(tv): t \in [T] \} \]
We obtain: for all sets of 
$\{ P_{\vw}(i): i \in [T v] \}$ satisfying our distribution in Section~\ref{sec:setup:topic_modeling}
\begin{equation}
    \label{eq:solution_pred_simplified_0}
        0 = \mW_{00} p_m (1-p_c-p_r) + \left( 1-(1-p_c) p_m \right) \cdot \sum_{t=1}^{T} \sum_{l \in t} \mW_{0l} P_{\vw}(tv) + \frac{p_m p_r}{v T} \cdot \sum_{t=1}^{T} \sum_{l \in t} \mW_{0l} 
\end{equation}
and $\forall i \in \{1, \cdots, T v\}$
\begin{equation}
    \label{eq:solution_pred_simplified_i}
        P_{\vw}(i) = \mW_{i0} p_m (1-p_c-p_r) + \left( 1-(1-p_c) p_m \right) \cdot \biggl( \sum_{l \in \texttt{topic}(i)} \mW_{il} P_{\vw}(i) + \sum_{t \ne \texttt{topic}(i)} \sum_{l \in t} \mW_{il} P_{\vw}(tv) \biggr) + \frac{p_m p_r}{v T} \cdot \sum_{l=1}^{T v} \mW_{il} 
\end{equation}

\begin{claim}
\label{claim:diff_topics_avg}
$\forall i \in \{1, \cdots, T v\}, \exists u_i \in \R$ 
such that $\forall t \ne \texttt{topic}(i), \sum_{l \in t} \mW_{il} = u_i v$.
When $i = 0, \exists u_0 \in \R$ such that $\forall t \in [T], \sum_{l \in t} \mW_{0l} = u_0 v$.
\end{claim}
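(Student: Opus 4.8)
The plan is to exploit the fact that \eqref{eq:solution_pred_simplified_i} must hold \emph{identically} in the free parameters $\{P_{\vw}(tv) : t \in [T]\}$, ranging over all topic-proportion vectors realizable under the distribution of Section~\ref{sec:setup:topic_modeling}. First I would fix an index $i \in \{1,\dots,Tv\}$ and write $s := \texttt{topic}(i)$. Rearranging \eqref{eq:solution_pred_simplified_i}, the right-hand side is an affine function of the variables $\{P_{\vw}(tv)\}_{t \ne s}$ (and of $P_{\vw}(i) = P_{\vw}(sv)$), while the left-hand side $P_{\vw}(i)$ depends \emph{only} on the coordinate $P_{\vw}(sv)$. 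Hence, matching coefficients, the coefficient of each $P_{\vw}(tv)$ for $t \ne s$ on the right-hand side must vanish. That coefficient is $\left(1-(1-p_c)p_m\right)\sum_{l \in t}\mW_{il}$, and since $1-(1-p_c)p_m \ne 0$ under Assumption~\ref{assumption:correct_random_balance}, we conclude $\sum_{l \in t}\mW_{il} = 0$ for every $t \ne s$ — so the claim holds with $u_i = 0$, and actually with all these sums equal.

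There is a subtlety I would need to address: the proportions $\{P_{\vw}(tv)\}$ are not free over all of $\R^T$; under the generative model each document activates exactly $\tau$ topics, and on the activated topics $P_{\vw}(tv) = \tfrac{1}{\tau v}$ while on the inactive ones it is $0$. So the relation \eqref{eq:solution_pred_simplified_i} only needs to hold on this discrete set of configurations, not for arbitrary real vectors. To handle this, I would take differences between two configurations that differ by swapping one active topic $t \ne s$ for another active topic $t' \ne s$ (both $\ne \texttt{topic}(i)$, which is always possible when $\tau \ge 3$ and $T$ is large, as in our asymptotic regime): subtracting the two instances of \eqref{eq:solution_pred_simplified_i} kills $P_{\vw}(i)$, $\mW_{i0}$, and the $\frac{p_m p_r}{vT}\sum_l \mW_{il}$ term, leaving $\bigl(1-(1-p_c)p_m\bigr)\tfrac{1}{\tau v}\bigl(\sum_{l\in t}\mW_{il} - \sum_{l \in t'}\mW_{il}\bigr) = 0$, hence $\sum_{l\in t}\mW_{il}$ is a constant $u_i v$ over all $t \ne \texttt{topic}(i)$. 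The $i = 0$ case is entirely analogous, starting from \eqref{eq:solution_pred_simplified_0}, where there is no distinguished ``own topic,'' so the same swap argument gives $\sum_{l \in t}\mW_{0l} = u_0 v$ for all $t \in [T]$.

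The main obstacle I anticipate is being careful about exactly which topic-proportion configurations are admissible and ensuring the swap argument has enough room: one needs at least two topics distinct from $\texttt{topic}(i)$ that can be simultaneously or alternately active, which requires a mild lower bound on $\tau$ and $T$ (satisfied in the asymptotic regime of Assumption~\ref{assumption:asymptotic}, and in fact $\tau \ge 3, T \ge 4$ suffices). A cleaner alternative, which I would mention, is to instead vary a single active topic $t \ne s$ "on versus off" while keeping the total number of active topics fixed by compensating elsewhere, or to simply observe that the equations \eqref{eq:solution_pred_simplified_0}--\eqref{eq:solution_pred_simplified_i} are required to hold for the full continuum of proportion vectors in the infinitely-long-document idealization combined with the Dirichlet-type sampling, in which case the coefficient-matching argument applies directly without any swap bookkeeping. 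Either way the conclusion is the definition of the constants $u_0,\dots,u_{Tv}$, which are then used in the subsequent parts of the lemma.
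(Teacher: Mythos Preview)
Your proposal is correct and uses essentially the same idea as the paper: compare two admissible topic configurations that differ only in which single topic (other than $\texttt{topic}(i)$) is activated, and subtract the two instances of \eqref{eq:solution_pred_simplified_i} to force the corresponding column-sums to coincide. The paper carries this out by contradiction with the concrete choice $\tau=2$ (i.e.\ $P_{\vw}(i)=\tfrac{1}{2v}$ and the second active topic equal to $t_1$ versus $t_2$), whereas you phrase it as a swap for general $\tau$; note that your stated requirement $\tau\ge 3$ is stronger than necessary, since $\tau\ge 2$ already lets you keep $s=\texttt{topic}(i)$ active and swap the one remaining active topic, exactly as the paper does.
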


\begin{proof}
    $\forall i \in \{1, \cdots, T v\}, \exists u_i \in \R$, suppose towards contradiction that $\exists t_1, t_2 \ne \texttt{topic}(i)$ such that $\sum_{l \in t_1} \mW_{il} > \sum_{l \in t_2} \mW_{il}$.
    We will show that \eqref{eq:solution_pred_simplified_i} cannot hold for all sets of $\{ P_{\vw}(i): i \in [T v] \}$ satisfying our distribution in Section~\ref{sec:setup:topic_modeling}.

    Specifically, fix $P_{\vw}(i) = \frac{1}{2v}$ and consider the following settings of $\{ P_{\vw}(j): j \notin \texttt{topic}(i) \}$:
    \begin{itemize}
        \item $P_{\vw}(j) = \frac{1}{2v}$ if $\texttt{topic}(j) = t_1$ and 0 otherwise.
            Then \eqref{eq:solution_pred_simplified_i} becomes 
            \[ \frac{1}{2v} = \mW_{i0} p_m (1-p_c-p_r) + \left( 1-(1-p_c) p_m \right) \cdot \biggl( \sum_{l \in \texttt{topic}(i)} \mW_{il} \frac{1}{2v} + \sum_{l \in t_1} \mW_{il} \frac{1}{2v} \biggr) + \frac{p_m p_r}{v T} \cdot \sum_{l=1}^{T v} \mW_{il} \]
        \item $P_{\vw}(j) = \frac{1}{2v}$ if $\texttt{topic}(j) = t_2$ and 0 otherwise.
            Then \eqref{eq:solution_pred_simplified_i} becomes 
            \[ \frac{1}{2v} = \mW_{i0} p_m (1-p_c-p_r) + \left( 1-(1-p_c) p_m \right) \cdot \biggl( \sum_{l \in \texttt{topic}(i)} \mW_{il} \frac{1}{2v} + \sum_{l \in t_2} \mW_{il} \frac{1}{2v} \biggr) + \frac{p_m p_r}{v T} \cdot \sum_{l=1}^{T v} \mW_{il} \]
    \end{itemize}
    Clearly the above two equations cannot both hold, because $\sum_{l \in t_1} \mW_{il} > \sum_{l \in t_2} \mW_{il}$.
    
    Hence we proved by contradiction that $\forall t_1, t_2 \ne \texttt{topic}(i), \sum_{l \in t_1} \mW_{il} = \sum_{l \in t_2} \mW_{il}$.
    Likewise, when $i = 0$, $\forall t_1, t_2 in [T], \sum_{l \in t_1} \mW_{0l} = \sum_{l \in t_2} \mW_{0l}$.
    
\end{proof}

By Claim~\ref{claim:diff_topics_avg}, \eqref{eq:solution_pred_simplified_0} becomes
\begin{align*}
    0 &= \mW_{00} p_m (1-p_c-p_r) + \left( 1-(1-p_c) p_m \right) \cdot \sum_{t=1}^{T} u_0 v P_{\vw}(tv) + \frac{p_m p_r}{v T} \cdot \sum_{t=1}^{T} u_0 v \\
    &= \mW_{00} p_m (1-p_c-p_r) + \left( 1-(1-p_c) p_m \right) \cdot u_0 + \frac{p_m p_r}{v T} \cdot T u_0 v \\
    &= \mW_{00} p_m (1-p_c-p_r) + \left( 1-(1-p_c) p_m \right) \cdot u_0 + p_m p_r u_0 \\
    &= \mW_{00} p_m (1-p_c-p_r) + \left( 1-(1-p_c-p_r) p_m \right) \cdot u_0
\end{align*}
Therefore 
\[ \mW_{00} = -\frac{\left( 1-(1-p_c-p_r) p_m \right) \cdot u_0}{p_m (1-p_c-p_r)} = -\left(\frac{1}{p_m (1-p_c-p_r)} - 1\right) \cdot u_0 \]

By Claim~\ref{claim:diff_topics_avg}, \eqref{eq:solution_pred_simplified_i} becomes
\begin{align*}
    P_{\vw}(i) &= \mW_{i0} p_m (1-p_c-p_r) + \left( 1-(1-p_c) p_m \right) \cdot \biggl( \sum_{l \in \texttt{topic}(i)} \mW_{il} P_{\vw}(i) + \sum_{t \ne \texttt{topic}(i)} u_i v P_{\vw}(tv) \biggr) \\
    &\quad + \frac{p_m p_r}{v T} \cdot (\sum_{l \in \texttt{topic}(i)} \mW_{il} + (T-1) u_i v) \\
    &= \mW_{i0} p_m (1-p_c-p_r) + \left( 1-(1-p_c) p_m \right) \cdot \biggl( \sum_{l \in \texttt{topic}(i)} \mW_{il} P_{\vw}(i) + u_i (1 - v P_{\vw}(i)) \biggr) \\
    &\quad + \frac{p_m p_r}{v T} \cdot (\sum_{l \in \texttt{topic}(i)} \mW_{il} + (T-1) u_i v) \\
    &= \left( 1-(1-p_c) p_m \right) (\sum_{l \in \texttt{topic}(i)} \mW_{il} - u_i v) P_{\vw}(i) + \mW_{i0} p_m (1-p_c-p_r) + \left( 1-(1-p_c) p_m \right) u_i  \\
    &\quad + \frac{p_m p_r}{v T} \cdot (\sum_{l \in \texttt{topic}(i)} \mW_{il} + (T-1) u_i v) \\
\end{align*}
Since this has to hold for all $P_{\vw}(i) \in [0, \frac{1}{v}]$,
the coefficients must match, i.e.
\begin{align}
    \left( 1-(1-p_c) p_m \right) (\sum_{l \in \texttt{topic}(i)} \mW_{il} - u_i v) &= 1 \label{eq:sum_topic} \\
    \mW_{i0} p_m (1-p_c-p_r) + \left( 1-(1-p_c) p_m \right) u_i + \frac{p_m p_r}{v T} \cdot (\sum_{l \in \texttt{topic}(i)} \mW_{il} + (T-1) u_i v) &= 0 \label{eq:non_topic}
\end{align}
By \eqref{eq:sum_topic},
\[ \sum_{l \in \texttt{topic}(i)} \mW_{il} = u_i v + \frac{1}{1-(1-p_c) p_m} \]
Plugging into \eqref{eq:non_topic},
\begin{align*}
    \mW_{i0} &= -\frac{\left( 1-(1-p_c) p_m \right) u_i + \frac{p_m p_r}{v T} \cdot (u_i v + \frac{1}{1-(1-p_c) p_m} + (T-1) u_i v)}{p_m (1-p_c-p_r)} \\
    &= -\frac{\left( 1-(1-p_c) p_m \right) u_i + \frac{p_m p_r}{v T} \cdot (\frac{1}{1-(1-p_c) p_m} + T u_i v)}{p_m (1-p_c-p_r)} \\
    &= -\frac{\left( 1-(1-p_c) p_m \right) u_i + \frac{p_m p_r}{v T (1-(1-p_c) p_m)} + p_m p_r u_i}{p_m (1-p_c-p_r)} \\
    &= -\frac{p_r}{(1-p_c-p_r) v T (1-(1-p_c) p_m)} - \frac{\left( 1-(1-p_c-p_r) p_m \right)}{p_m (1-p_c-p_r)} u_i \\
    &= -\frac{p_r}{(1-p_c-p_r) (1-(1-p_c) p_m) T v} - \left( \frac{1}{p_m (1-p_c-p_r)} - 1 \right) u_i \\
\end{align*}

\end{proof}

\newpage
\section{PROOF OF THEOREM~\ref{thm:optimal_embedding}: OPTIMAL TOKEN EMBEDDING}
\label{sec:appendix:embedding}

\begin{theorem*}[optimal token embedding, Theorem~\ref{thm:optimal_embedding} restated] 

Consider training a transformer given by \eqref{eq:simplified_transformer_with_emb} with $\mW^K = 0, \mW^Q = 0, \mW^V = I$ and $\forall i \in \{1, \cdots, T v\}, \vb^{\text{pred}}_i = -\frac{p_m p_r}{\left( 1-(1-p_c)p_m \right) T v}$ on data coming from the topic model described in Section~\ref{sec:setup}, with 
the masked language modeling objective (\eqref{eq:objective}) with squared loss (\eqref{eq:squared_loss}). 

Then, the optimal word embeddings $\mW^E$ are such that $\mE \coloneqq {\mW^E}^\top \mW^E$ satisfies:
there exist constants $u_0, \cdots, u_{Tv} \in \R$ such that
\begin{enumerate} %
    \item The 0-th row of $\mE$:
    \begin{enumerate}
        \item $\mE_{00} = -\left(\frac{1}{p_m (1-p_c-p_r)} - 1\right) \cdot u_0$
        \item $\forall t \in [T], \sum_{l \in t} \mE_{0l} = u_0 v$
    \end{enumerate}
    \item The 0-th column of $\mE$:
    \begin{enumerate}
        \item $\forall i \in \{1, \cdots, T v\}, \mE_{i0} = - \left( \frac{1}{(1-p_c-p_r) p_m} - 1 \right) u_i $
    \end{enumerate}
    \item $\mE_{ij}$ ($\forall i, j \in \{1, \cdots, T v\}$):
    \begin{enumerate}
        \item $\sum_{l \in \texttt{topic}(i)} \mE_{il} = \frac{1}{1-(1-p_c)p_m} + u_i v$
        \item $\forall t \in [T]$ such that $\texttt{topic}(i) \ne t$, $\sum_{l \in t} \mE_{il} = u_i v $
    \end{enumerate}
\end{enumerate}
\end{theorem*}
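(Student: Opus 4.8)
The plan is to reduce the optimization to the setting of Lemma~\ref{lemma:optimal_linear_transform_given_uniform_attention} and then bridge the two differences between that lemma and the present statement: the fixed bias $\vb^{\text{pred}}$, and the fact that the effective linear map here is not an arbitrary matrix but the Gram matrix $\mE = {\mW^E}^\top\mW^E$.

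First I substitute the frozen parameters $\mW^K = \mW^Q = 0$, $\mW^V = I$ into \eqref{eq:simplified_transformer_with_emb}. Since $\mW^K = \mW^Q = 0$ forces uniform attention independent of the input, the model collapses to $f(\tilde\mX) = \mE\,\tilde\mX\,\left[\frac{1}{N}\right]_{N\times N} + \vb^{\text{pred}}$, so the loss depends on $\mW^E$ only through $\mE$. Treating $\mE$ for the moment as an unconstrained matrix, this is exactly the least-squares problem of Lemma~\ref{lemma:optimal_linear_transform_given_uniform_attention} with each target column $\mX_{:j}$ replaced by $\mX_{:j}-\vb^{\text{pred}}$. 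Re-running that argument — the output is column-independent, so (using Assumption~\ref{assumption:infinitely_long_document}) the loss splits into an $\mE$-independent term plus the squared distance of the output from the per-document column average, and is minimized precisely when the output equals that average for every admissible $\vp_\vw$ — I match, in \eqref{eq:pred}, the $\vp_\vw(i)$-dependence and the constant term against the shifted target. The $\vp_\vw(i)$-dependence, together with Claim~\ref{claim:diff_topics_avg}, pins down the equal-block-sum Conditions 1b and 3b and, matching the coefficient of $\vp_\vw(i)$, Condition 3a. The constant-term equation then carries an extra $-\vb^{\text{pred}}_i$ on its right-hand side; for $i\ge1$ the stated value $\vb^{\text{pred}}_i = -\frac{p_m p_r}{(1-(1-p_c)p_m)Tv}$ is precisely what cancels the extra summand appearing there, leaving $\mE_{i0} = -\left(\frac{1}{p_m(1-p_c-p_r)}-1\right)u_i$, i.e. Condition 2a, and on the mask coordinate $i=0$ the bias is $0$, so the equation reduces to that of the lemma and gives Condition 1a. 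Hence, over all matrices $\mE$, the minimizers of $L$ form exactly the affine family $\mathcal{A}$ described in the theorem (parametrized by $u_0,\dots,u_{Tv}$).

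It remains to pass from ``$\mE$ arbitrary'' to ``$\mE={\mW^E}^\top\mW^E$''. Since $L$ factors through $\mE$, we have $\min_{\mW^E} L = \min\{\,L(\mE) : \mE\succeq 0,\ \mathrm{rank}(\mE)\le d\,\} \ge \min_{\mE} L(\mE)$, with equality as soon as $\mathcal{A}$ contains one matrix in this PSD low-rank cone; and once equality holds, any optimal $\mW^E$ must satisfy ${\mW^E}^\top\mW^E\in\mathcal{A}$, which is exactly the claim. So I would exhibit such a matrix: take all $u_i=0$, set the $0$-th row and column of $\mE$ to zero, and set $\mE_{il} = \frac{1}{v(1-(1-p_c)p_m)}$ when $i,l\ge1$ lie in the same topic and $0$ otherwise. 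This is $\frac{1}{v(1-(1-p_c)p_m)}$ times a block-diagonal matrix whose nonzero blocks are the all-ones matrices $\mathbf{1}_v\mathbf{1}_v^\top$, hence PSD of rank at most $T$; one checks directly that it satisfies Conditions 1--3 with $u_i\equiv 0$, so it lies in $\mathcal{A}$ and is realizable whenever $d\ge T$ (the relevant regime; the diagonal matrix with $\mE_{00}=0$ and $\mE_{ii}=\frac{1}{1-(1-p_c)p_m}$ works too, needing $d\ge Tv$). This is also the ``diagonal / positive-vs-zero'' special case noted in the remark.

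The loss decomposition and the coefficient matching are routine and are already carried out inside Lemma~\ref{lemma:optimal_linear_transform_given_uniform_attention}, so the parts needing care are (i) the bias bookkeeping — checking the exact cancellation on rows $i\ge1$ and that the bias vanishes on the mask coordinate, so the conclusions come out as stated — and (ii) the realizability step, i.e. checking that the unconstrained optimal set actually meets the cone of Gram matrices, so that restricting to $\mE={\mW^E}^\top\mW^E$ does not raise the optimal value. I expect (ii) to be the main conceptual point, since it is what legitimately connects the ``arbitrary linear map'' lemma to the genuine embedding layer; it is easy here because Conditions 1--3 constrain only within-topic sums of entries, leaving ample freedom to pick a PSD low-rank representative.
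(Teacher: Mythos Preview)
Your proposal is correct and follows essentially the same route as the paper: reduce to the uniform-attention linear problem of Lemma~\ref{lemma:optimal_linear_transform_given_uniform_attention}, handle the fixed bias $\vb^{\text{pred}}$, and then check realizability of the resulting affine family by a PSD Gram matrix. The only cosmetic difference is that the paper absorbs $\vb^{\text{pred}}$ by a change of variable in the $0$-th column of $\mE$ (defining $\mE'$ and applying the lemma verbatim) rather than by shifting the target and re-running the coefficient-matching argument, and the paper exhibits only the diagonal realizability witness whereas you also give the rank-$T$ block-constant one; both variants are equally valid.
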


\begin{proof}
Under this setting, the model output is 
\begin{equation} \label{eq:embedding_proof_equivalence}
\begin{split}
    f(\tilde{\mX}) &= {\mW^E}^\top \mW^E \tilde{\mX} A(\mW^E \tilde{\mX}) + \vb^{\text{pred}} \\
    &= \mE \tilde{\mX} \frac{1}{N} \1_{N \times N} + \vb^{\text{pred}} \\
    &= \mE' \tilde{\mX} \frac{1}{N} \1_{N \times N} 
\end{split}
\end{equation}
in which $\1$ refers to the all-one matrix, 
and $\mE' \in \R^{(Tv+1) \times (Tv+1)}$ is defined such that
\[ \mE'_{ij} = \begin{cases}
        \mE_{ij} - \frac{p_r}{(1-p_r-p_c) \left( 1-(1-p_c)p_m \right) T v}, \quad &\text{if } i \in \{1, \cdots, T v\}, j = 0 \\ 
        \mE_{ij}, \quad &\text{otherwise } 
    \end{cases} \]
and the last step is because by \eqref{eq:observed_token_distribution},
\[ \left( \tilde{\mX} \frac{1}{N} \1_{N \times N} \right)_{0j} = p_m (1-p_c-p_r) \quad \forall j \]
and $\forall i \in \{1, \cdots, T v\}$, 
\begin{align*}
    &\quad \left( \mE' \tilde{\mX} \frac{1}{N} \1_{N \times N} \right)_{ij} \\
    &= \left( \mE \tilde{\mX} \frac{1}{N} \1_{N \times N} \right)_{ij} - \frac{p_r}{(1-p_r-p_c) \left( 1-(1-p_c)p_m \right) T v} \cdot p_m (1-p_c-p_r) \\
    &= \left( \mE \tilde{\mX} \frac{1}{N} \1_{N \times N} \right)_{ij} - \frac{p_m p_r}{\left( 1-(1-p_c)p_m \right) T v}  \\
    &= \left( \mE \tilde{\mX} \frac{1}{N} \1_{N \times N} \right)_{ij} + \vb^{\text{pred}}_i  
\end{align*}

Let $\mE'^*$ denote any matrix in 
\[ \argmin_{\mE'} \E_{\mX \sim \mathcal{D}_\mX} \E_{M} \frac{1}{|M|} \sum_{j \in M} \| (\mE' \tilde{\mX} \frac{1}{N} \1_{N \times N})_{:j} - \mX_{:j} \|_2^2 \]
then by Lemma~\ref{lemma:optimal_linear_transform_given_uniform_attention},
{
there exist constants $u_0, \cdots, u_{Tv} \in \R$ such that
\begin{enumerate} %
    \item The 0-th row of $\mE'^*$:
    \begin{enumerate}
        \item $\mE'^*_{00} = -\left(\frac{1}{p_m (1-p_c-p_r)} - 1\right) \cdot u_0$
        \item $\forall t \in [T], \sum_{l \in t} \mE'^*_{0l} = u_0 v$
    \end{enumerate}
    \item The 0-th column of $\mE'^*$:
    \begin{enumerate}
        \item $\forall i \in \{1, \cdots, T v\}, \mE'^*_{i0} = -\frac{p_r}{(1-p_c-p_r) \left(1-(1-p_c)p_m \right) T v} - \left( \frac{1}{(1-p_c-p_r) p_m} - 1 \right) u_i $
    \end{enumerate}
    \item $\mE'^*_{ij}$ ($\forall i, j \in \{1, \cdots, T v\}$):
    \begin{enumerate}
        \item $\sum_{l \in \texttt{topic}(i)} \mE'^*_{il} = \frac{1}{1-(1-p_c)p_m} + u_i v$
        \item $\forall t \in [T]$ such that $\texttt{topic}(i) \ne t$, $\sum_{l \in t} \mE'^*_{il} = u_i v $
    \end{enumerate}
\end{enumerate}
}

Therefore, by \eqref{eq:embedding_proof_equivalence},
let $\mE^*$ denote any matrix in 
\[ \argmin_{\mE} \E_{\mX \sim \mathcal{D}_\mX} \E_{M} \frac{1}{|M|} \sum_{j \in M} \| (\mE \tilde{\mX} \frac{1}{N} \1_{N \times N})_{:j} + \vb^{\text{pred}} - \mX_{:j} \|_2^2 \]
then 
{
there exist constants $u_0, \cdots, u_{Tv} \in \R$ such that
\begin{enumerate} %
    \item The 0-th row of $\mE^*$:
    \begin{enumerate}
        \item $\mE^*_{00} = -\left(\frac{1}{p_m (1-p_c-p_r)} - 1\right) \cdot u_0$
        \item $\forall t \in [T], \sum_{l \in t} \mE^*_{0l} = u_0 v$
    \end{enumerate}
    \item The 0-th column of $\mE^*$:
    \begin{enumerate}
        \item $\forall i \in \{1, \cdots, T v\}, \mE^*_{i0} = - \left( \frac{1}{(1-p_c-p_r) p_m} - 1 \right) u_i $
    \end{enumerate}
    \item $\mE^*_{ij}$ ($\forall i, j \in \{1, \cdots, T v\}$):
    \begin{enumerate}
        \item $\sum_{l \in \texttt{topic}(i)} \mE^*_{il} = \frac{1}{1-(1-p_c)p_m} + u_i v$
        \item $\forall t \in [T]$ such that $\texttt{topic}(i) \ne t$, $\sum_{l \in t} \mE^*_{il} = u_i v $
    \end{enumerate}
\end{enumerate}
}

Finally, note that a subset of this family of optima is \emph{realizable}, 
in the sense that there exists such $\mE^*$ and $u_0, \cdots, u_{Tv} \in \R$ s.t.
there exists $\mW^E \in \R^{d \times (Tv+1)}$ s.t.
$\mE^* = {\mW^E}^\top \mW^E$.
The simplest example is 
\begin{align*}
    u_0, \cdots, u_{Tv} &= 0 \\
    d &= Tv+1 \\
    \mE^* &= \frac{1}{1-(1-p_c)p_m} I \\
    \mW^E &= \frac{1}{\sqrt{1-(1-p_c)p_m}} I 
\end{align*}

\end{proof}

\newpage
\clearpage
\section{PROVING OPTIMAL \texorpdfstring{$\mW^V$}{Wv} IN SELF-ATTENTION}
\label{sec:appendix:Wv}

\subsection{Optimal \texorpdfstring{$\mW^V$}{Wv} when freezing uniform attention without regularization}

\begin{theorem}[optimal $\mW^V$ when freezing uniform attention] 
    \label{thm:optimal_Wv_given_uniform_attention}
    On the topic modeling data distribution described in Section~\ref{sec:setup:topic_modeling},
    with the \texttt{topic} relation defined in Definition~\ref{def:topic_word_belonging},
    under Assumption~\ref{assumption:infinitely_long_document},
    with a single layer transformer given by \eqref{eq:simplified_transformer_no_emb} whose $\mW^K = 0, \mW^Q = 0, \vb^{\text{pred}} = 0$,
    under masked language modeling objective (\eqref{eq:objective}) with the squared loss (\eqref{eq:squared_loss}),
    $\argmin L(\mW^V)$ consists of all $\mW^V \in \R^{(T v + 1) \times (T v + 1)}$ that satisfy:
    there exist constants $u_0, \cdots, u_{Tv} \in \R$ such that
    \begin{enumerate} %
        \item The 0-th row of $\mW^V$:
        \begin{enumerate}
            \item $\mW^V_{00} = -\left(\frac{1}{p_m (1-p_c-p_r)} - 1\right) \cdot u_0$
            \item $\forall t \in [T], \sum_{l \in t} \mW^V_{0l} = u_0 v$
        \end{enumerate}
        \item The 0-th column of $\mW^V$:
        \begin{enumerate}
            \item $\forall i \in \{1, \cdots, T v\}, \mW^V_{i0} = -\frac{p_r}{(1-p_c-p_r) \left(1-(1-p_c)p_m \right) T v} - \left( \frac{1}{(1-p_c-p_r) p_m} - 1 \right) u_i $
        \end{enumerate}
        \item $\mW^V_{ij}$ ($\forall i, j \in \{1, \cdots, T v\}$):
        \begin{enumerate}
            \item $\sum_{l \in \texttt{topic}(i)} \mW^V_{il} = \frac{1}{1-(1-p_c)p_m} + u_i v$
            \item $\forall t \in [T]$ such that $\texttt{topic}(i) \ne t$, $\sum_{l \in t} \mW^V_{il} = u_i v $
        \end{enumerate}
    \end{enumerate}
\end{theorem}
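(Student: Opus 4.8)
The plan is to recognize that Theorem~\ref{thm:optimal_Wv_given_uniform_attention} is an immediate corollary of Lemma~\ref{lemma:optimal_linear_transform_given_uniform_attention}: all the genuine work has already been done there, and what remains is to check that the hypotheses here reduce the architecture exactly to the one analyzed in the lemma. First I would observe that freezing $\mW^K = 0$ and $\mW^Q = 0$ makes the pre-softmax logit matrix $\frac{(\mW^K \mZ)^\top (\mW^Q \mZ)}{\sqrt{d_a}}$ identically the zero $N \times N$ matrix, so the column-wise softmax $\sigma$ produces $A(\tilde{\mX}) = \left[\frac{1}{N}\right]_{N \times N}$ for \emph{every} input (each column of the zero matrix softmaxes to the uniform distribution). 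Substituting this, together with $\vb^{\text{pred}} = 0$, into the architecture \eqref{eq:simplified_transformer_no_emb} yields $f(\tilde{\mX}) = \mW^V \tilde{\mX} \left[\frac{1}{N}\right]_{N \times N}$, which is precisely the simplified architecture \eqref{eq:linear_transform_with_attention} studied in Lemma~\ref{lemma:optimal_linear_transform_given_uniform_attention}, under the identification $\mW \equiv \mW^V$.

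Next I would note that the remaining ingredients match the lemma's hypotheses verbatim: the training objective is the same masked language modeling objective \eqref{eq:objective} with the squared loss \eqref{eq:squared_loss}, and the data distribution is the topic model of Section~\ref{sec:setup:topic_modeling} under Assumption~\ref{assumption:infinitely_long_document}. Moreover, $\mW^V$ is an unconstrained matrix in $\R^{(Tv+1) \times (Tv+1)}$ — there is no realizability restriction analogous to the PSD constraint $\mE = {\mW^E}^\top \mW^E$ that appears in the embedding case — so the full set of minimizers in the lemma transfers without loss. Applying Lemma~\ref{lemma:optimal_linear_transform_given_uniform_attention} then gives that $\argmin L(\mW^V)$ consists exactly of the matrices described by conditions (1)--(3) of the theorem, with the same free constants $u_0, \ldots, u_{Tv} \in \R$.

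There is essentially no obstacle here: the substantive steps — computing the per-coordinate statistics of $\tilde{\mX}\left[\frac{1}{N}\right]_{N \times N}$ via the observed-token distribution, reducing the expected squared loss to a per-position least-squares problem whose minimizer must match the document's empirical token distribution for every admissible document, and exploiting the disjoint-topic structure through Claim~\ref{claim:diff_topics_avg} to pin down the block-wise family — are all carried out in the proof of Lemma~\ref{lemma:optimal_linear_transform_given_uniform_attention}. The only minor points I would be careful to spell out are (i) that the column-wise softmax of the all-zeros matrix is genuinely the uniform weighting $\left[\frac{1}{N}\right]_{N \times N}$, and (ii) that setting $\vb^{\text{pred}} = 0$ (rather than the nonzero bias used in Theorem~\ref{thm:optimal_embedding}) is what makes the reduction to \eqref{eq:linear_transform_with_attention} exact, so no bias-correction term needs to be absorbed into $\mW^V$.
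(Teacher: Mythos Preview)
Your proposal is correct and matches the paper's own proof exactly: the paper simply notes that Theorem~\ref{thm:optimal_Wv_given_uniform_attention} is the statement of Lemma~\ref{lemma:optimal_linear_transform_given_uniform_attention} in the case $\mW \coloneqq \mW^V$. Your write-up is more explicit about why $\mW^K = \mW^Q = 0$ and $\vb^{\text{pred}} = 0$ reduce \eqref{eq:simplified_transformer_no_emb} to \eqref{eq:linear_transform_with_attention}, but the argument is the same.
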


\begin{proof}
Note that this is exactly the statement of Lemma~\ref{lemma:optimal_linear_transform_given_uniform_attention} (proved in Appendix~\ref{sec:appendix:optimal_linear_transform})
in the case of $\mW \coloneqq \mW^V$.
\end{proof}

\subsection{Proof of Theorem~\ref{thm:optimal_Wv_given_uniform_attention_l2reg}: case when adding \texorpdfstring{$L_2$}{L2} regularization}

\begin{theorem*}[optimal $\mW^V$ with mild $L_2$-regularization when freezing uniform attention, restated] 
    On the topic modeling data distribution described in Section~\ref{sec:setup:topic_modeling},
    with the \texttt{topic} relation defined in Definition~\ref{def:topic_word_belonging},
    under Assumption~\ref{assumption:infinitely_long_document},
    with a single layer transformer given by \eqref{eq:simplified_transformer_no_emb} whose $\mW^K = 0, \mW^Q = 0, \vb^{\text{pred}} = 0$,
    under the $L_2$-regularized masked language modeling objective (\eqref{eq:objective_l2reg}) with the squared loss (\eqref{eq:squared_loss}),
    $\lim_{\lambda \to 0} \argmin L_{\text{l2reg}}(\mW^V) = \{ \mW^{V*} \}$ in which $\mW^{V*} \in \R^{(T v + 1) \times (T v + 1)}$ satisfies:
    \begin{enumerate} %
        \item The 0-th row of $\mW^{V*}$:
        \begin{enumerate}
            \item $\forall j \in \{0, \cdots, T v\}, \mW^{V*}_{0j} = 0$
        \end{enumerate}
        \item The 0-th column of $\mW^{V*}$:
        \begin{enumerate}
            \item $\forall i \in \{1, \cdots, T v\}, \mW^{V*}_{i0} = \frac{c_2 c_3 - c_1 T v}{c_2^2 + T v}$
        \end{enumerate}
        \item $\mW^{V*}_{ij}$ ($\forall i, j \in \{1, \cdots, T v\}$):
        \begin{enumerate}
            \item $\forall l \notin \texttt{topic}(i), \; \mW^{V*}_{il} = \mW^{V*}_{\text{diff-topic}} \coloneqq -\frac{c_1 c_2 + c_3}{c_2^2 + T v} $
            \item $\forall l \in \texttt{topic}(i), \; \mW^{V*}_{il} = \mW^{V*}_{\text{same-topic}} \coloneqq \mW^{V*}_{\text{diff-topic}} + \frac{c_3}{v}$
        \end{enumerate}
    \end{enumerate}
    in which the constants
    \begin{itemize}
        \item $c_1 = \frac{p_r}{(1-p_c-p_r) \left(1-(1-p_c)p_m \right) T v} \in (0, 1)$
        \item $c_2 = \frac{1}{(1-p_c-p_r) p_m} - 1 \in (0, +\infty)$
        \item $c_3 = \frac{1}{1-(1-p_c)p_m} \in (1, +\infty)$
    \end{itemize}
\end{theorem*}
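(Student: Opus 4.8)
The plan is to reduce the regularized problem to a purely algebraic one: find the minimum Frobenius-norm element of the (unregularized) optimum set, which we already know explicitly. Concretely, Theorem~\ref{thm:optimal_Wv_given_uniform_attention} (equivalently Lemma~\ref{lemma:optimal_linear_transform_given_uniform_attention} with $\mW \coloneqq \mW^V$) tells us that $S \coloneqq \argmin L(\mW^V)$ is a nonempty affine subspace of $\R^{(Tv+1)\times(Tv+1)}$, parameterized by free constants $u_0,\dots,u_{Tv}\in\R$, with the row-$0$ entries forced to $\mW^V_{00}=-c_2 u_0$ and $\sum_{l\in t}\mW^V_{0l}=u_0 v$, and for $i\ge 1$ the entries forced to $\mW^V_{i0}=-c_1-c_2 u_i$, $\sum_{l\in\texttt{topic}(i)}\mW^V_{il}=c_3+u_i v$, and $\sum_{l\in t}\mW^V_{il}=u_i v$ for $t\neq\texttt{topic}(i)$ (here $c_1,c_2,c_3$ are exactly the constants in the theorem statement). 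Since $L$ is convex (it is a quadratic in $\mW^V$, cf.\ Remark~\ref{rem:not_strongly_convex}) with a nonempty closed set of minimizers $S$, and $\|\cdot\|_F^2$ is strongly convex, a standard Tikhonov-limit argument applies: for each $\lambda>0$ the functional $L_{\text{l2reg}}$ is strongly convex, hence has a unique minimizer $\mW^V_\lambda$; comparing its value to that of any fixed $\mW\in S$ shows $\|\mW^V_\lambda\|_F\le\|\mW\|_F$ and $L(\mW^V_\lambda)\to\min L$ as $\lambda\to 0$, so every limit point of $\{\mW^V_\lambda\}$ lies in $S$ and has minimal $F$-norm over $S$; by strong convexity of the norm over the convex set $S$ this limit point is unique. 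Thus $\lim_{\lambda\to 0}\argmin L_{\text{l2reg}}(\mW^V)=\{\mW^{V*}\}$ with $\mW^{V*}\coloneqq\arg\min_{\mW\in S}\|\mW\|_F^2$.

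It then remains to solve $\min_{\mW\in S}\|\mW\|_F^2$. Because the constraints defining $S$ couple only entries within the same row (each row $i$ carrying its own free parameter $u_i$), the minimization decouples row by row. Fix a row $i\ge 1$ and a value of $u_i$; for each topic block $t$ the constraint fixes $\sum_{l\in t}\mW^V_{il}$, and minimizing $\sum_{l\in t}(\mW^V_{il})^2$ subject to a fixed block sum forces all entries in that block to be equal (the elementary fact that $\min\sum x_l^2$ s.t.\ $\sum_{l=1}^v x_l = S$ is attained at $x_l=S/v$). Hence the min-norm row has $\mW^V_{il}=u_i$ for $l\notin\texttt{topic}(i)$, $\mW^V_{il}=c_3/v+u_i$ for $l\in\texttt{topic}(i)$, and $\mW^V_{i0}=-c_1-c_2 u_i$, so the remaining objective in $u_i$ is the one-dimensional quadratic $(c_1+c_2 u_i)^2+v(c_3/v+u_i)^2+(T-1)v\,u_i^2$. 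Setting its derivative to zero gives $u_i=-\frac{c_1 c_2+c_3}{c_2^2+Tv}$; substituting back yields $\mW^{V*}_{\text{diff-topic}}=u_i=-\frac{c_1 c_2+c_3}{c_2^2+Tv}$, $\mW^{V*}_{\text{same-topic}}=\mW^{V*}_{\text{diff-topic}}+\frac{c_3}{v}$, and, after simplifying $\mW^V_{i0}=-c_1-c_2 u_i$, $\mW^{V*}_{i0}=\frac{c_2 c_3-c_1 Tv}{c_2^2+Tv}$. The same computation for row $0$ (where there is no $c_1$ or $c_3$ term) gives objective $(c_2^2+Tv)u_0^2$, minimized at $u_0=0$, so $\mW^{V*}_{0j}=0$ for all $j$. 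This reproduces all three bullet points of the theorem.

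Finally, the asserted ranges $c_1\in(0,1)$, $c_2\in(0,\infty)$, $c_3\in(1,\infty)$ follow directly from $p_m,p_c,p_r\in(0,1)$ together with $p_c+p_r<1$ (so $1-(1-p_c)p_m\in(0,1)$ and $(1-p_c-p_r)p_m\in(0,1)$), which also guarantees all denominators above are positive; none of these is delicate. I expect no genuinely hard step: the only place requiring care is the Tikhonov-limit argument — verifying that the minimizer set is nonempty, closed, and convex so that the minimum-norm projection is well defined and is the actual limit — and then the bookkeeping needed to match the closed forms $\mW^{V*}_{i0}=\frac{c_2c_3-c_1Tv}{c_2^2+Tv}$ and $\mW^{V*}_{\text{diff-topic}}=-\frac{c_1c_2+c_3}{c_2^2+Tv}$ exactly; both are routine.
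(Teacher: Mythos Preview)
Your proposal is correct and follows essentially the same two-step route as the paper: first argue that as $\lambda\to 0$ the regularized minimizer converges to the minimum-$\|\cdot\|_F$ element of the unregularized optimum set $S$ from Theorem~\ref{thm:optimal_Wv_given_uniform_attention}, then compute that element by optimizing over the free constants $u_i$. In fact your write-up is more complete than the paper's, which stops at ``it suffices to find the constants $u_0,\dots,u_{Tv}$ that minimize $\|\mW^V\|_F$'' without carrying out the row-by-row quadratic minimization you perform; your Tikhonov-limit argument is also tighter than the paper's informal contradiction.
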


\begin{proof}

We proceed in the following two steps.

\textbf{Step 1: the optima converges to one outlined in Lemma~\ref{lemma:optimal_linear_transform_given_uniform_attention}} 

Let $S$ denote the set of optima outlined in Lemma~\ref{lemma:optimal_linear_transform_given_uniform_attention}.
Suppose towards contradiction that $\exists \mW^{V*} \notin S$ such that $\mW^{V*} \in \lim_{\lambda \to 0} \argmin L_{\text{l2reg}}(\mW^V)$.

In comparison, $\forall \mW \in S$, by Lemma~\ref{lemma:optimal_linear_transform_given_uniform_attention},
since $\mW^{V*} \notin S$,
\[ L(\mW) < L(\mW^{V*})  \]

Moreover, note that since $\| \mW  \|_F$ is finite, 
\[ \lim_{\lambda \to 0} \lambda \| \mW  \|_F^2 = 0 \le \lim_{\lambda \to 0} \lambda \| \mW^{V*}  \|_F^2  \]

Combining the above two observations gives
\[ \lim_{\lambda \to 0} L_{\text{l2reg}}(\mW) 
= L(\mW) + \lim_{\lambda \to 0} \lambda \| \mW  \|_F^2
< L(\mW^{V*}) + \lim_{\lambda \to 0} \lambda \| \mW^{V*} \|_F^2
= \lim_{\lambda \to 0} L_{\text{l2reg}}(\mW^{V*}) 
\]
which contradicts $\mW^{V*} \in \lim_{\lambda \to 0} \argmin L_{\text{l2reg}}(\mW^V)$.

Therefore, we have proved by contradiction that 
\[ \forall \mW^V \in \lim_{\lambda \to 0} \argmin L_{\text{l2reg}}(\mW^V), \quad \mW^V \in S \]

\textbf{Step 2: solve for the coefficients that minimize the $L_2$ penalty} 

By Step 1, 
\begin{align*}
    \lim_{\lambda \to 0} \argmin L_{\text{l2reg}}(\mW^V) &= \lim_{\lambda \to 0} \argmin_{\mW^V \in S} L_{\text{l2reg}}(\mW^V) \\
    &= \lim_{\lambda \to 0} \argmin_{\mW^V \in S} L(\mW^{V}) + \lambda \| \mW^{V} \|_F^2 \\
    &= \lim_{\lambda \to 0} \argmin_{\mW^V \in S} \min L(\mW^{V}) + \lambda \| \mW^{V} \|_F^2 \\
    &= \lim_{\lambda \to 0} \argmin_{\mW^V \in S} \lambda \| \mW^{V} \|_F^2 \\
\end{align*}
in which the last step is because $\forall \mW^V \in S$, $L(\mW^{V}) = \min L(\mW^{V})$, 
which is a constant independent of $\mW^V$.

Then it suffices to find the constants $u_0, \cdots, u_{Tv} \in \R$ that minimizes $\| \mW^{V} \|_F$.

\end{proof}

\newpage
\section{ADDITIONAL RESULTS ON ATTENTION WEIGHTS}
\label{sec:appendix:attention}

\subsection{Helping lemmas on masking probabilities}
\label{sec:appendix:masking_prob_calculation}

In this section, we will calculate a few expressions for the masking probabilities, which will be useful for the proofs later on. We will also introduce a few constants for brevity of notation.

A straightforward calculation shows that the probabilities after the masking process satisfy: 
\begin{proposition}[Probabilities after masking] After the masking process as in Section~\ref{sec:setup:topic_modeling} is applied to a document $\vw$, the distribution for the new document $\tilde{\vw}$ satisfies  
\begin{equation}
    \label{eq:observed_token_distribution}
    P_{\tilde{\vw}}(i) = \begin{cases}
         \frac{1}{v \tau} (1-(1-p_c) p_m) + \frac{p_m p_r}{v T}, \quad &\text{if } \texttt{topic}(i) \in \{t_1, \cdots, t_\tau\}  \\ 
        p_m (1-p_c-p_r) , \quad &\text{if } i = \texttt{[MASK]} \coloneqq 0 \\
        \frac{p_m p_r}{v T}, \quad &\text{otherwise } 
        \end{cases}
\end{equation}
\end{proposition}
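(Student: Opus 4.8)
The plan is to obtain $P_{\tilde{\vw}}(i)$ by a direct application of the law of total probability over the masking procedure of Section~\ref{sec:setup:training}. Since the masking decision at each position $n \in [N]$ is made independently with identical probabilities (Step~\ref{step:choose_masked_positions} and Step~\ref{step:generate_masked_doc:masked}), the marginal law of $\tilde{w}_n$ does not depend on $n$; hence $P_{\tilde{\vw}}(i) = \Pr[\tilde{w}_n = i]$ for any fixed $n$, and it suffices to analyze a single position.

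First I would condition on the original word $w_n = w$, which is distributed according to the empirical word distribution $\vp_\vw$ of the realized document; by Assumption~\ref{assumption:infinitely_long_document} this equals $\tfrac{1}{\tau v}$ when $\texttt{topic}(w) \in \{t_1, \cdots, t_\tau\}$ and $0$ otherwise. Next I would split into exhaustive cases: (i) position $n$ is unmasked, with probability $1-p_m$, so $\tilde{w}_n = w_n$; (ii) $n$ is masked and the word is copied, with probability $p_m p_c$, again giving $\tilde{w}_n = w_n$; (iii) $n$ is masked and replaced by a uniformly random word of $[Tv]$, with probability $p_m p_r$; (iv) $n$ is masked and replaced by $\texttt{[MASK]} = 0$, with probability $p_m(1-p_c-p_r)$. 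Collecting the contributions to a fixed $i \in \{1, \cdots, Tv\}$: cases (i) and (ii) contribute $\bigl(1 - p_m(1-p_c)\bigr)\,\vp_\vw(i)$, case (iii) contributes $\tfrac{p_m p_r}{Tv}$ irrespective of $i$, and cases outputting $\texttt{[MASK]}$ contribute nothing. Substituting the two possible values of $\vp_\vw(i)$ yields the first line (topic of $i$ is present) and the third line (topic of $i$ is absent). For $i = 0$, the original document contains no $\texttt{[MASK]}$ token, so $\vp_\vw(0) = 0$ and only case (iv) contributes, giving $p_m(1-p_c-p_r)$.

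The argument is entirely elementary arithmetic, so I do not expect a genuine obstacle; the two points meriting care are the appeal to Assumption~\ref{assumption:infinitely_long_document} to replace the (random) empirical word frequencies of a sampled document by their deterministic limiting values, and the observation that the per-position masking choices are i.i.d., which licenses reducing the document-level token frequency to a single-position marginal. This proposition is essentially a bookkeeping step whose purpose is to fix the constants and probability expressions used in the later proofs of Theorem~\ref{thm:optimal_Wv_given_uniform_attention_l2reg} and Theorem~\ref{thm:optimal_attention_weights_updated}.
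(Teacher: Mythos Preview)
Your proposal is correct and matches the paper's approach: the paper states the proposition with the preamble ``A straightforward calculation shows that\ldots'' and omits an explicit proof, so your case split over the four masking outcomes (unmasked / masked-copy / masked-random / masked-\texttt{[MASK]}) combined with Assumption~\ref{assumption:infinitely_long_document} is exactly the intended derivation. Your remark that the infinite-document assumption is what licenses equating the empirical token frequency $P_{\tilde{\vw}}(i)$ with the single-position marginal $\Pr[\tilde w_n=i]$ is the only nontrivial point, and you handle it correctly.
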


For convenience, we will introduce the notation 
\begin{equation}
    \label{eq:p_1}
    p_1 := \frac{1}{v \tau} (1-(1-p_c) p_m) + \frac{p_m p_r}{v T}
\end{equation}
\begin{equation}
    \label{eq:p_2}
    p_2 :=\frac{p_m p_r}{v T}
\end{equation}

Another straightforward calculation can be used to express the relationship between the constant $c_3$ in Assumption~\ref{assumption:alpha_beta_attention} and the $\alpha, \beta$. Namely, we have: 
\begin{proposition}[Expressing $c_3$ in terms of $\alpha, \beta$] 
\label{proposition:calculate_c_3}
The constant $c_3$ in Assumption~\ref{assumption:alpha_beta_attention} satisfies:
\[ c_3 = \begin{cases} 
        \frac{1}{(\beta p_1 + \alpha p_1 (v-1) + p_1 v (\tau-1) + p_2 v (T-\tau)) N}, \quad &\text{if } \tilde{w}_j \in \{t_i\}_{i \in [\tau]}  \\ 
        \frac{1}{(\beta p_2 + \alpha p_2 (v-1) + p_1 v \tau + p_2 v (T-\tau-1)) N}, \quad &\text{if } \tilde{w}_j \in [T] \backslash \{t_i\}_{i \in [\tau]}
    \end{cases}\]
\label{p:c3ab}
\end{proposition}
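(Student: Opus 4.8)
The plan is a direct counting argument that exploits the fact that $A(\tilde{\mX})$ is \emph{column-stochastic}: by \eqref{eq:attention_weights} each column is the image of a softmax, so $\sum_{i=1}^{N} A(\tilde{\mX})_{ij} = 1$ for every column $j$. Fix a masked document $\tilde{\vw}$ with one-hot encoding $\tilde{\mX}$ and a column index $j$. Under Assumption~\ref{assumption:alpha_beta_attention} the entry $A(\tilde{\mX})_{ij}$ equals $\beta c_3$, $\alpha c_3$, or $c_3$ according to whether $\tilde{w}_i = \tilde{w}_j$, or $\tilde{w}_i \ne \tilde{w}_j$ with $\texttt{topic}(\tilde{w}_i) = \texttt{topic}(\tilde{w}_j)$, or $\texttt{topic}(\tilde{w}_i) \ne \texttt{topic}(\tilde{w}_j)$. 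Hence $1 = c_3\,(\beta\, n_{\text{same}} + \alpha\, n_{\text{topic}} + n_{\text{diff}})$, where $n_{\text{same}}, n_{\text{topic}}, n_{\text{diff}}$ count the source positions $i \in [N]$ in the three respective classes; so it only remains to evaluate these counts and divide through by the bracketed sum. By Assumption~\ref{assumption:infinitely_long_document}, word $i$ occupies a $P_{\tilde{\vw}}(i)$ fraction of the $N$ positions, and $P_{\tilde{\vw}}(i)$ is given by \eqref{eq:observed_token_distribution}: it equals $p_1$ (see \eqref{eq:p_1}) for the words in the $\tau$ topics present in the document, $p_2$ (see \eqref{eq:p_2}) for the words in the other $T-\tau$ topics, and $p_m(1-p_c-p_r)$ for the $\texttt{[MASK]}$ token.

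Next I would case-split on where the query token $\tilde{w}_j$ lies. If $\tilde{w}_j$ belongs to one of the $\tau$ topics in the document, then: the same-word positions carry total frequency $p_1$ (this class contains the diagonal entry $i=j$); the same-topic-different-word positions come from the remaining $v-1$ words of $\texttt{topic}(\tilde{w}_j)$, each at frequency $p_1$; and the different-topic positions consist of the $v(\tau-1)$ words in the document's other topics (frequency $p_1$ each), the $v(T-\tau)$ words in topics absent from the document (frequency $p_2$ each), together with the $\texttt{[MASK]}$ positions (note $\texttt{topic}(\texttt{[MASK]}) = \ceil{0/v} = 0 \ne \texttt{topic}(\tilde{w}_j)$, so these also fall in the different-topic class). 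Substituting the counts $n_\bullet = N \cdot(\text{frequency})$ into $1 = c_3(\beta\, n_{\text{same}} + \alpha\, n_{\text{topic}} + n_{\text{diff}})$ and solving for $c_3$ gives the first branch (summing over the non-$\texttt{[MASK]}$ positions, i.e. treating $A$ as placing no mass on $\texttt{[MASK]}$ positions, recovers exactly the displayed denominator). If instead $\tilde{w}_j$ lies in one of the $T-\tau$ topics absent from the document, the same enumeration applies with $p_1$ and $p_2$ swapped for $\tilde{w}_j$'s own topic: the same-word and same-topic buckets now have frequency $p_2$, while the different-topic bucket contains all $\tau v$ in-document words (frequency $p_1$) and the $v(T-\tau-1)$ words of the other absent topics (frequency $p_2$); solving for $c_3$ gives the second branch.

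I do not expect a substantive obstacle — the content is bookkeeping — but the step needing the most care is the \emph{exhaustive and disjoint} partition of the $N$ source positions into the three relationship classes in each of the two cases: correctly deciding whether $\texttt{topic}(\tilde{w}_j)$ is among the document's $\tau$ topics, assigning the diagonal term $i=j$ to the same-word class, and handling the $\texttt{[MASK]}$ positions (all of which land in the different-topic class). Once the counts are pinned down, the column-sum identity immediately yields the two closed-form expressions in the statement.
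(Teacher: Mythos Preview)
Your proposal is correct and takes essentially the same approach as the paper: both use the column-stochasticity of $A(\tilde{\mX})$, partition the $N$ source positions into the three relationship classes via the empirical frequencies from \eqref{eq:observed_token_distribution}, case-split on whether $\texttt{topic}(\tilde{w}_j)$ is among the document's $\tau$ topics, and solve for $c_3$. You are in fact more careful than the paper on one point: the paper's listed probabilities $p_1 + p_1(v-1) + p_1 v(\tau-1) + p_2 v(T-\tau)$ sum to $1 - p_m(1-p_c-p_r)$ rather than $1$, silently omitting the $\texttt{[MASK]}$ positions that you explicitly flag; your parenthetical remark that the displayed denominator is recovered by placing no mass on $\texttt{[MASK]}$ source positions is exactly the tacit convention the paper is using.
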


Again, for notational convenience, we will introduce $z_1, z_2$, s.t.  
\begin{align*}
    z_1 &:= \beta p_1 + \alpha p_1 (v-1) + p_1 v (\tau-1) + p_2 v (T-\tau)\\ 
    z_2 &:= \beta p_2 + \alpha p_2 (v-1) + p_1 v \tau + p_2 v (T-\tau-1) 
\end{align*}

\begin{proof}[Proof of Proposition \ref{p:c3ab}]

We will get these equalities by considering the marginalization constraints, depending on the topic of $\tilde{w}_j$. 
Consider first a $j$, such that $\tilde{w}_j \in \{t_i\}_{i \in [\tau]}$: %
\begin{itemize}
    \item Note, for every position $i$, with probability $p_1$, we have $\tilde{w}_i = \tilde{w}_j$, so $A(\tilde{\mX})_{ij} = \beta c_3$ by Assumption~\ref{assumption:alpha_beta_attention}. 
    \item Note also, for every position $i$, with probabilitiy $p_1 (v-1)$, we have  $\tilde{w}_i \ne \tilde{w}_j$ but $\texttt{topic}(\tilde{w}_i) = \texttt{topic}(\tilde{w}_j)$, and so $A(\tilde{\mX})_{ij} = \alpha c_3$.
    \item Finally, note that for every position $i$, with probability  $(p_1 v (\tau-1) + p_2 v (T-\tau))$ we have $\texttt{topic}(\tilde{w}_i) \ne \texttt{topic}(\tilde{w}_j)$, so $A(\tilde{\mX})_{ij} = c_3$.
\end{itemize}
Since $\sum_{i=1}^N A(\tilde{\mX})_{ij} = 1$,
we obtain $c_3 = \frac{1}{(\beta p_1 + \alpha p_1 (v-1) + p_1 v (\tau-1) + p_2 v (T-\tau)) N}$.

Consider next a $j$, s.t.  $\tilde{w}_j \in [T] \backslash \{t_i\}_{i \in [\tau]}$. By similar considerations as before, 
\begin{itemize}
    \item With probability $p_2$, a position $i$ in $\tilde{\vw}$ satisfies $\tilde{w}_i = \tilde{w}_j$, so $A(\tilde{\mX})_{ij} = \beta c_3$.
    \item With probability $p_2 (v-1)$, a position $i$ satisfies $\tilde{w}_i \ne \tilde{w}_j$ but $\texttt{topic}(\tilde{w}_i) = \texttt{topic}(\tilde{w}_j)$, so $A(\tilde{\mX})_{ij} = \alpha c_3$.
    \item Finally, with probability $p_1 v \tau + p_2 v (T-\tau-1)$, a  position $i$ in satisfies  $\texttt{topic}(\tilde{w}_i) \ne \texttt{topic}(\tilde{w}_j)$, so $A(\tilde{\mX})_{ij} = c_3$.
\end{itemize}
Since $\sum_{i=1}^N A(\tilde{\mX})_{ij} = 1$,
we obtain $c_3 = \frac{1}{(\beta p_2 + \alpha p_2 (v-1) + p_1 v \tau + p_2 v (T-\tau-1)) N}$. The proposition thus follows.

\end{proof}

\subsection{Implication of topic-wise attention assumption on model output}
In this section we calculate the part $\tilde{\mX} A(\tilde{\mX})$ using the results of Appendix~\ref{sec:appendix:masking_prob_calculation}:

\begin{proposition}
Using the calculation and notations of $z_1$ and $z_2$ in Appendix~\ref{sec:appendix:masking_prob_calculation}:
\begin{equation}
    \label{eq:pre_Wv_weighted_sum}
    \begin{split}
    &\quad \left[ \tilde{\mX} A(\tilde{\mX}) \right]_{ij} =  \sum_{l=1}^N \tilde{\mX}_{il} A(\tilde{\mX})_{lj} 
    = \sum_{l=1}^N \1_{\tilde{\mX}_{il} = 1} A(\tilde{\mX})_{lj}
    = P(\tilde{\mX}_{il} = 1) \cdot N \cdot A(\tilde{\mX})_{lj}  \\
    &= \begin{cases}
        p_1 N \frac{\beta}{z_1 N} = \frac{p_1 \beta}{z_1}, \quad &\text{if } i = j, \texttt{topic}(j) \in \{t_1, \cdots, t_\tau\} \text{\textbf{(Same token)}}  \\
        p_1 N \frac{\alpha}{z_1 N} = \frac{p_1 \alpha}{z_1}, \quad &\text{if } i \ne j, \texttt{topic}(i) = \texttt{topic}(j) \in \{t_1, \cdots, t_\tau\}\text{\textbf{(Different token, same topic)}}  \\
        p_1 N \frac{1}{z_1 N} = \frac{p_1}{z_1}, \quad &\text{if } \texttt{topic}(i) \ne \texttt{topic}(j), \texttt{topic}(i) \in \{t_1, \cdots, t_\tau\}, \texttt{topic}(j) \in \{t_1, \cdots, t_\tau\}  \\
        p_2 N \frac{1}{z_1 N} = \frac{p_2}{z_1}, \quad &\text{if } \texttt{topic}(i) \ne \texttt{topic}(j), \texttt{topic}(i) \notin \{t_1, \cdots, t_\tau\}, \texttt{topic}(j) \in \{t_1, \cdots, t_\tau\}  \\
        p_2 N \frac{\beta}{z_2 N} = \frac{p_2 \beta}{z_2}, \quad &\text{if } i = j, \texttt{topic}(j) \notin \{t_1, \cdots, t_\tau\}  \\
        p_2 N \frac{\alpha}{z_2 N} = \frac{p_2 \alpha}{z_2}, \quad &\text{if } i \ne j, \texttt{topic}(i) = \texttt{topic}(j) \notin \{t_1, \cdots, t_\tau\}  \\
        p_1 N \frac{1}{z_2 N} = \frac{p_1}{z_2}, \quad &\text{if } \texttt{topic}(i) \ne \texttt{topic}(j), \texttt{topic}(i) \in \{t_1, \cdots, t_\tau\}, \texttt{topic}(j) \notin \{t_1, \cdots, t_\tau\}  \\
        p_2 N \frac{1}{z_2 N} = \frac{p_2}{z_2}, \quad &\text{if } \texttt{topic}(i) \ne \texttt{topic}(j), \texttt{topic}(i) \notin \{t_1, \cdots, t_\tau\}, \texttt{topic}(j) \notin \{t_1, \cdots, t_\tau\}  \\
    \end{cases} \\
    \end{split}
\end{equation}
\end{proposition}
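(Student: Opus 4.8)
The plan is to expand the product $\tilde{\mX} A(\tilde{\mX})$ entrywise and reduce everything to the quantities already computed in Appendix~\ref{sec:appendix:masking_prob_calculation}. By the definition of matrix multiplication, $[\tilde{\mX} A(\tilde{\mX})]_{ij} = \sum_{l=1}^N \tilde{\mX}_{il} A(\tilde{\mX})_{lj}$; since $\tilde{\mX}$ is the one-hot encoding of $\tilde{\vw}$, each $\tilde{\mX}_{il} = \1_{\tilde{w}_l = i}$, so the sum collapses to $\sum_{l \,:\, \tilde{w}_l = i} A(\tilde{\mX})_{lj}$, a sum over exactly the positions carrying token $i$. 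The key structural point is that all summands are equal: by Assumption~\ref{assumption:alpha_beta_attention}, $A(\tilde{\mX})_{lj}$ depends on $l$ only through the pair $(\tilde{w}_l, \tilde{w}_j) = (i, \tilde{w}_j)$ --- i.e.\ only through whether $i = \tilde{w}_j$ (value $\beta c_3$), whether $i \ne \tilde{w}_j$ but $\texttt{topic}(i) = \texttt{topic}(\tilde{w}_j)$ (value $\alpha c_3$), or $\texttt{topic}(i) \ne \texttt{topic}(\tilde{w}_j)$ (value $c_3$). (The ``$i = j$'' appearing in \eqref{eq:pre_Wv_weighted_sum} is shorthand for this same-token relation $i = \tilde{w}_j$.) Hence $[\tilde{\mX} A(\tilde{\mX})]_{ij} = \bigl(\#\{l : \tilde{w}_l = i\}\bigr)\, A(\tilde{\mX})_{lj}$ for any such $l$.

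Next I would invoke Assumption~\ref{assumption:infinitely_long_document}, together with the independent masking coin-flips, to conclude that in the infinite-length regime the empirical token frequencies of $\tilde{\vw}$ equal the post-masking marginals recorded in \eqref{eq:observed_token_distribution}; this is precisely the content of the proposition preceding \eqref{eq:observed_token_distribution}, and it is the one step I would state with care, since Assumption~\ref{assumption:infinitely_long_document} is phrased for the clean document and we are passing it to $\tilde{\vw}$. Consequently $\#\{l : \tilde{w}_l = i\} = N\, P_{\tilde{\vw}}(i)$, and $[\tilde{\mX} A(\tilde{\mX})]_{ij} = N\, P_{\tilde{\vw}}(i)\, A(\tilde{\mX})_{lj}$, which is the chain of equalities displayed at the top of \eqref{eq:pre_Wv_weighted_sum}.

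It then remains to substitute and enumerate cases. By \eqref{eq:observed_token_distribution} with \eqref{eq:p_1}--\eqref{eq:p_2}, $P_{\tilde{\vw}}(i) = p_1$ if $\texttt{topic}(i) \in \{t_1,\dots,t_\tau\}$ and $p_2$ otherwise; by Proposition~\ref{proposition:calculate_c_3}, $c_3 = \frac{1}{z_1 N}$ if $\texttt{topic}(\tilde{w}_j) \in \{t_1,\dots,t_\tau\}$ and $\frac{1}{z_2 N}$ otherwise. Multiplying $N\, P_{\tilde{\vw}}(i)$ by $A(\tilde{\mX})_{lj} \in \{\beta c_3, \alpha c_3, c_3\}$ cancels the factor $N$ against the $N$ in $c_3$, yielding an expression of the form $\frac{p_a \gamma}{z_b}$ with $p_a \in \{p_1,p_2\}$ governed by the status of $\texttt{topic}(i)$, $\gamma \in \{\beta,\alpha,1\}$ governed by the relation of $i$ to $\tilde{w}_j$, and $z_b \in \{z_1,z_2\}$ governed by the status of $\texttt{topic}(\tilde{w}_j)$. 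The only bookkeeping subtlety --- and the place where one must be slightly careful rather than mechanical --- is checking which status-combinations are realizable: the ``same token'' and ``same topic'' relations force $\texttt{topic}(i)$ and $\texttt{topic}(\tilde{w}_j)$ to share the same status (contributing $2+2$ cases), whereas the ``different topic'' relation permits all four status pairs; this produces exactly the eight branches listed in \eqref{eq:pre_Wv_weighted_sum}, completing the proof.
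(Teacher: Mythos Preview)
Your proposal is correct and follows exactly the approach the paper takes: the paper does not provide a standalone proof but embeds the derivation in the displayed chain of equalities at the top of \eqref{eq:pre_Wv_weighted_sum}, which you have unpacked step by step (matrix product, one-hot collapse, constant attention value given $\tilde{w}_l=i$, count via the infinite-length assumption, then substitution of $p_1,p_2,c_3$). Your remark that ``$i=j$'' and ``$\texttt{topic}(j)$'' are shorthand for $i=\tilde{w}_j$ and $\texttt{topic}(\tilde{w}_j)$ is also correct and is confirmed by the paper's later restatements such as \eqref{eq:pre_Wv_weighted_sum_case_majority}.
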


\clearpage
\subsection{Proof of Theorem~\ref{thm:optimal_attention_weights_updated} (optimal attention when freezing \texorpdfstring{$\mW^V$}{Wv} to uniform blocks)}
\label{sec:appendix:attention:weights:block}

\begin{theorem*}[optimal attention weights when freezing block-wise \texorpdfstring{$\mW^V$}{Wv}, Theorem~\ref{thm:optimal_attention_weights_updated} restated]
    Suppose the data distribution follows the topic modeling assumption in Section~\ref{sec:setup:topic_modeling} and Assumption~\ref{assumption:infinitely_long_document}.
    Suppose we train a single layer transformer given by \eqref{eq:simplified_transformer_no_emb}
    with $\vb^{\text{pred}} = 0$ and $\mW^V$ frozen to the optima in Theorem~\ref{thm:optimal_Wv_given_uniform_attention_l2reg},
    under masked language modeling objective (\eqref{eq:objective}) 
    with the squared loss (\eqref{eq:squared_loss}),
    under Assumption~\ref{assumption:alpha_beta_attention}, Assumption~\ref{assumption:asymptotic}, and Assumption~\ref{assumption:correct_random_balance}.
    Then, the optimal $(\alpha, \beta)$ satisfy
    \[ \frac{v-1}{v} \alpha + \frac{1}{v} \beta \in (\lambda_1 (\tau - 1), \lambda_2 T ) \]
    in which the constants $\lambda_1 \coloneqq \frac{(1-(1-p_c) p_m + p_m p_r) (1+(1-p_c) p_m)}{2 (1-(1-p_c) p_m)}$ and $\lambda_2 \coloneqq 100 (\frac{1-(1-p_c) p_m}{p_m p_r} + 1)$.
\end{theorem*}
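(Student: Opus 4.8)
\emph{Proof plan.}
The plan is to collapse the loss to an explicit function of the single scalar $\gamma \coloneqq \frac{v-1}{v}\alpha + \frac{1}{v}\beta$ and then localize its minimizer by checking the sign of the (directional) derivative at the two claimed endpoints; here $\alpha,\beta$ are the genuine free parameters and the $c_3$ of Assumption~\ref{assumption:alpha_beta_attention} is fixed per column by the normalization $\sum_i A(\tilde\mX)_{ij}=1$, as recorded in Proposition~\ref{p:c3ab}.

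First I would compute the model output $f(\tilde\mX) = \mW^V \tilde\mX A(\tilde\mX)$ by combining the case analysis of $[\tilde\mX A(\tilde\mX)]_{ij}$ in \eqref{eq:pre_Wv_weighted_sum} with the block structure of the frozen $\mW^V$ from Theorem~\ref{thm:optimal_Wv_given_uniform_attention_l2reg}. Two structural facts carry most of the load: (i) the $0$-th row of $\mW^V$ vanishes, so $f(\tilde\mX)_{0j}=0$ for every $j$, and since the original document contains no $\texttt{[MASK]}$, that coordinate contributes nothing to the squared loss; and (ii) because each block of $\mW^V$ is constant, the within-topic attention enters $f$ only through the aggregate $\sum_{i\in t}[\tilde\mX A(\tilde\mX)]_{ij}$, which for $t=\texttt{topic}(\tilde w_j)$ with $\tilde w_j$ in a document topic equals $p_1(\beta+(v-1)\alpha)/z_1 = p_1 v\gamma/z_1$ (and the mass landing on $\texttt{[MASK]}$ positions enters only through $\sum_{i=1}^{Tv}[\tilde\mX A(\tilde\mX)]_{ij} = 1 - \sum_{l:\tilde w_l=0} A(\tilde\mX)_{lj}$, again a function of $z_1$ or $z_2$). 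Since $z_1 = p_1 v\gamma + p_1 v(\tau-1) + p_2 v(T-\tau)$ and $z_2 = p_2 v\gamma + p_1 v\tau + p_2 v(T-\tau-1)$ are themselves affine in $\gamma$, every entry $f(\tilde\mX)_{kj}$ takes the form $a + b/z_1 + c/z_2$ with constants $a,b,c$ that depend only on whether $\texttt{topic}(k)$ and $\texttt{topic}(\tilde w_j)$ lie among $\{t_1,\dots,t_\tau\}$ and on whether $k=\tilde w_j$; in particular $f$ depends on $(\alpha,\beta)$ only through $\gamma$, which is the content of the second Remark after the theorem.

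Next I would assemble the loss. Using Assumption~\ref{assumption:infinitely_long_document} exactly as in the proof of Lemma~\ref{lemma:optimal_linear_transform_given_uniform_attention}, the average over $M$ equals the average over all positions, so $L = \E_{\mX}\frac1N\sum_j \|f(\tilde\mX)_{:j} - \mX_{:j}\|_2^2$. Conditioning on the topic of the original token $w_j$, on the masking outcome at position $j$ (unchanged / $\texttt{[MASK]}$ / random word, with the probabilities of Section~\ref{sec:setup:training}), and on the topic of the resulting $\tilde w_j$ relative to $\{t_1,\dots,t_\tau\}$, and then using symmetry across topics and across words within a topic, the squared error over $k\in\{1,\dots,Tv\}$ collapses to a finite sum, yielding $L$ as an explicit rational function $L(\gamma)$. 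Assumption~\ref{assumption:correct_random_balance} is used to fix the signs and rough magnitudes of the masking constants (e.g. $1-(1-p_c)p_m\in(\tfrac12,1)$) and of the frozen-$\mW^V$ constants $c_1,c_2,c_3$ of Theorem~\ref{thm:optimal_Wv_given_uniform_attention_l2reg}. Since the loss is convex in the raw attention weights (as already noted before Assumption~\ref{assumption:asymptotic}) and the Assumption~\ref{assumption:alpha_beta_attention} family is a convex slice in those coordinates, it suffices to show the minimizing $\gamma$ is neither $\le \lambda_1(\tau-1)$ nor $\ge \lambda_2 T$, i.e. that at $\gamma=\lambda_1(\tau-1)$ increasing $\gamma$ strictly decreases $L$, and at $\gamma=\lambda_2 T$ decreasing $\gamma$ strictly decreases $L$.

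Solving $L'(\gamma)=0$ in closed form is intractable for finite $T,\tau$ (cf. the last Remark after the theorem), so the final step is to pass to the regime of Assumption~\ref{assumption:asymptotic}: with $\tau=o(T)$ one gets $p_2/p_1$ of order $\tau/T$, hence $z_1 = p_1 v\bigl(\gamma + \Theta(\tau)\bigr)$ while $z_2 = q + p_m p_r + o(1)$ is essentially constant (writing $q\coloneqq 1-(1-p_c)p_m$), and $L(\gamma)$ simplifies up to controlled $o(1)$ corrections. The leading-order balance at scale $\gamma\asymp\tau$ then produces the constant $\lambda_1 = \frac{(1-(1-p_c)p_m+p_m p_r)(1+(1-p_c)p_m)}{2(1-(1-p_c)p_m)}$, and the generous constant $100$ in $\lambda_2 = 100\bigl(\frac{1-(1-p_c)p_m}{p_m p_r}+1\bigr)$ provides enough slack to dominate the lower-order terms for the upper bound once $v$ exceeds the threshold in Assumption~\ref{assumption:asymptotic}. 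Translating back via $\gamma=\frac{v-1}{v}\alpha+\frac1v\beta$ gives the stated conclusion.

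The main obstacle I anticipate is the bookkeeping in the first two steps — faithfully enumerating the (topic-of-$k$, topic-of-$\tilde w_j$, masking-outcome) cases, correctly accounting for the attention mass absorbed by $\texttt{[MASK]}$ positions, and reducing the per-position squared error to closed form — and, in the last step, verifying that the $o(1)$ terms of the asymptotic expansion cannot flip the sign of the relevant derivative at either endpoint; this is precisely where $\tau=o(T)$, the lower bound on $v$, and the loose constant $100$ in $\lambda_2$ are needed.
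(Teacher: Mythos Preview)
Your reduction to the scalar $\gamma=\frac{v-1}{v}\alpha+\frac1v\beta$, your use of the block structure of $\mW^V$ to collapse the within-topic attention into $p_1 v\gamma/z_1$, and your plan to pass to the asymptotic regime of Assumption~\ref{assumption:asymptotic} all match the paper's proof. One bookkeeping point the paper exploits and you only implicitly touch: when the query position satisfies $\tilde w_j=\texttt{[MASK]}$, the column $A(\tilde\mX)_{:j}$ is governed by attention \emph{from} $\texttt{[MASK]}$, which Assumption~\ref{assumption:alpha_beta_attention} does not constrain, so that entire contribution is constant in $(\alpha,\beta)$ and can be dropped up front; the paper restricts to $j\in M$ with $\tilde w_j\neq\texttt{[MASK]}$ and then splits further on whether $\texttt{topic}(\tilde w_j)$ equals $\texttt{topic}(w_j)$, lies among $\{t_1,\dots,t_\tau\}$, or not.

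The substantive divergence is in how you localize the minimizer. The paper does \emph{not} use a derivative/convexity argument: it lower-bounds $L(\gamma)$ on each of the two ``bad'' ranges $\gamma\le\lambda_1(\tau-1)$ and $\gamma\ge\lambda_2 T$ by an explicit constant, and then exhibits a single witness $\gamma=\sqrt{\tau T}$ in the middle range whose (asymptotic) loss is strictly smaller by an $\Omega(1)$ gap. Your plan instead relies on ``the Assumption~\ref{assumption:alpha_beta_attention} family is a convex slice'' so that checking $L'$ at the two endpoints suffices. That step has a gap: the map $(\alpha,\beta)\mapsto A(\tilde\mX)$ is \emph{not} affine, because the column normalization forces $c_3=1/(z_k N)$ with $z_k$ affine in $\gamma$, so each entry of $A$ is of the form $\text{const}/(a\gamma+b)$. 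Convexity of $L$ in $A$ therefore does not transfer to convexity (or even quasiconvexity) of $L(\gamma)$ in $\gamma$, and without it, $L'(\lambda_1(\tau-1))<0$ and $L'(\lambda_2 T)>0$ do not by themselves exclude a global minimum outside the interval. You would either need an additional argument that $L(\gamma)$ is unimodal, or switch to the paper's direct comparison-with-a-witness strategy, which sidesteps convexity entirely and is also what makes the crude constant $100$ in $\lambda_2$ sufficient.
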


\begin{proof}

Define $\gamma \coloneqq \frac{v-1}{v} \alpha + \frac{1}{v} \beta$.

Recall the architecture under consideration, i.e.
\[ \hat{\mX} \coloneqq \mW^V \tilde{\mX} A(\tilde{\mX}) \]

The squared loss (\eqref{eq:squared_loss}) is
\begin{align*}
    &\quad \E_{\mX \sim \mathcal{D}_\mX} \E_{M} \left[ \frac{1}{|M|} \sum_{j \in M} l(f(\tilde{\mX})_{:j}, \mX_{:j} ) \right]_{ij} \\
    &= \frac{1}{p_m N} \E_{\mX \sim \mathcal{D}_\mX} \E_{M} \left[ \sum_{j: \tilde{w}_j = \texttt{[MASK]}} l(f(\tilde{\mX})_{:j}, \mX_{:j} ) + \sum_{j \in M, \tilde{w}_j \ne \texttt{[MASK]} } l(f(\tilde{\mX})_{:j}, \mX_{:j} ) \right]  \\
    &= \frac{1}{p_m N} \E_{\mX \sim \mathcal{D}_\mX} \E_{M} \left[ \sum_{j: \tilde{w}_j = \texttt{[MASK]}} \| (\mW^V \tilde{\mX} A(\tilde{\mX}))_{:j} - \mX_{:j} \|_2^2 + \sum_{j \in M, \tilde{w}_j \ne \texttt{[MASK]} } \| (\mW^V \tilde{\mX} A(\tilde{\mX}))_{:j} - \mX_{:j} \|_2^2 \right]  \\
    &= \frac{1}{p_m N} \E_{\mX \sim \mathcal{D}_\mX} \E_{M} \left[ \sum_{j: \tilde{w}_j = \texttt{[MASK]}} \| \mW^V \tilde{\mX} A(\tilde{\mX})_{:j} - \mX_{:j} \|_2^2 + \sum_{j \in M, \tilde{w}_j \ne \texttt{[MASK]} } \| \mW^V \tilde{\mX} A(\tilde{\mX})_{:j} - \mX_{:j} \|_2^2 \right]
\end{align*}

Note that when $\tilde{w}_j = \texttt{[MASK]}$,
$A(\tilde{\mX})_{:j}$ is the attention from \texttt{[MASK]} to other tokens,
and therefore is independent of the setting of $\alpha$ and $\beta$ in Assumption~\ref{assumption:alpha_beta_attention}.
Thus, in the following, we only consider the case in which $j \in M, \tilde{w}_j \ne \texttt{[MASK]}$,
namely, $w_j$ is masked, but $\tilde{w}_j$ is chosen to be either the correct token or the random token.
Hence define:
\begin{equation}
    \label{eq:l_gamma}
    L(\gamma) \coloneqq \frac{1}{p_m N} \E_{\mX \sim \mathcal{D}_\mX} \E_{M} \left[ \sum_{j \in M, \tilde{w}_j \ne \texttt{[MASK]} } \| \mW^V \tilde{\mX} A(\tilde{\mX})_{:j} - \mX_{:j} \|_2^2 \right]
\end{equation}

Note that $\forall \vy \in \R^{T v + 1}$
\[ (\mW^V \vy)_i = \begin{cases}
        0, \quad & i = 0 \\ 
        q(\frac{1}{v} \sum_{l \in \text{topic}(i)} y_l), \quad & i \in \{ 1, \cdots, T v + 1 \}
    \end{cases}  \]
in which 
\[ q(x) \coloneqq \frac{1}{1-(1-p_c) p_m} x - \frac{p_m p_r}{(1-(1-p_c)p_m)Tv} \]

In our context, we will consider $\vy = \tilde{\mX} A(\tilde{\mX})_{:j}$ in $L(\gamma)$ above.

For a document $\vw$ which contains topics $t_1, \cdots, t_\tau \in [T]$,
there are the following cases:

\paragraph{Case 1: $\texttt{topic}(\tilde{w}_j) = \texttt{topic}(w_j)$}
When $\tilde{w}_j$ after masking belongs to the same topic as the correct token $w_j$. 
(This happens with probability $p_c + \frac{p_r}{T}$)

By \eqref{eq:pre_Wv_weighted_sum},
\begin{equation}
    \label{eq:pre_Wv_weighted_sum_case_majority_block}
    \frac{1}{v} \sum_{l \in \text{topic}(i)} \left[ \tilde{\mX} A(\tilde{\mX}) \right]_{lj} = \begin{cases}
        &\frac{1}{v} \left( \frac{p_1 \beta}{z_1} + \sum_{l \in \text{topic}(i), l \ne \tilde{w}_j} \frac{p_1 \alpha}{z_1} \right) = \frac{1}{v} \left( \frac{p_1 \beta + (v-1) p_1 \alpha}{z_1} \right) = \frac{p_1 \gamma}{z_1}, \quad \text{if } \texttt{topic}(i) = \texttt{topic}(\tilde{w}_j)  \\
        &\frac{1}{v} \left( \sum_{l \in \text{topic}(i)} \frac{p_1}{z_1} \right) = \frac{p_1}{z_1}, \quad \text{if } \texttt{topic}(i) \ne \texttt{topic}(\tilde{w}_j), \texttt{topic}(i) \in \{t_1, \cdots, t_\tau\}  \\
        &\frac{1}{v} \left( \sum_{l \in \text{topic}(i)} \frac{p_2}{z_1} \right) = \frac{p_2}{z_1}, \quad \text{if } \texttt{topic}(i) \ne \texttt{topic}(\tilde{w}_j), \texttt{topic}(i) \notin \{t_1, \cdots, t_\tau\}
    \end{cases}
\end{equation}

Recall that the label is
\[ \mX_{:j} = \begin{cases}
        1, \quad & i = w_j  \\
        0, \quad & i \in \{0, \cdots, T v\} \backslash w_j
    \end{cases} \]

Hence the contribution to the loss from token $\tilde{w}_j$ is
\begin{align*}
    &\quad (p_c + \frac{p_r}{T}) [\left(1-q\left(\frac{p_1 \gamma}{z_1}\right)\right)^2 + q(\frac{p_1 \gamma}{z_1})^2 \cdot (v-1) + q(\frac{p_1}{z_1})^2 \cdot v (\tau-1) + q(\frac{p_2}{z_1})^2 \cdot v (T-\tau) ] \\
    &=(p_c + \frac{p_r}{T}) [ (1-q(\frac{p_1 \gamma}{p_1 \gamma v + p_1 v (\tau-1) + p_2 v (T-\tau)}))^2 \\
    &+ q(\frac{p_1 \gamma}{p_1 \gamma v + p_1 v (\tau-1) + p_2 v (T-\tau)})^2 (v-1) + q(\frac{p_1}{p_1 \gamma v + p_1 v (\tau-1) + p_2 v (T-\tau)})^2 v (\tau - 1) \\
    &+ q(\frac{p_2}{p_1 \gamma v + p_1 v (\tau-1) + p_2 v (T-\tau)})^2 v (T-\tau) ]
\end{align*}

Plugging in the asymptotics from Assumption~\ref{assumption:asymptotic}, 
the above becomes 
\begin{equation}
    \label{eq:case1l_block} 
   p_c [ (1-q(\frac{p_1 \gamma}{p_1 \gamma v + p_1 v (\tau-1) + p_2 v (T-\tau)}))^2 + q(\frac{p_1 \gamma}{p_1 \gamma v + p_1 v (\tau-1) + p_2 v (T-\tau)})^2 (v-1) ] \pm O(\frac{1}{T})
\end{equation}

\paragraph{Case 2: $\texttt{topic}(\tilde{w}_j) \in \{t_1, \cdots, t_\tau\} \backslash \{\texttt{topic}(w_j)\}$}
When $\tilde{w}_j$ after masking belongs to a different topic from that of the correct token $w_j$, but still a topic existing in $\vw$. 
(This happens with probability $\frac{p_r (\tau - 1)}{T}$)

$\left[ \tilde{\mX} A(\tilde{\mX}) \right]_{:j}$ is the same as \eqref{eq:pre_Wv_weighted_sum_case_majority_block}.

Hence the loss is
\begin{align*}
    &\quad p_r \frac{\tau-1}{T} [(1-q(\frac{p_1}{z_1}))^2 + q(\frac{p_1 \gamma}{z_1})^2 \cdot v + q(\frac{p_1}{z_1})^2 \cdot (v (\tau-1) - 1) + q(\frac{p_2}{z_1})^2 \cdot v (T-\tau)] \\
    &= p_r \frac{\tau-1}{T} [(1-q(\frac{p_1}{p_1 \gamma v + p_1 v (\tau-1) + p_2 v (T-\tau)})^2 + q(\frac{p_1 \gamma}{p_1 \gamma v + p_1 v (\tau-1) + p_2 v (T-\tau)})^2 \cdot v \\
    &\quad + q(\frac{p_1}{p_1 \gamma v + p_1 v (\tau-1) + p_2 v (T-\tau)})^2 (v (\tau-1) - 1) \\
    &+ q(\frac{p_2}{p_1 \gamma v + p_1 v (\tau-1) + p_2 v (T-\tau)})^2 v (T-\tau) ] \\
\end{align*}

Plugging in the asymptotics from Assumption~\ref{assumption:asymptotic}, 
the above terms vanish.

\paragraph{Case 3: $\texttt{topic}(\tilde{w}_j) \in [T] \backslash \{t_1, \cdots, t_\tau\}$}
When $\tilde{w}_j$ after masking belongs to a topic that does not exist in $\vw$. 
(This happens with probability $p_r (1-\frac{\tau}{T})$)

By \eqref{eq:pre_Wv_weighted_sum},
\begin{equation}
    \label{eq:pre_Wv_weighted_sum_case_minority}
    \left[ \tilde{\mX} A(\tilde{\mX}) \right]_{ij} = \begin{cases}
        \frac{p_2 \beta}{z_2}, \quad &\text{if } i = \tilde{w}_j  \\
        \frac{p_2 \alpha}{z_2}, \quad &\text{if } i \ne \tilde{w}_j, \texttt{topic}(i) = \texttt{topic}(\tilde{w}_j)  \\
        \frac{p_1}{z_2}, \quad &\text{if } \texttt{topic}(i) \ne \texttt{topic}(\tilde{w}_j), \texttt{topic}(i) \in \{t_1, \cdots, t_\tau\}  \\
        \frac{p_2}{z_2}, \quad &\text{if } \texttt{topic}(i) \ne \texttt{topic}(\tilde{w}_j), \texttt{topic}(i) \notin \{t_1, \cdots, t_\tau\}
    \end{cases}
\end{equation}

\begin{equation}
    \label{eq:pre_Wv_weighted_sum_case_minority_block}
    \frac{1}{v} \sum_{l \in \text{topic}(i)} \left[ \tilde{\mX} A(\tilde{\mX}) \right]_{lj} = \begin{cases}
        &\frac{1}{v} \left( \frac{p_2 \beta}{z_2} + \sum_{l \in \text{topic}(i), l \ne \tilde{w}_j} \frac{p_2 \alpha}{z_2} \right) = \frac{1}{v} \left( \frac{p_2 \beta + (v-1) p_2 \alpha}{z_2} \right) = \frac{p_2 \gamma}{z_2}, \quad \text{if } \texttt{topic}(i) = \texttt{topic}(\tilde{w}_j)  \\
        &\frac{1}{v} \left( \sum_{l \in \text{topic}(i)} \frac{p_1}{z_2} \right) = \frac{p_1}{z_2}, \quad \text{if } \texttt{topic}(i) \ne \texttt{topic}(\tilde{w}_j), \texttt{topic}(i) \in \{t_1, \cdots, t_\tau\}  \\
        &\frac{1}{v} \left( \sum_{l \in \text{topic}(i)} \frac{p_2}{z_2} \right) = \frac{p_2}{z_2}, \quad \text{if } \texttt{topic}(i) \ne \texttt{topic}(\tilde{w}_j), \texttt{topic}(i) \notin \{t_1, \cdots, t_\tau\} \\
    \end{cases}
\end{equation}

Hence the loss is
\begin{align*}
    &\quad p_r (1-\frac{\tau}{T}) [(1-q(\frac{p_1}{z_2}))^2 + q(\frac{p_1}{z_2})^2 \cdot (v \tau - 1) + q(\frac{p_2 \gamma}{z_2})^2 \cdot v + q(\frac{p_2}{z_2})^2 \cdot v (T-\tau - 1)] \\
    &= p_r (1-\frac{\tau}{T}) [(1-q(\frac{p_1}{p_2 \gamma v + p_1 v \tau + p_2 v (T-\tau-1)}))^2 + q(\frac{p_1}{p_2 \gamma v + p_1 v \tau + p_2 v (T-\tau-1)})^2 (v \tau - 1) \\
    &+ q(\frac{p_2 \gamma}{p_2 \gamma v + p_1 v \tau + p_2 v (T-\tau-1)})^2 v + q(\frac{p_2}{p_2 \gamma v + p_1 v \tau + p_2 v (T-\tau-1)})^2 v (T-\tau-1)]
\end{align*}

Plugging in the asymptotics from Assumption~\ref{assumption:asymptotic}, 
the above becomes 
\begin{equation}
    \label{eq:case4l_block}
    p_r (1 + q(\frac{p_2 \gamma}{p_2 \gamma v + p_1 v \tau + p_2 v (T-\tau-1)})^2 )
\end{equation}

\paragraph{Combining the above cases}
Adding \eqref{eq:case1l_block} and \eqref{eq:case4l_block}, we can see in the asymptotic regime of interest, we have: 
\begin{align*}
    L(\gamma) &= p_c [ (1-q(\frac{p_1 \gamma}{p_1 \gamma v + p_1 v (\tau-1) + p_2 v (T-\tau)}))^2 + q(\frac{p_1 \gamma}{p_1 \gamma v + p_1 v (\tau-1) + p_2 v (T-\tau)})^2 (v-1) ] \\
    &\quad + p_r (1 + q(\frac{p_2 \gamma}{p_1 \gamma v + p_1 v \tau + p_2 v (T-\tau-1)})^2 ) \pm O(\frac{1}{T}) \\
    &= p_c [ (1-\frac{c_4 p_1 \gamma}{p_1 \gamma v + p_1 v (\tau-1) + p_2 v (T-\tau)})^2 + (\frac{c_4 p_1 \gamma}{p_1 \gamma v + p_1 v (\tau-1) + p_2 v (T-\tau)})^2 (v-1) ] \\
    &\quad + p_r + p_r (\frac{c_4 p_2 \gamma}{p_2 \gamma v + p_1 v \tau + p_2 v (T-\tau-1)})^2  \pm O(\frac{1}{T}) \\
\end{align*}

in which the constant $c_4$ is defined as
\begin{itemize}
    \item $c_4 \coloneqq \frac{1}{1-(1-p_c) p_m} \in (1, 2)$
\end{itemize}

Plugging in the definition of $p_1, p_2$ in \eqref{eq:p_1}, \eqref{eq:p_2}

\begin{equation} \label{eq:l_gamma_block_asymptotic}
\begin{split}
    L(\gamma) &= p_c [ (1-\frac{\frac{1}{v \tau} \gamma}{\frac{1}{c_4 \tau} \gamma + \frac{1}{c_4 \tau} (\tau-1) + \frac{p_m p_r}{T} (T-\tau)})^2 + (\frac{\frac{1}{v \tau} \gamma}{\frac{1}{c_4 \tau} \gamma  + \frac{1}{c_4 \tau} (\tau-1) + \frac{p_m p_r}{T} (T-\tau)})^2 (v-1) ] \\
    &\quad + p_r + p_r (\frac{c_4 \frac{p_m p_r}{v T} \gamma}{\frac{p_m p_r}{v T} \gamma v + \frac{1}{c_4} + \frac{p_m p_r}{T} (T-\tau-1)})^2  \pm O(\frac{1}{T}) \\
    &= p_c \left[ (1-\frac{c_4 \gamma}{v \gamma + v (\tau-1) + c_4 p_m p_r v \tau})^2 + (\frac{c_4 \gamma}{v \gamma  + v (\tau-1) + c_4 p_m p_r v \tau})^2 (v-1) \right] \\
    &\quad + p_r + p_r (\frac{c_4 p_m p_r \gamma}{p_m p_r \gamma v + (\frac{1}{c_4} + p_m p_r) v T})^2  \pm O(\frac{1}{T}) \\
\end{split}
\end{equation}

We will again consider several possible cases for $\gamma$ in \eqref{eq:l_gamma_block_asymptotic}. 

\textbf{Case 1:}
When $\gamma \le \frac{(1 + c_4 p_m p_r) (2 - c_4)}{2 c_4} (\tau - 1)$. \\

Let $c_5$ denote the constant:
\[ c_5 \coloneqq \frac{(1 + c_4 p_m p_r) (2 - c_4)}{2 c_4} \]
then focusing on this term in the loss \eqref{eq:l_gamma_block_asymptotic}:
\begin{align*}
    &\frac{c_4 \gamma}{v \gamma + v (\tau-1) + c_4 p_m p_r v \tau} \\
    &< \frac{c_4 \gamma}{v \gamma + v \frac{\gamma}{c_5} + c_4 p_m p_r v \frac{\gamma}{c_5}} \\
    &= \frac{c_4}{v + v \frac{1}{c_5} + c_4 p_m p_r v \frac{1}{c_5}} \\
    &= \frac{c_4}{v (1 + \frac{1 + c_4 p_m p_r }{c_5})}
\end{align*}
and so
\begin{equation} \label{eq:l_gamma_block_asymptotic_case1}
\begin{split}
L(\gamma) > p_c \left(1 - \frac{c_4}{v (1 + \frac{1 + c_4 p_m p_r }{c_5})}\right)^2 + p_r \pm o(1)
\end{split}
\end{equation}

\textbf{Case 2:}
When $\gamma \ge 100 \frac{\frac{1}{c_4} + p_m p_r}{p_m p_r} T$. \\

then since $\tau = o(T)$ by Assumption~\ref{assumption:asymptotic}:
\begin{align*}
    \frac{c_4 \gamma}{v \gamma + v (\tau-1) + c_4 p_m p_r v \tau} &= \frac{c_4 \gamma}{v \gamma} + o(1) = \frac{c_4}{v} + o(1) \\
    \frac{c_4 \gamma}{v \gamma  + v (\tau-1) + c_4 p_m p_r v \tau} &= \frac{c_4 \gamma}{v \gamma} + o(1) = \frac{c_4}{v} + o(1) \\
    \frac{c_4 p_m p_r \gamma}{p_m p_r \gamma v + (\frac{1}{c_4} + p_m p_r) v T} &\ge \frac{c_4 p_m p_r \gamma}{p_m p_r \gamma v + \frac{1}{100} p_m p_r v \gamma}
    = \frac{100 c_4}{ 101 v } \\
\end{align*}
and therefore plugging into \eqref{eq:l_gamma_block_asymptotic}:
\begin{equation} \label{eq:l_gamma_block_asymptotic_case2}
\begin{split}
    L(\gamma) &\ge p_c [ (1-\frac{c_4}{v} \pm o(1))^2 + (\frac{c_4}{v} \pm o(1))^2 (v-1) ] + p_r + p_r (\frac{100 c_4}{ 101 v })^2 \pm o(1) \\
    &= p_c [ 1 - \frac{2 c_4}{v} + \frac{c_4^2}{v^2} + \frac{c_4^2}{v^2} (v-1) ] + p_r + p_r (\frac{100 c_4}{ 101 v })^2 \pm o(1) \\
    &= p_c [ 1 - \frac{2 c_4}{v} + \frac{c_4^2}{v} ] + p_r + p_r (\frac{100 c_4}{ 101 v })^2 \pm o(1) \\
    &= p_c [ 1 - \frac{c_4 (2 - c_4)}{v} ] + p_r + p_r (\frac{100 c_4}{ 101 v })^2 \pm o(1)
\end{split}
\end{equation}

\textbf{Case 3:}
When $\frac{(1 + c_4 p_m p_r) (2 - c_4)}{2 c_4} (\tau - 1) < \gamma < 100 \frac{\frac{1}{c_4} + p_m p_r}{p_m p_r} T$. \\

Note that this case is the complement of Case 1 and Case 2 above,
and so we have considered all possibilities.
We will show that there exists $\gamma$ in this case such that $L(\gamma)$ is smaller (by an $\Omega(1)$ constant difference) than the lower bound of $L(\gamma)$ proven in Case 1 and Case 2 above,
based on which we know $\argmin L(\gamma)$ cannot lie in Case 1 or Case 2,
and thus conclude that $\argmin L(\gamma)$ is within this case.
Specifically, let:
\[ \gamma = \sqrt{\tau T} \]
then similar to Case 2, since $\tau = o(T)$ by Assumption~\ref{assumption:asymptotic}:
\begin{align*}
    \frac{c_4 \gamma}{v \gamma + v (\tau-1) + c_4 p_m p_r v \tau} &= \frac{c_4 \gamma}{v \gamma} + o(1) = \frac{c_4}{v} + o(1) \\
    \frac{c_4 \gamma}{v \gamma  + v (\tau-1) + c_4 p_m p_r v \tau} &= \frac{c_4 \gamma}{v \gamma} + o(1) = \frac{c_4}{v} + o(1) \\
    \frac{c_4 p_m p_r \gamma}{p_m p_r \gamma v + (\frac{1}{c_4} + p_m p_r) v T} &= o(1)
\end{align*}
and therefore plugging into \eqref{eq:l_gamma_block_asymptotic}:
\begin{equation} \label{eq:l_gamma_block_asymptotic_case3}
\begin{split}
    L(\gamma) &= p_c [ (1-\frac{c_4 \gamma}{v \gamma} \pm o(1))^2 + (\frac{c_4 \gamma}{v \gamma} \pm o(1))^2 (v-1) ] + p_r \pm o(1) \\
    &= p_c [ (1-\frac{c_4}{v} \pm o(1))^2 + (\frac{c_4}{v} \pm o(1))^2 (v-1) ] + p_r \pm o(1) \\
    &= p_c [ 1 - \frac{2 c_4}{v} + \frac{c_4^2}{v^2} + \frac{c_4^2}{v^2} (v-1) ] + p_r \pm o(1) \\
    &= p_c [ 1 - \frac{2 c_4}{v} + \frac{c_4^2}{v} ] + p_r \pm o(1) \\
    &= p_c [ 1 - \frac{c_4 (2 - c_4)}{v} ] + p_r \pm o(1)
\end{split}
\end{equation}

\textbf{Comparing the above cases}

Note that $L(\gamma)$ in Case 3 is strictly smaller than $L(\gamma)$ in Case 1 and Case 2, because:
\begin{itemize}
    \item Comparing \eqref{eq:l_gamma_block_asymptotic_case1} and \eqref{eq:l_gamma_block_asymptotic_case3}: $(1 - \frac{c_4}{v (1 + \frac{1 + c_4 p_m p_r }{c_5})})^2 > 1 - \frac{c_4 (2 - c_4)}{v}$ because $c_5 \in (0, \frac{(1 + c_4 p_m p_r) (2 - c_4)}{c_4})$
    \item Comparing \eqref{eq:l_gamma_block_asymptotic_case2} and \eqref{eq:l_gamma_block_asymptotic_case3}: in the former, the term $p_r (\frac{100 c_4}{ 101 v })^2 > 0$ is the extra constant (of scale $\Omega(1)$, i.e. non-vanishing even under our asymptotic assumptions Assumption~\ref{assumption:asymptotic}) compared with the latter. 
\end{itemize}

Therefore we conclude that 
\[ \argmin L(\gamma) \subseteq (\frac{(1 + c_4 p_m p_r) (2 - c_4)}{2 c_4} (\tau - 1), 100 \frac{\frac{1}{c_4} + p_m p_r}{p_m p_r} T ) \]

\end{proof}

\begin{remark}
    \label{rem:challenge_of_precise_optima}
    In Theorem~\ref{thm:optimal_attention_weights_updated}, we specify some necessary conditions that the optimal $\gamma$ must satisfy.
    It is challenging to precisely characterize the optima (to within $o(1)$ error),
    because doing so may require explicitly writing those smaller scale terms hidden (in $\pm o(1)$) by our asymptotic setting (Assumption~\ref{assumption:asymptotic}). 
    Those smaller scale terms, however, do not affect our analysis, 
    because these $\pm o(1)$ terms cannot reverse the $\Omega(1)$ constant separation between the loss in the above different cases.
\end{remark}

\clearpage
\subsection{Optimal attention weights (when freezing diagonal \texorpdfstring{$\mW^V$}{Wv})}
\label{sec:appendix:attention:weights:diagonal}

Our Stage-2 analysis on the optimal attention weights (\eqref{eq:attention_weights}) is based on freezing $\mW^V$ to be the Stage-1 optima characterized in Theorem~\ref{thm:optimal_Wv_given_uniform_attention_l2reg}.
Notably, in Theorem~\ref{thm:optimal_Wv_given_uniform_attention_l2reg}, the uniqueness of the optima (i.e. a clean block-wise pattern) crucially depends on the $L_2$ regularization. 
Indeed, as we prove in Theorem~\ref{thm:optimal_Wv_given_uniform_attention} (in Appendix~\ref{sec:appendix:Wv}),
without the regularization, there is a family of optima (depending on a series of free constants) all of which can encode the topic structure.

Among these alternative optima, we are particularly interested in a special case --- one that has a \emph{diagonal} pattern.
This type of diagonally structured $\mW^V$ often occurs when we train the single-layered transformer model without $L_2$ regularization.

\begin{figure}[ht]
  \centering
  \begin{minipage}[b]{0.45\textwidth} %
    \centering
    \includegraphics[width=1.0\textwidth]{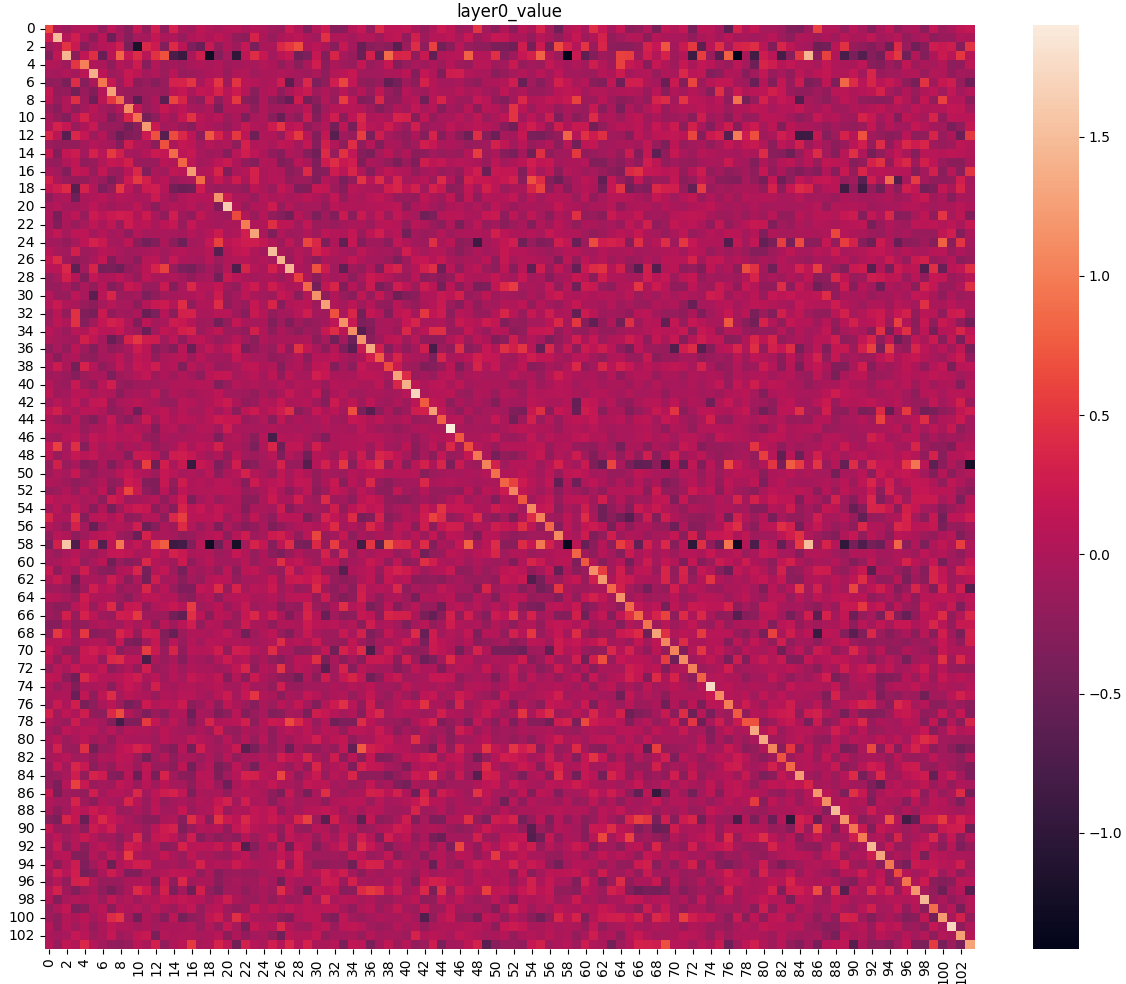}
  \end{minipage}
  \caption{Without $L_2$ regularization, the trained $\mW^V$ sometimes shows a \emph{diagonal} pattern, which is a special case of the family of optima characterized in Theorem~\ref{thm:optimal_Wv_given_uniform_attention} (in Appendix~\ref{sec:appendix:Wv}).}  
  \label{fig:Wv_convergence}
\end{figure}

Motivated by this empirical observation, we formally define the particular optima from Theorem~\ref{thm:optimal_Wv_given_uniform_attention} (in Appendix~\ref{sec:appendix:Wv}) that is a diagonal pattern.

\begin{definition}[diagonal $\mW^V$]
    \label{def:diagonal_Wv}
    The diagonal optima of $\mW^V$, denoted as $\mD^V$, 
    is the only matrix in $\R^{(T v + 1) \times (T v + 1)}$ that satisfies both
    $\mD^V \in \argmin L(\mW^V)$ (in Theorem~\ref{thm:optimal_Wv_given_uniform_attention}) and 
    \[  \forall i, j \in \{1, \cdots, T v\}, \; \mW^V_{ij} = 0 \text{ if } i \ne j \]
\end{definition}

Corresponding to this case, we provide an analysis on the Stage-2 optimal attention weights,
which shows a very interesting different behavior from the result in Theorem~\ref{thm:optimal_attention_weights_updated} (for block-wise $\mW^V$).

\clearpage
\begin{theorem}[optimal attention weights when freezing diagonal \texorpdfstring{$\mW^V$}{Wv}]
\label{thm:optimal_attention_weights_updated:WvI}
    Suppose the data distribution follows the topic modeling assumption in Section~\ref{sec:setup:topic_modeling} and Assumption~\ref{assumption:infinitely_long_document}.
    Suppose we train a single layer transformer given by \eqref{eq:simplified_transformer_no_emb}
    with $\vb^{\text{pred}} = 0$ and $\mW^V$ frozen to $\mD^V$ in Definition~\ref{def:diagonal_Wv},
    under masked language modeling objective (\eqref{eq:objective}) 
    with the squared loss (\eqref{eq:squared_loss}),
    under Assumption~\ref{assumption:alpha_beta_attention}, Assumption~\ref{assumption:asymptotic}, and Assumption~\ref{assumption:correct_random_balance}.
    Then, the optimal $(\alpha, \beta)$ satisfy:
    \begin{align*}
        \lambda_3 \tau < &\beta < \lambda_4 T \\
        \alpha &< \lambda_5 \beta
    \end{align*}
    in which the constants 
    \begin{align*}
        \lambda_3 &\coloneqq \frac{1-(1-p_c) p_m + p_m p_r}{100} \cdot v \\
        \lambda_4 &\coloneqq \frac{1-(1-p_c) p_m}{\sqrt{v-1} - 2 + (1-p_c) p_m} \cdot \frac{1-(1-p_c-p_r) p_m}{p_m p_r} \cdot v \\
        \lambda_5 &\coloneqq \frac{1}{(v-1) (1-(1-p_c) p_m)}
    \end{align*}
\end{theorem}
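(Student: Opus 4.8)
The plan is to mirror the structure of the proof of Theorem~\ref{thm:optimal_attention_weights_updated}, but now with $\mW^V$ frozen to the diagonal optimum $\mD^V$ of Definition~\ref{def:diagonal_Wv} rather than the block-wise optimum. First I would record the action of $\mD^V$ on a vector: since $\mD^V$ is diagonal (on the non-zero coordinates), for $\vy\in\R^{Tv+1}$ we have $(\mD^V\vy)_i = c \cdot y_i - (\text{bias term})$ for $i\in\{1,\dots,Tv\}$ and $(\mD^V\vy)_0 = 0$, where $c = \frac{1}{1-(1-p_c)p_m}$ and the bias comes from the $\mD^V_{i0}$ entry acting on the $[\texttt{MASK}]$ coordinate. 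The crucial difference from the block-wise case is that here the prediction at coordinate $i$ depends on the single weighted-attention mass landing on token $i$ itself, \emph{not} on the sum over all tokens of $\texttt{topic}(i)$; this is precisely why $\alpha$ and $\beta$ can be bounded separately rather than only through the combination $\gamma = \frac{v-1}{v}\alpha + \frac1v\beta$.

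Next I would reuse Proposition~\ref{proposition:calculate_c_3} and \eqref{eq:pre_Wv_weighted_sum} (the latter already gives $[\tilde{\mX}A(\tilde{\mX})]_{ij}$ in all the topic/token cases in terms of $\alpha,\beta,z_1,z_2$), and split the loss $L(\alpha,\beta)$ over the same three cases for the observed token $\tilde{w}_j$ relative to the true token $w_j$: (Case 1) same topic; (Case 2) a different but present topic; (Case 3) an absent topic. For each case I would write the squared-loss contribution coordinate-by-coordinate, using that the label $\mX_{:j}$ is the one-hot vector at $w_j$, and then plug in the asymptotics of Assumption~\ref{assumption:asymptotic} ($T\to\infty$, $\tau=o(T)$, $v$ bounded below) together with Assumption~\ref{assumption:correct_random_balance} ($p_c=p_r$) to kill lower-order terms, exactly as in the $\pm O(1/T)$ bookkeeping of the earlier proof. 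The terms surviving asymptotically will be: a ``correct prediction'' term involving $q\!\big(\frac{p_1\beta}{z_1}\big)$ (the mass on the true token), an ``off-target same-topic'' term involving $q\!\big(\frac{p_1\alpha}{z_1}\big)$ summed over the $v-1$ siblings, and the Case-3 leakage term involving $q\!\big(\frac{p_2\beta}{z_2}\big)$. Collecting these gives a closed-form (up to $o(1)$) $L(\alpha,\beta)$.

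Then I would carry out the same case-analysis-on-the-minimizer argument that was used for $\gamma$: carve the $(\alpha,\beta)$ plane into regions and show that any $(\alpha,\beta)$ outside the claimed box incurs an $\Omega(1)$ larger loss than some explicit interior witness (e.g.\ $\beta=\sqrt{\tau T}$, with $\alpha$ chosen proportionally). Specifically: (i) if $\beta$ is too small ($\beta \le \lambda_3\tau$), the true-token mass $\frac{p_1\beta}{z_1}$ stays bounded away from $c^{-1}$, so the ``$(1-q)^2$'' term is $\Omega(1)$; (ii) if $\beta$ is too large ($\beta \ge \lambda_4 T$), the Case-3 leakage term $p_r\,q(\frac{p_2\beta}{z_2})^2$ fails to vanish and is $\Omega(1)$; (iii) if $\alpha$ is too large relative to $\beta$ ($\alpha \ge \lambda_5\beta$), the $(v-1)$ off-target same-topic terms $q(\frac{p_1\alpha}{z_1})^2(v-1)$ dominate; the specific constant $\lambda_5 = \frac{1}{(v-1)(1-(1-p_c)p_m)}$ is what makes the off-target contribution comparable to the on-target one, and the ``no sparsely supported topics'' hypothesis $v > (\frac{1}{1-(1-p_c)p_m}+1)^2+1$ is exactly what is needed for $\sqrt{v-1}-2+(1-p_c)p_m > 0$ so that $\lambda_4$ is a positive finite constant. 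The interior witness then verifiably beats all three boundary regions by a non-vanishing margin, so $\argmin L$ lies in the claimed box.

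The main obstacle I anticipate is the Case-3 (absent-topic) bookkeeping combined with pinning down the exact constants $\lambda_3,\lambda_4,\lambda_5$: unlike the block-wise proof, where the sum over a topic's tokens smoothed out the individual $\alpha$ vs $\beta$ dependence, here one must track the on-target term ($\propto\beta$) and the $(v-1)$ off-target terms ($\propto\alpha$) \emph{separately} through the normalizers $z_1,z_2$, and show these two mechanisms trade off in a way that forces both a two-sided bound on $\beta$ and a one-sided bound on $\alpha/\beta$. Getting the asymptotic expansions of $q(\frac{p_1\beta}{z_1})$, $q(\frac{p_1\alpha}{z_1})$, and $q(\frac{p_2\beta}{z_2})$ consistent across all three observed-token cases, and then verifying the $\Omega(1)$ separations with the particular numerical constants (the $100$ in $\lambda_3$, the $\sqrt{v-1}$ in $\lambda_4$), is the delicate computational core; everything else is a routine adaptation of the earlier argument. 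As in Remark~\ref{rem:challenge_of_precise_optima}, I would only aim for these necessary conditions on $(\alpha,\beta)$, not an $o(1)$-precise characterization.
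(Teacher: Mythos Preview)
Your plan is essentially the paper's own argument: compute $L(\alpha,\beta)$ via the diagonal action $(\mD^V\vy)_i=q(y_i)$, split over the observed-token cases, keep only the asymptotically surviving Case~1 and Case~3 contributions, and then rule out each bad region by comparing against an interior witness of order $\sqrt{\tau T}$. The paper in fact splits ``same topic'' into the two sub-cases $\tilde w_j=w_j$ and $\tilde w_j\neq w_j,\ \texttt{topic}(\tilde w_j)=\texttt{topic}(w_j)$ (four cases total), but the second sub-case has probability $\tfrac{p_r}{T}(1-\tfrac1v)$ and vanishes, so your three-case split lands in the same place.

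One point to correct in your case (iii): when $\alpha\ge\lambda_5\beta$, the $\Omega(1)$ penalty does \emph{not} come from the off-target terms $q(\tfrac{p_1\alpha}{z_1})^2(v-1)$ ``dominating''. At the boundary $(v-1)\alpha=c_4\beta$ those terms are of order $c_4^4/\big((1+c_4)^2(v-1)\big)$, which is small for large $v$ and cannot beat the witness. The paper instead observes that $(v-1)\alpha\ge c_4\beta$ forces $\tfrac{c_4\beta}{\beta+(v-1)\alpha+\cdots}\le\tfrac{c_4}{1+c_4}<1$, so the \emph{on-target} shortfall $(1-q(\tfrac{p_1\beta}{z_1}))^2$ is at least $(1+c_4)^{-2}$; that is the $\Omega(1)$ gap you need. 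Relatedly, the paper's witness is $\alpha=\sqrt{\tau T}$ with the specific ratio $\beta=\tfrac{v-1}{c_4-1}\alpha$, chosen so that $q(\tfrac{p_1\beta}{z_1})\to 1$ exactly and the loss collapses to $p_c\tfrac{(c_4-1)^2}{v-1}+p_r$; a generic ``$\alpha$ proportional to $\beta$'' will not make the on-target term vanish. With these two fixes your outline matches the paper's proof.
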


\begin{proof}
Following the same steps leading to \eqref{eq:l_gamma},
define:
\begin{equation}
    \label{eq:l_alpha_beta}
    L(\alpha, \beta) \coloneqq \frac{1}{p_m N} \E_{\mX \sim \mathcal{D}_\mX} \E_{M} \left[ \sum_{j \in M, \tilde{w}_j \ne \texttt{[MASK]} } \| \mD^V \tilde{\mX} A(\tilde{\mX})_{:j} - \mX_{:j} \|_2^2 \right]
\end{equation}

Note that $\forall \vy \in \R^{T v + 1}$
\[ (\mD^V \vy)_i = \begin{cases}
        0, \quad & i = 0 \\ 
        q(y_i), \quad & i \in \{ 1, \cdots, T v + 1 \}
    \end{cases}  \]
in which 
$ q(x) \coloneqq \frac{1}{1-(1-p_c) p_m} x - \frac{p_m p_r}{(1-(1-p_c)p_m)Tv} $

In our context, we will consider $\vy = \tilde{\mX} A(\tilde{\mX})_{:j}$ in $L(\alpha, \beta)$ above.

For a document $\vw$ which contains topics $t_1, \cdots, t_\tau \in [T]$,
there are the following cases:

\paragraph{Case 1: $\tilde{w}_j = w_j$}
When $\tilde{w}_j$ after masking is the correct token $w_j$. 
(This happens with probability $p_c + \frac{p_r}{v T}$)

By \eqref{eq:pre_Wv_weighted_sum},
\begin{equation}
    \label{eq:pre_Wv_weighted_sum_case_majority}
    \left[ \tilde{\mX} A(\tilde{\mX}) \right]_{ij} = \begin{cases}
        \frac{p_1 \beta}{z_1}, \quad &\text{if } i = \tilde{w}_j  \\
        \frac{p_1 \alpha}{z_1}, \quad &\text{if } i \ne \tilde{w}_j, \texttt{topic}(i) = \texttt{topic}(\tilde{w}_j) \\
        \frac{p_1}{z_1}, \quad &\text{if } \texttt{topic}(i) \ne \texttt{topic}(\tilde{w}_j), \texttt{topic}(i) \in \{t_1, \cdots, t_\tau\}  \\
        \frac{p_2}{z_1}, \quad &\text{if } \texttt{topic}(i) \ne \texttt{topic}(\tilde{w}_j), \texttt{topic}(i) \notin \{t_1, \cdots, t_\tau\}
    \end{cases}
\end{equation}

Recall that the label is
$ \mX_{:j} = \begin{cases}
        1, \quad & i = w_j  \\
        0, \quad & i \in \{0, \cdots, T v\} \backslash w_j
    \end{cases} $

Hence the contribution to the loss from token $\tilde{w}_j$ is
\begin{align*}
    &\quad (p_c + \frac{p_r}{v T}) [\left(1-q\left(\frac{p_1 \beta}{z_1}\right)\right)^2 + q(\frac{p_1 \alpha}{z_1})^2 \cdot (v-1) + q(\frac{p_1}{z_1})^2 \cdot v (\tau-1) + q(\frac{p_2}{z_1})^2 \cdot v (T-\tau) ] \\
    &=(p_c + \frac{p_r}{v T}) [ (1-q(\frac{p_1 \beta}{\beta p_1 + \alpha p_1 (v-1) + p_1 v (\tau-1) + p_2 v (T-\tau)}))^2 \\
    &+ q(\frac{p_1 \alpha}{\beta p_1 + \alpha p_1 (v-1) + p_1 v (\tau-1) + p_2 v (T-\tau)})^2 (v-1) \\
    &+ q(\frac{p_1}{\beta p_1 + \alpha p_1 (v-1) + p_1 v (\tau-1) + p_2 v (T-\tau)})^2 v (\tau - 1) \\
    &+ q(\frac{p_2}{\beta p_1 + \alpha p_1 (v-1) + p_1 v (\tau-1) + p_2 v (T-\tau)})^2 v (T-\tau) ]
\end{align*}

Plugging in the asymptotics from Assumption~\ref{assumption:asymptotic}, 
the above becomes 
\begin{equation}
    \label{eq:case1l} 
    \begin{split}
    &p_c [ 
   (1-q(\frac{p_1 \beta}{\beta p_1 + \alpha p_1 (v-1) + p_1 v (\tau-1) + p_2 v (T-\tau)}))^2 \\
   &\quad + q(\frac{p_1 \alpha}{\beta p_1 + \alpha p_1 (v-1) + p_1 v (\tau-1) + p_2 v (T-\tau)})^2 (v-1) \pm O(\frac{1}{\tau}) 
   ] 
    \end{split}
\end{equation}

\paragraph{Case 2: $\tilde{w}_j \ne w_j, \texttt{topic}(\tilde{w}_j) = \texttt{topic}(w_j)$}
When $\tilde{w}_j$ after masking is not the correct token but belongs to the same topic as the correct token $w_j$. 
(This happens with probability $\frac{p_r}{T} (1-\frac{1}{v})$)

$\left[ \tilde{\mX} A(\tilde{\mX}) \right]_{:j}$ is the same as \eqref{eq:pre_Wv_weighted_sum_case_majority}.

Hence the loss is
\begin{align*}
    &\quad \frac{p_r}{T}(1-\frac{1}{v}) [(1-q(\frac{p_1 \alpha}{z_1}))^2 + q(\frac{p_1 \beta}{z_1}))^2 + q(\frac{p_1 \alpha}{z_1})^2 \cdot (v-2) + q(\frac{p_1}{z_1})^2 \cdot v (\tau-1) + q(\frac{p_2}{z_1})^2 \cdot v (T-\tau)] \\
    &= \frac{p_r}{T}(1-\frac{1}{v}) [(1-q(\frac{p_1 \alpha}{\beta p_1 + \alpha p_1 (v-1) + p_1 v (\tau-1) + p_2 v (T-\tau)}))^2 \\
    &+ q(\frac{p_1 \beta}{\beta p_1 + \alpha p_1 (v-1) + p_1 v (\tau-1) + p_2 v (T-\tau)})^2 \\
    &+ q(\frac{p_1 \alpha}{\beta p_1 + \alpha p_1 (v-1) + p_1 v (\tau-1) + p_2 v (T-\tau)})^2 (v-2) \\
    &+ q(\frac{p_1}{\beta p_1 + \alpha p_1 (v-1) + p_1 v (\tau-1) + p_2 v (T-\tau)})^2 v (\tau - 1) \\
    &+ q(\frac{p_2}{\beta p_1 + \alpha p_1 (v-1) + p_1 v (\tau-1) + p_2 v (T-\tau)})^2 v (T-\tau)] 
\end{align*}

Plugging in the asymptotics from Assumption~\ref{assumption:asymptotic}, 
the above terms vanish.

\paragraph{Case 3: $\texttt{topic}(\tilde{w}_j) \in \{t_1, \cdots, t_\tau\} \backslash \{\texttt{topic}(w_j)\}$}
When $\tilde{w}_j$ after masking belongs to a different topic from that of the correct token $w_j$, but still a topic existing in $\vw$. 
(This happens with probability $\frac{p_r (\tau - 1)}{T}$)

$\left[ \tilde{\mX} A(\tilde{\mX}) \right]_{:j}$ is the same as \eqref{eq:pre_Wv_weighted_sum_case_majority}.

Hence the loss is
\begin{align*}
    &\quad p_r \frac{\tau-1}{T} [(1-q(\frac{p_1}{z_1}))^2 + q(\frac{p_1 \beta}{z_1}))^2 + q(\frac{p_1 \alpha}{z_1})^2 \cdot (v-1) + q(\frac{p_1}{z_1})^2 \cdot (v (\tau-1) - 1) + q(\frac{p_2}{z_1})^2 \cdot v (T-\tau)] \\
    &= p_r \frac{\tau-1}{T} [(1-q(\frac{p_1}{\beta p_1 + \alpha p_1 (v-1) + p_1 v (\tau-1) + p_2 v (T-\tau)})^2 \\
    &+ q(\frac{p_1 \beta}{\beta p_1 + \alpha p_1 (v-1) + p_1 v (\tau-1) + p_2 v (T-\tau)})^2 \\
    &+ q(\frac{p_1 \alpha}{\beta p_1 + \alpha p_1 (v-1) + p_1 v (\tau-1) + p_2 v (T-\tau)})^2 (v-1) \\
    &+ q(\frac{p_1}{\beta p_1 + \alpha p_1 (v-1) + p_1 v (\tau-1) + p_2 v (T-\tau)})^2 (v (\tau-1) - 1) \\
    &+ q(\frac{p_2}{\beta p_1 + \alpha p_1 (v-1) + p_1 v (\tau-1) + p_2 v (T-\tau)})^2 v (T-\tau) ]
\end{align*}

Plugging in the asymptotics from Assumption~\ref{assumption:asymptotic}, 
the above terms vanish.

\paragraph{Case 4: $\texttt{topic}(\tilde{w}_j) \in [T] \backslash \{t_1, \cdots, t_\tau\}$}
When $\tilde{w}_j$ after masking belongs to a topic that does not exist in $\vw$. 
(This happens with probability $p_r (1-\frac{\tau}{T})$)

By \eqref{eq:pre_Wv_weighted_sum},
\begin{equation}
    \label{eq:pre_Wv_weighted_sum_case_minority_diagonal}
    \left[ \tilde{\mX} A(\tilde{\mX}) \right]_{ij} = \begin{cases}
        \frac{p_2 \beta}{z_2}, \quad &\text{if } i = \tilde{w}_j  \\
        \frac{p_2 \alpha}{z_2}, \quad &\text{if } i \ne \tilde{w}_j, \texttt{topic}(i) = \texttt{topic}(\tilde{w}_j)  \\
        \frac{p_1}{z_2}, \quad &\text{if } \texttt{topic}(i) \ne \texttt{topic}(\tilde{w}_j), \texttt{topic}(i) \in \{t_1, \cdots, t_\tau\}  \\
        \frac{p_2}{z_2}, \quad &\text{if } \texttt{topic}(i) \ne \texttt{topic}(\tilde{w}_j), \texttt{topic}(i) \notin \{t_1, \cdots, t_\tau\}
    \end{cases}
\end{equation}

Hence the loss is
\begin{align*}
    &\quad p_r (1-\frac{\tau}{T}) [(1-q(\frac{p_1}{z_2}))^2 + q(\frac{p_1}{z_2})^2 \cdot (v \tau - 1) + q(\frac{p_2 \beta}{z_2})^2 + q(\frac{p_2 \alpha}{z_2})^2 \cdot (v-1) + q(\frac{p_2}{z_2})^2 \cdot v (T-\tau - 1)] \\
    &= p_r (1-\frac{\tau}{T}) [(1-q(\frac{p_1}{\beta p_2 + \alpha p_2 (v-1) + p_1 v \tau + p_2 v (T-\tau-1)}))^2 \\
    &+ q(\frac{p_1}{\beta p_2 + \alpha p_2 (v-1) + p_1 v \tau + p_2 v (T-\tau-1)})^2 (v \tau - 1) \\
    &+ q(\frac{p_2 \beta}{\beta p_2 + \alpha p_2 (v-1) + p_1 v \tau + p_2 v (T-\tau-1)})^2 + q(\frac{p_2 \alpha}{\beta p_2 + \alpha p_2 (v-1) + p_1 v \tau + p_2 v (T-\tau-1)})^2 (v-1) \\
    &+ q(\frac{p_2}{\beta p_2 + \alpha p_2 (v-1) + p_1 v \tau + p_2 v (T-\tau-1)})^2 v (T-\tau-1)]
\end{align*}

Plugging in the asymptotics from Assumption~\ref{assumption:asymptotic}, 
the above becomes 
\begin{equation}
    \label{eq:case4l}
    \begin{split}
    &p_r [ (1 + q(\frac{p_2 \beta}{\beta p_2 + \alpha p_2 (v-1) + p_1 v \tau + p_2 v (T-\tau-1)})^2 ) \\
    &\quad + q(\frac{p_2 \alpha}{\beta p_2 + \alpha p_2 (v-1) + p_1 v \tau + p_2 v (T-\tau-1)})^2 (v-1) ] \pm o(1)
    \end{split}
\end{equation}

\paragraph{Combining the above cases}
Adding \eqref{eq:case1l} and \eqref{eq:case4l}, we can see in the asymptotic regime of interest: 
\begin{align*}
    &L(\alpha, \beta) = p_c [ 
   (1-q(\frac{p_1 \beta}{\beta p_1 + \alpha p_1 (v-1) + p_1 v (\tau-1) + p_2 v (T-\tau)}))^2 \\
   &\quad + q(\frac{p_1 \alpha}{\beta p_1 + \alpha p_1 (v-1) + p_1 v (\tau-1) + p_2 v (T-\tau)})^2 (v-1)]  \\
    &\quad + p_r [ (1 + q(\frac{p_2 \beta}{\beta p_2 + \alpha p_2 (v-1) + p_1 v \tau + p_2 v (T-\tau-1)})^2 ) \\
    &\quad + q(\frac{p_2 \alpha}{\beta p_2 + \alpha p_2 (v-1) + p_1 v \tau + p_2 v (T-\tau-1)})^2 (v-1) ] \pm o(1) \\
    &= p_c [ 
   (1-\frac{\frac{1}{v \tau} \beta}{\beta \frac{1}{c_4 v \tau} + \alpha \frac{v-1}{c_4 v \tau} + \frac{1}{c_4} + p_m p_r})^2 
   + (\frac{\frac{1}{v \tau} \alpha}{\beta \frac{1}{c_4 v \tau} + \alpha \frac{v-1}{c_4 v \tau} + \frac{1}{c_4} + p_m p_r})^2 (v-1)]  \\
    &\quad + p_r [ (1 + (\frac{c_4 \frac{p_m p_r}{v T} \beta}{\beta \frac{p_m p_r}{v T} + \alpha \frac{p_m p_r (v-1)}{v T} + \frac{1}{c_4} + p_m p_r})^2 ) 
    + (\frac{c_4 \frac{p_m p_r}{v T} \alpha}{\beta \frac{p_m p_r}{v T} + \alpha \frac{p_m p_r (v-1)}{v T} + \frac{1}{c_4} + p_m p_r})^2 (v-1) ] \pm o(1)
\end{align*}

in which the constant $c_4$ is defined as
\begin{itemize}
    \item $c_4 \coloneqq \frac{1}{1-(1-p_c) p_m} \in (1, 2)$ by Assumption~\ref{assumption:correct_random_balance}.
\end{itemize}

\begin{equation} \label{eq:l_alpha_beta_diagonal_asymptotic}
\begin{split}
&L(\alpha, \beta) = p_c [ 
(1-\frac{c_4 \beta}{\beta + (v-1) \alpha + (1 + c_4 p_m p_r) v \tau})^2 
+ (\frac{c_4 \alpha}{\beta + (v-1) \alpha + (1 + c_4 p_m p_r) v \tau})^2 (v-1)]  \\
&\quad + p_r [ (1 + (\frac{c_4 \beta}{\beta + (v-1) \alpha + (\frac{1}{c_4 p_m p_r} + 1) v T})^2 ) 
+ (\frac{c_4 \alpha}{\beta + (v-1) \alpha + (\frac{1}{c_4 p_m p_r} + 1) v T})^2 (v-1) ] \pm o(1) \\
\end{split}
\end{equation}

We will again consider several possible cases for $\alpha, \beta$ in \eqref{eq:l_alpha_beta_diagonal_asymptotic}. 
\begin{itemize}
    \item \textbf{Case 1,  $\mathbf{\beta \le \frac{1 + c_4 p_m p_r}{100 c_4} v \tau}$}: 
    then $\frac{c_4 \beta}{\beta + (v-1) \alpha + (1 + c_4 p_m p_r) v \tau} \le \frac{c_4 \beta}{\beta + 100 c_4 \beta} < \frac{1}{100}$, 
    and hence 
    \[ L(\alpha, \beta) \ge p_c (1-\frac{1}{100})^2 + p_r \pm o(1) \]
    \item \textbf{Case 2,  $\mathbf{\beta > \frac{1 + c_4 p_m p_r}{100 c_4} v \tau}$}: 
    we have the following subcases:
    \begin{itemize}
        \item If $\alpha \ge \frac{c_4}{v-1} \beta$, 
        then $\frac{c_4 \beta}{\beta + (v-1) \alpha + (1 + c_4 p_m p_r) v \tau} < \frac{c_4 \beta}{\beta + (v-1) \alpha} \le \frac{c_4 \beta}{\beta + c_4 \beta} < \frac{c_4}{1 + c_4}$,
        and hence by \eqref{eq:l_alpha_beta_diagonal_asymptotic} $L(\alpha, \beta) \ge p_c (1-\frac{c_4}{1 + c_4})^2 + p_r \pm o(1)$.
        \item If $\alpha < \frac{c_4}{v-1} \beta$, 
        then $\frac{c_4 \beta}{\beta + (v-1) \alpha + (\frac{1}{c_4 p_m p_r} + 1) v T} > \frac{c_4 \beta}{\beta + c_4 \beta + (\frac{1}{c_4 p_m p_r} + 1) v T}$
        \begin{itemize}
            \item If $\beta \ge c_7 (\frac{1}{c_4 p_m p_r} + 1) v T$ (for some constant $c_7 \coloneqq \frac{1}{c_4 (\sqrt{v-1} - \frac{1}{c_4} - 1)}$),
            then $\frac{c_4 \beta}{\beta + (v-1) \alpha + (\frac{1}{c_4 p_m p_r} + 1) v T} > \frac{c_4 \beta}{\beta + c_4 \beta + (\frac{1}{c_4 p_m p_r} + 1) v T} \ge \frac{c_4 \beta}{\beta + c_4 \beta + \frac{1}{c_7} \beta} = \frac{c_4}{1 + c_4 + \frac{1}{c_7}}$, 
            and hence $L(\alpha, \beta) > p_r [1 + (\frac{c_4}{1 + c_4 + \frac{1}{c_7}})^2] \pm o(1)$
            \item If $\beta < c_7 (\frac{1}{c_4 p_m p_r} + 1) v T$:
            note that this case is the complement of all cases (and subcases) above,
            and so we have considered all possibilities.
            We will show that there exists $(\alpha, \beta)$ in this case such that $L(\alpha, \beta)$ is smaller (by an $\Omega(1)$ constant difference) than the lower bound of $L(\alpha, \beta)$ proven in all cases above,
            based on which we know $\argmin L(\alpha, \beta)$ cannot lie in any of the above cases,
            and thus conclude that $\argmin L(\alpha, \beta)$ is within this case.
            
            Specifically: let 
            $\alpha = \sqrt{\tau T}$ 
            and $\beta = \frac{v-1}{c_4 - 1} \alpha = \frac{v-1}{c_4 - 1} \sqrt{\tau T}$,
            then 
            \begin{align*}
                \frac{c_4 \beta}{\beta + (v-1) \alpha + (1 + c_4 p_m p_r) v \tau} &= \frac{c_4 \frac{v-1}{c_4 - 1}}{\frac{v-1}{c_4 - 1} + (v-1)} \pm o(1) = 1 \pm o(1) \\
                \frac{c_4 \alpha}{\beta + (v-1) \alpha + (1 + c_4 p_m p_r) v \tau} &= \frac{c_4}{\frac{v-1}{c_4 - 1} + (v-1)} \pm o(1) = \frac{c_4 - 1}{v-1} \pm o(1) \\
                \frac{c_4 \beta}{\beta + (v-1) \alpha + (\frac{1}{c_4 p_m p_r} + 1) v T} &= o(1) \\
                \frac{c_4 \alpha}{\beta + (v-1) \alpha + (\frac{1}{c_4 p_m p_r} + 1) v T} &= o(1) 
            \end{align*}
            Plugging into \eqref{eq:l_alpha_beta_diagonal_asymptotic}:
            \begin{align*}
                L(\alpha, \beta) &= p_c [ 
                (1-(1 \pm o(1)))^2 
                + (\frac{c_4 - 1}{v-1} \pm o(1))^2 (v-1)]  
                + p_r [ (1 + (o(1))^2 ) 
                + (o(1))^2 (v-1) ] \pm o(1) \\
                &= p_c [ \frac{(c_4 - 1)^2}{v-1}]  
                + p_r \pm o(1) \\
                &< p_c \frac{1}{v-1} + p_r \pm o(1)
            \end{align*}
        \end{itemize}
        Note that this is smaller than all previous cases, because
        \begin{itemize}
            \item $\frac{1}{v-1} < (1-\frac{1}{100})^2$ since $v$ is a large finite constant (see Assumption~\ref{assumption:asymptotic} and Assumption~\ref{assumption:correct_random_balance}).
            \item $\frac{1}{v-1} < (1-\frac{c_4}{1 + c_4})^2$ since $v$ is a large finite constant (see Assumption~\ref{assumption:asymptotic} and Assumption~\ref{assumption:correct_random_balance}).
            \item $\frac{1}{v-1} < (\frac{c_4}{1 + c_4 + \frac{1}{c_7}})^2$ by the definition of $c_7$ above.
        \end{itemize}
    \end{itemize}
\end{itemize}

Therefore, we conclude that all $\alpha, \beta > 0$ that minimize $L(\alpha, \beta)$  must satisfy 
\begin{align*}
    \frac{1 + c_4 p_m p_r}{100 c_4} v \tau < &\beta < c_7 (\frac{1}{c_4 p_m p_r} + 1) v T \\
    \alpha &< \frac{c_4}{v-1} \beta
\end{align*}
\end{proof}

\begin{remark}
    Remark~\ref{rem:challenge_of_precise_optima} applies to this proof too. 
\end{remark}

\clearpage
\subsection{Loss landscape with respect to attention weights in the non-asymptotic setting}
\label{sec:appendix:attention:non_asymptotic}

When $T, \tau$ are finite, 
the loss expression turns out to be too complicated to characterize in closed form
(because all the $o(1)$ terms need to be expanded).
So we instead numerically compute the loss landscape as a function of $\alpha$ and $\beta$.

We set $T = 100$ following our experimental setup on Wikipedia dataset (in Section~\ref{sec:experiments}),
and $v = 300$ (so total vocabulary size $T v = 30000$) following the pre-trained BERT tokenizer in Huggingface implementation \cite{wolf2020transformers}.
We will vary $\tau \in \{20, 40, 60, 80\}$. 

\paragraph{Diagonal $\mW^V$}
First, when $\mW^V$ is fixed to a diagonal structure (Definition~\ref{def:diagonal_Wv}),
Theorem~\ref{thm:optimal_attention_weights_updated:WvI} predicts that 
the loss is lowest when $\beta$ is within an interval (boundaries controlled by $\tau$ and $T$),
and $\alpha$ is less than a constant multiple of $\beta$. 
Both constraints are visible in the non-asymptotic setting, as we show in the following:

\begin{figure}[!h]
  \centering
  \begin{minipage}[b]{0.23\textwidth}
    \includegraphics[width=\textwidth]{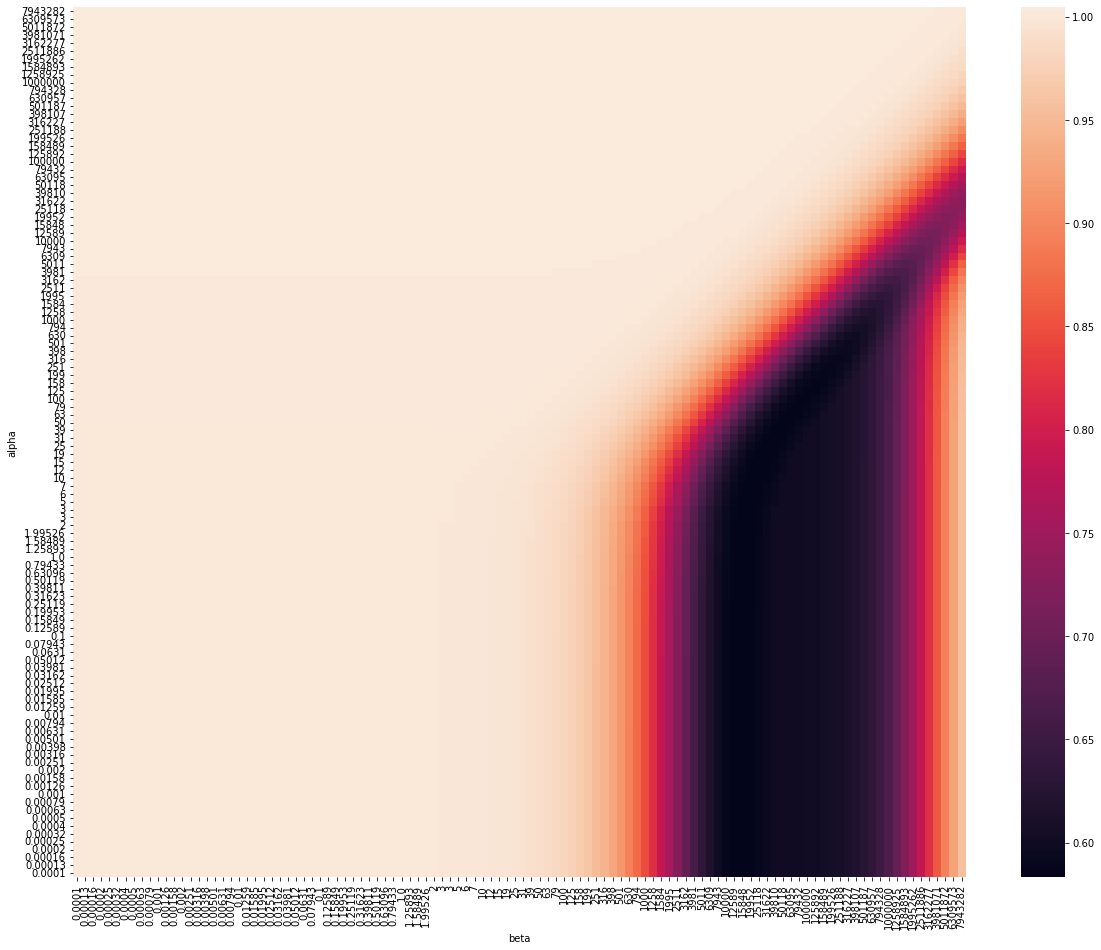}
  \end{minipage}
  \hfill
  \begin{minipage}[b]{0.23\textwidth}
    \includegraphics[width=\textwidth]{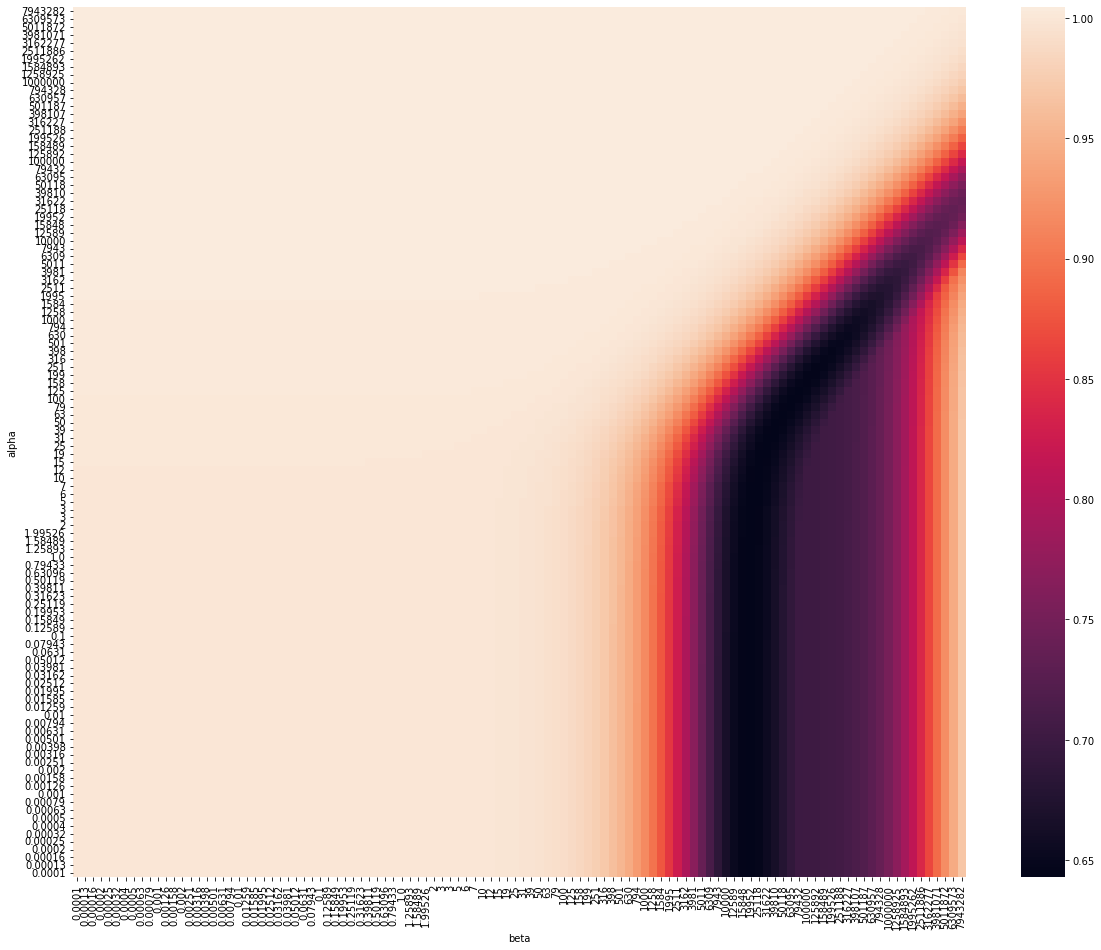}
  \end{minipage}
  \hfill
  \begin{minipage}[b]{0.23\textwidth}
    \includegraphics[width=\textwidth]{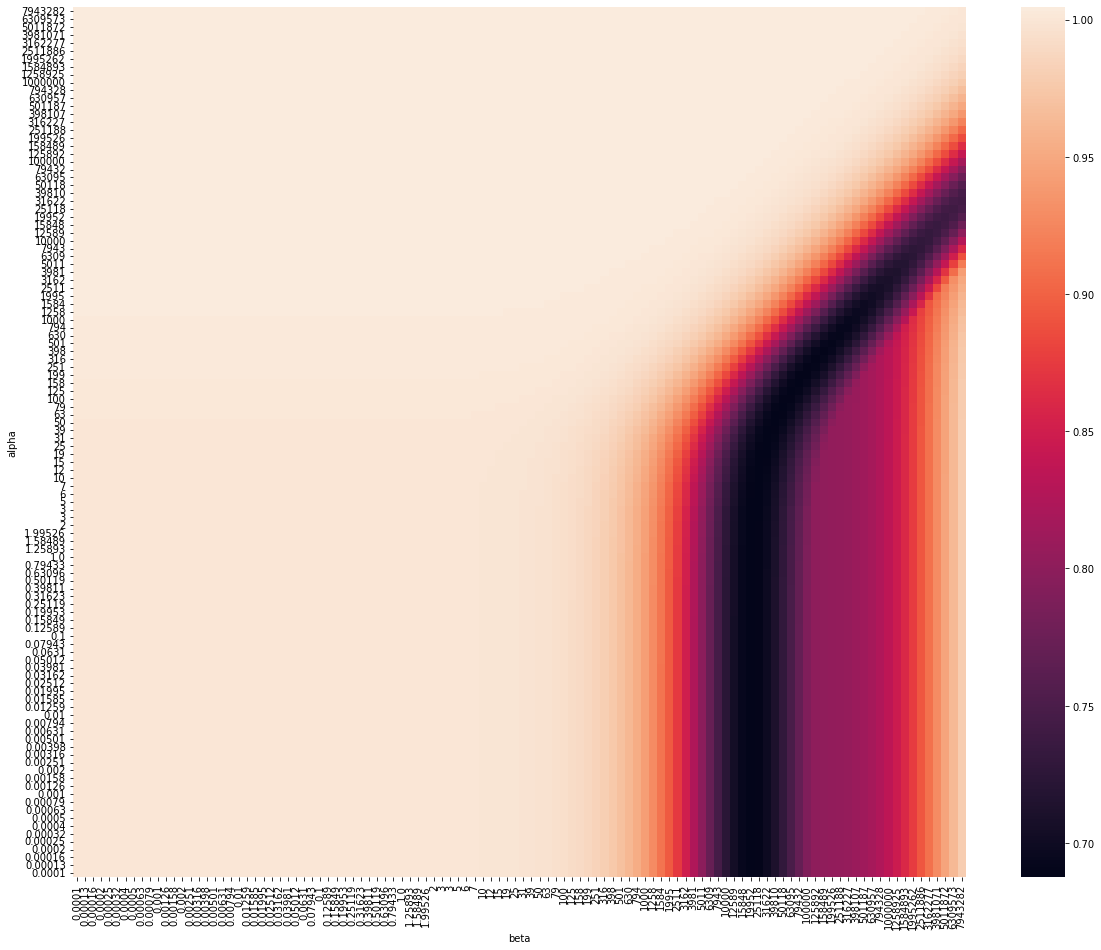}
  \end{minipage}
  \hfill
  \begin{minipage}[b]{0.23\textwidth}
    \includegraphics[width=\textwidth]{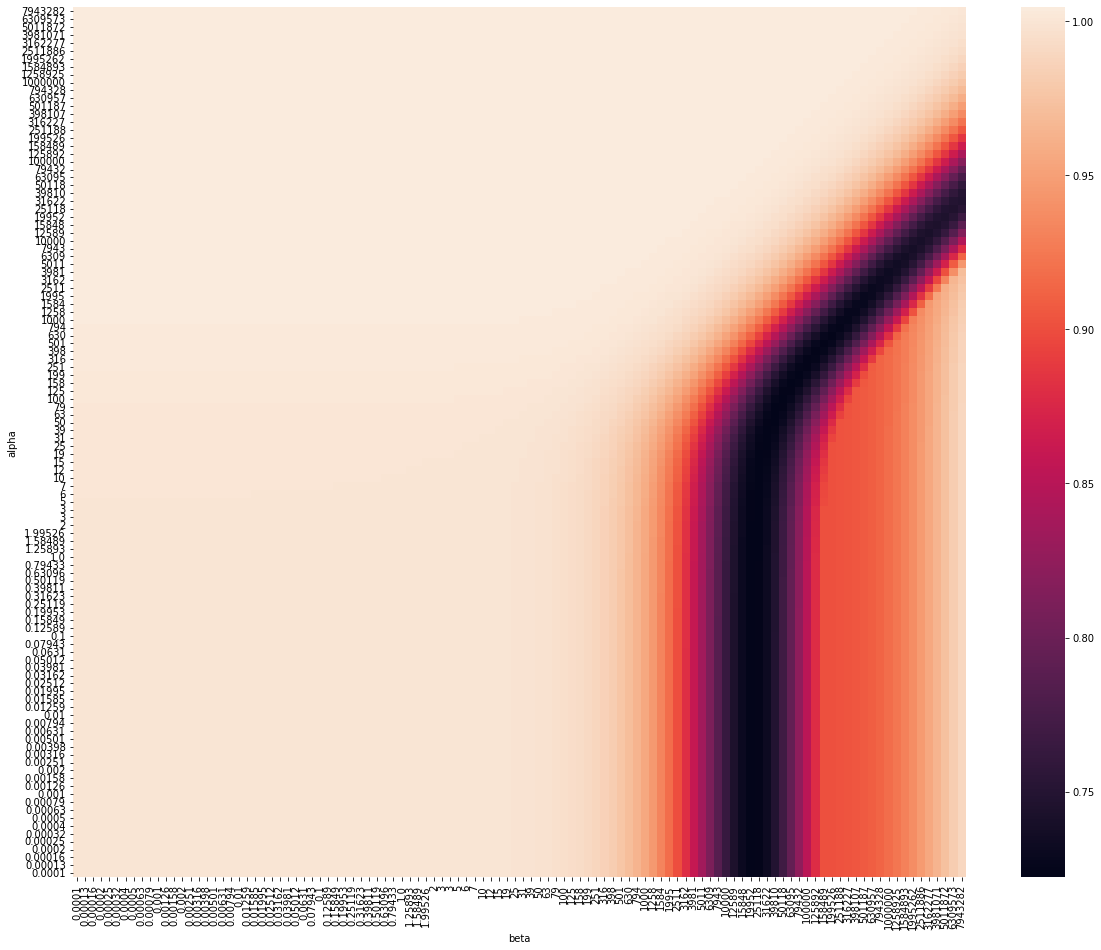}
  \end{minipage}
  \\~\\
  \caption{Landscape of squared loss under diagonal $\mW^V$ (Definition~\ref{def:diagonal_Wv}), $T=100, v=300$. 
  (left-to-right) $\tau = 20$, $\tau = 40$, $\tau = 60$, $\tau = 80$. 
  In each plot, we perform a grid search over $\alpha, \beta \in [10^{-4}, 10^7]$ (both axes use log-scale). 
  Darker color represents lower loss. 
  Across a wide range of $\tau$ (compared to $T$), the loss is lowest when $\beta$ is within an interval (lower bound growing with $\tau$),
  and the optimal $\alpha$ is less than a constant multiple of $\beta$. 
  }  
\end{figure}

\paragraph{$\mW^V$ with uniform blocks}
On the other hand, when $\mW^V$ is fixed to a block-wise structure with uniform blocks (i.e. optima in Theorem~\ref{thm:optimal_Wv_given_uniform_attention_l2reg}),
Theorem~\ref{thm:optimal_attention_weights_updated} predicts that
the loss is lowest when a convex combination of $\alpha$ and $\beta$ is within an interval (boundaries controlled by $\tau$ and $T$).
As we show in the following, a variant of this constraint visibly holds in the non-asymptotic setting.

\begin{figure}[!h]
  \centering
  \begin{minipage}[b]{0.23\textwidth}
    \includegraphics[width=\textwidth]{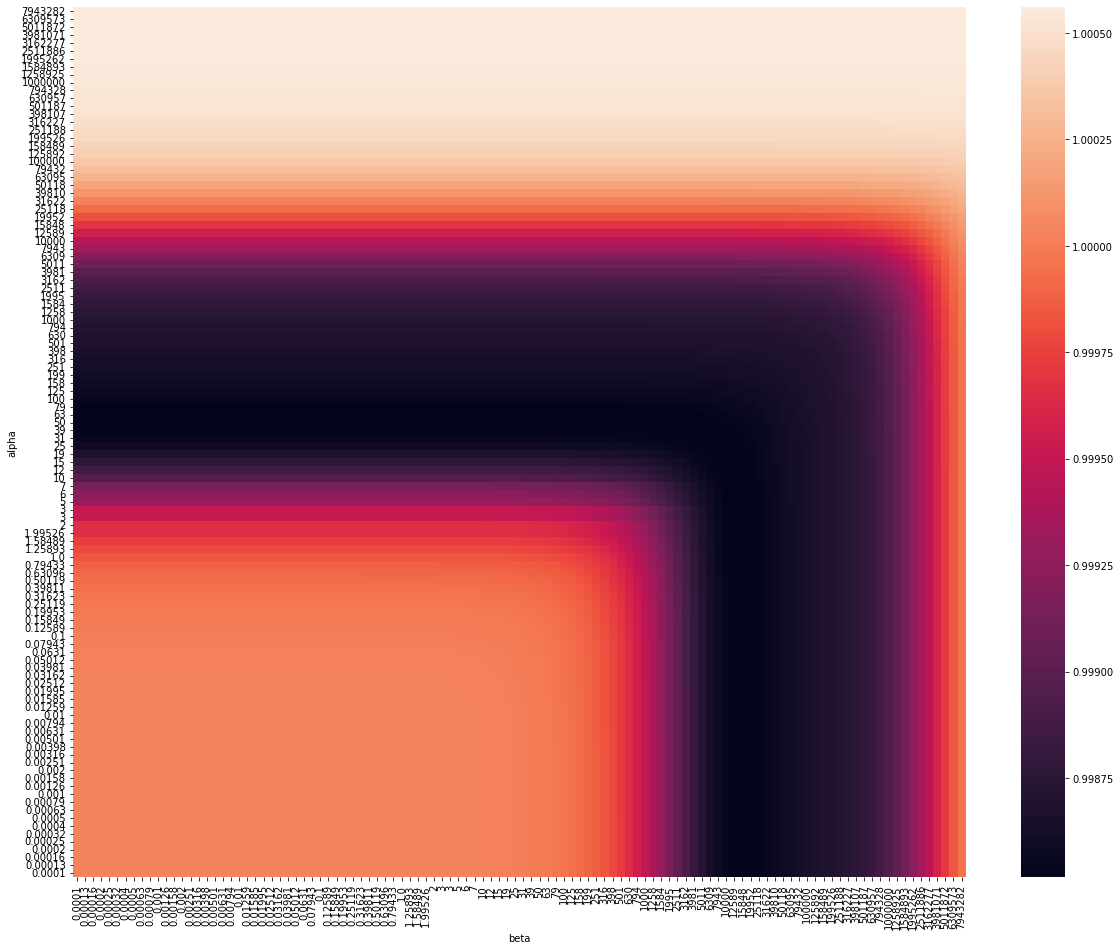}
  \end{minipage}
  \hfill
  \begin{minipage}[b]{0.23\textwidth}
    \includegraphics[width=\textwidth]{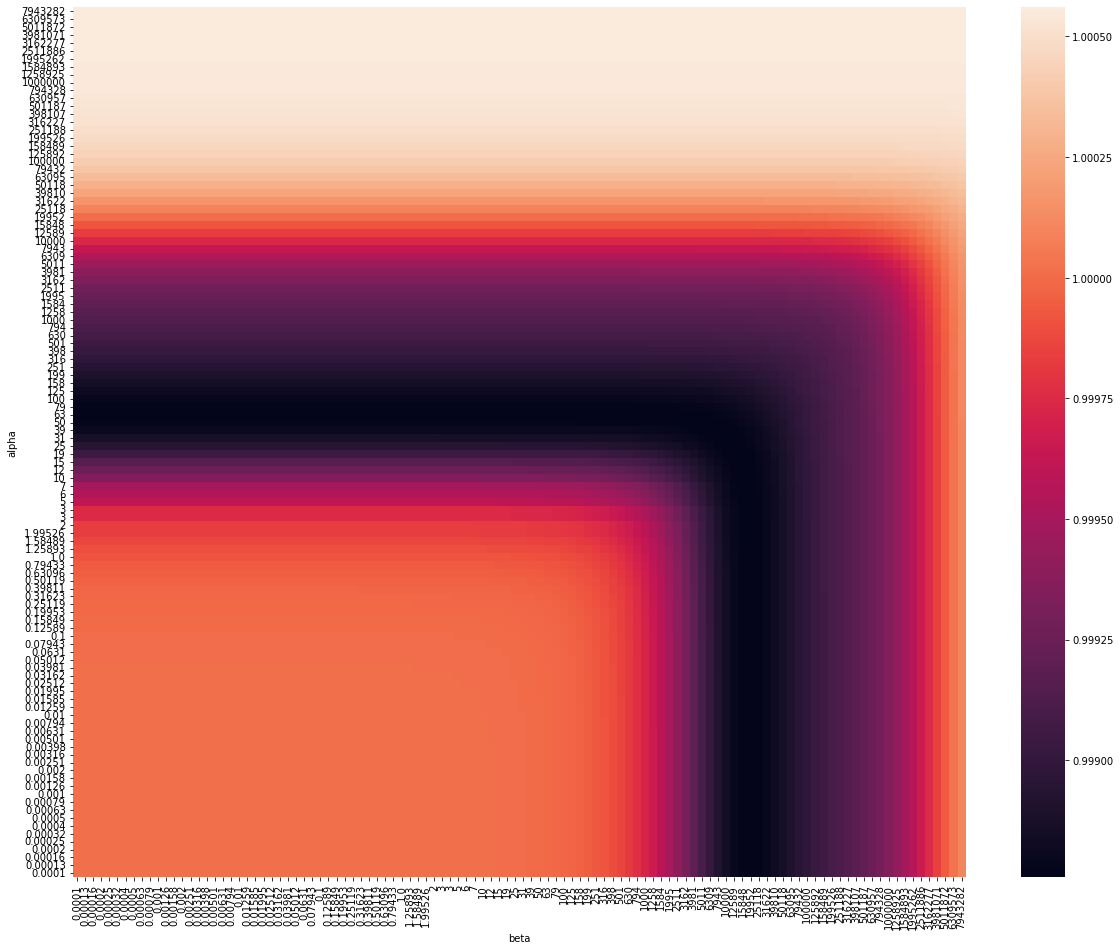}
  \end{minipage}
  \hfill
  \begin{minipage}[b]{0.23\textwidth}
    \includegraphics[width=\textwidth]{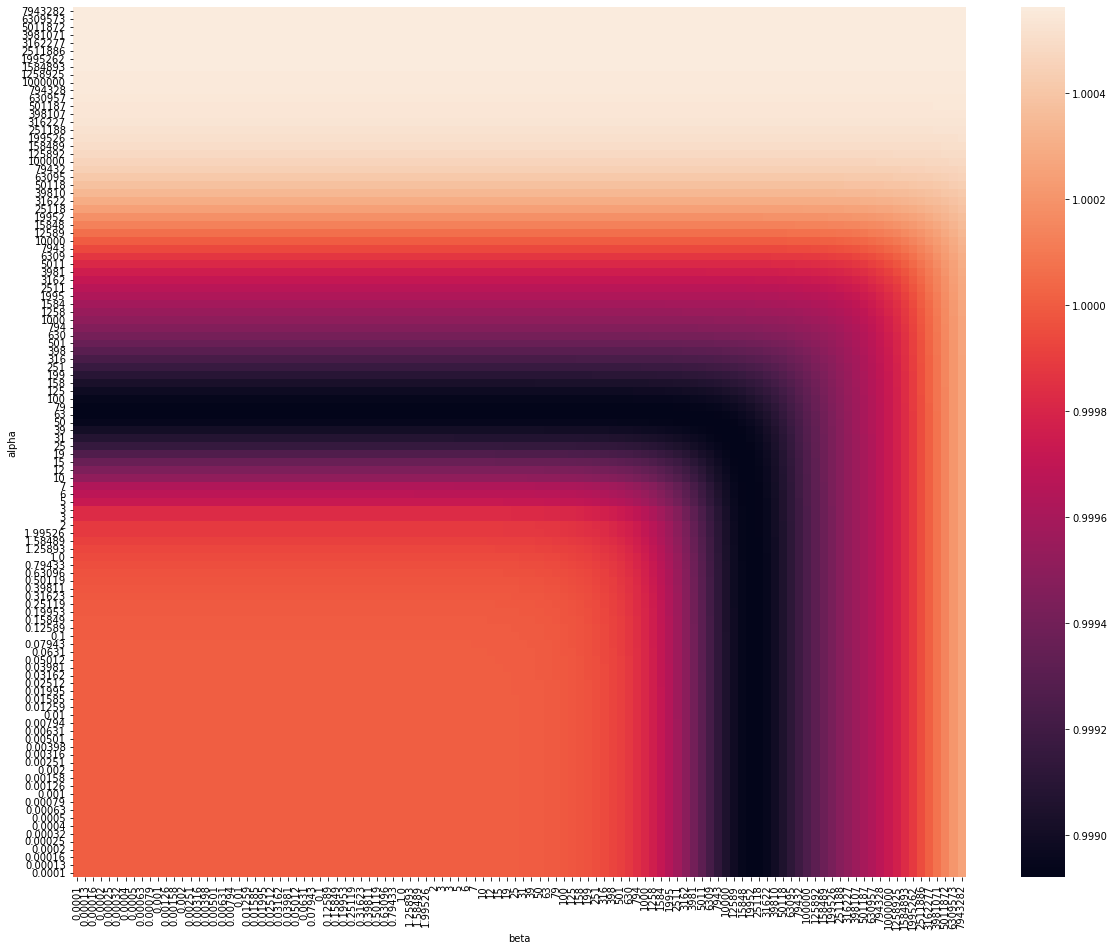}
  \end{minipage}
  \hfill
  \begin{minipage}[b]{0.23\textwidth}
    \includegraphics[width=\textwidth]{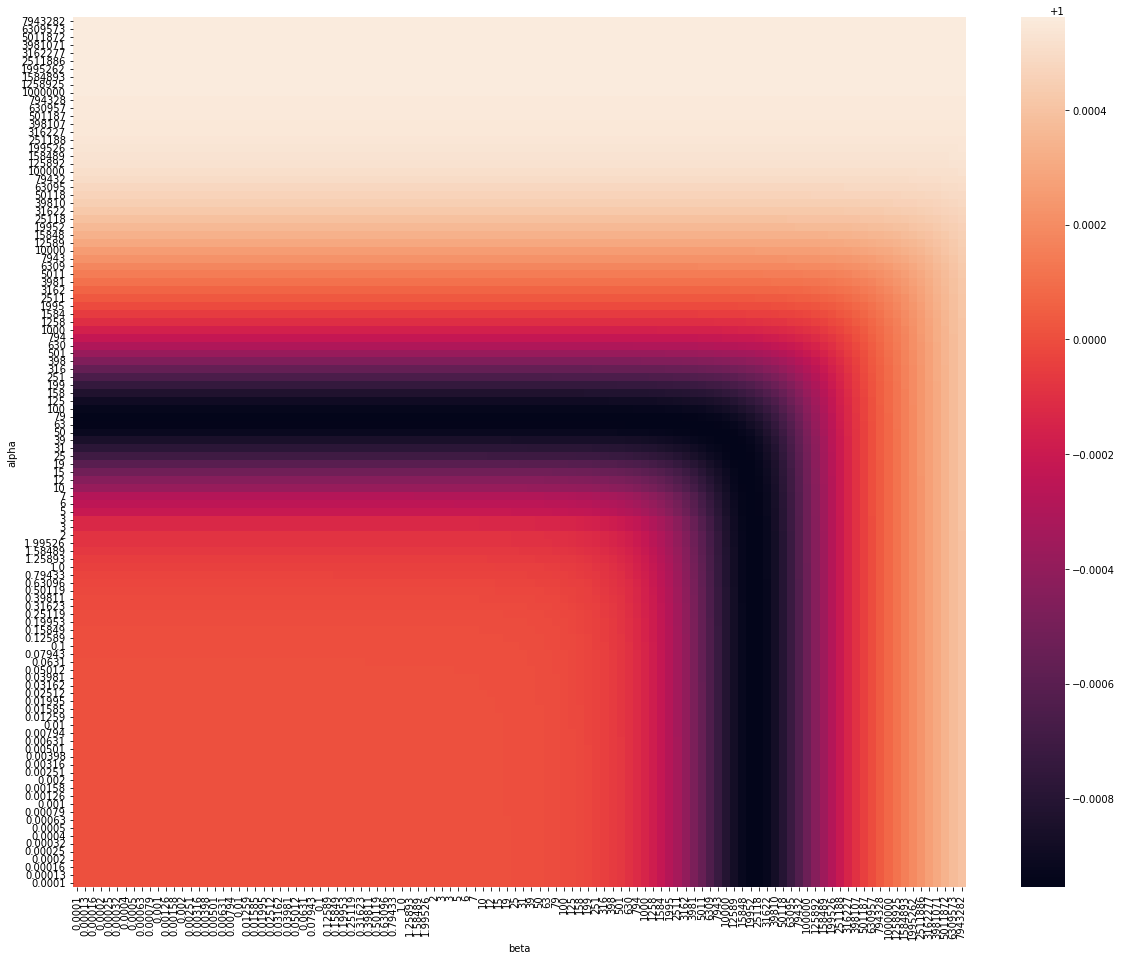}
  \end{minipage}
  \\~\\
  \caption{Landscape of squared loss for block-wise $\mW^V$ with uniform blocks (i.e. optima in Theorem~\ref{thm:optimal_Wv_given_uniform_attention_l2reg}), $T=100, v=300$. 
  (left-to-right) $\tau = 20$, $\tau = 40$, $\tau = 60$, $\tau = 80$. 
  In each plot, we perform a grid search over $\alpha, \beta \in [10^{-4}, 10^7]$ (both axes use log-scale). 
  Darker color represents lower loss. 
  Across a wide range of $\tau$ (compared to $T$), the loss is lowest when $(\alpha, \beta)$ is in some corner-shaped region (both $\alpha$ and $\beta$ are within some intervals whose lower bounds grow with $\tau$).
  }  
\end{figure}

\newpage
\clearpage
\section{ADDITIONAL EMPIRICAL RESULTS}
\label{sec:appendix:experiments}

\subsection{Additional results on learned value matrix \texorpdfstring{$\mW^V$}{Wv}}
\label{sec:appendix:experiments:Wv}

In Theorem~\ref{thm:optimal_Wv_given_uniform_attention_l2reg} and Figure~\ref{fig:Wv_one_hot_freeze_uniform_attention_l2reg} we have shown that 
when freezing uniform attention weights and one-hot word embedding,
under $L_2$-regularization,
training a single layer transformer on our synthetic topic modeling distribution (Section~\ref{sec:setup:topic_modeling}) would make its
$\mW^V$ converge to a block-wise pattern that encodes the topic structure.

In the following Figure~\ref{fig:Wv_one_hot_freeze_uniform_attention_no_l2reg}, we additionally show empirical results \emph{without} $L_2$-regularization,
matching our theory in Theorem~\ref{thm:optimal_Wv_given_uniform_attention}.

\begin{figure}[h]
  \centering
  \begin{minipage}[b]{0.23\textwidth}
    \includegraphics[width=\textwidth]{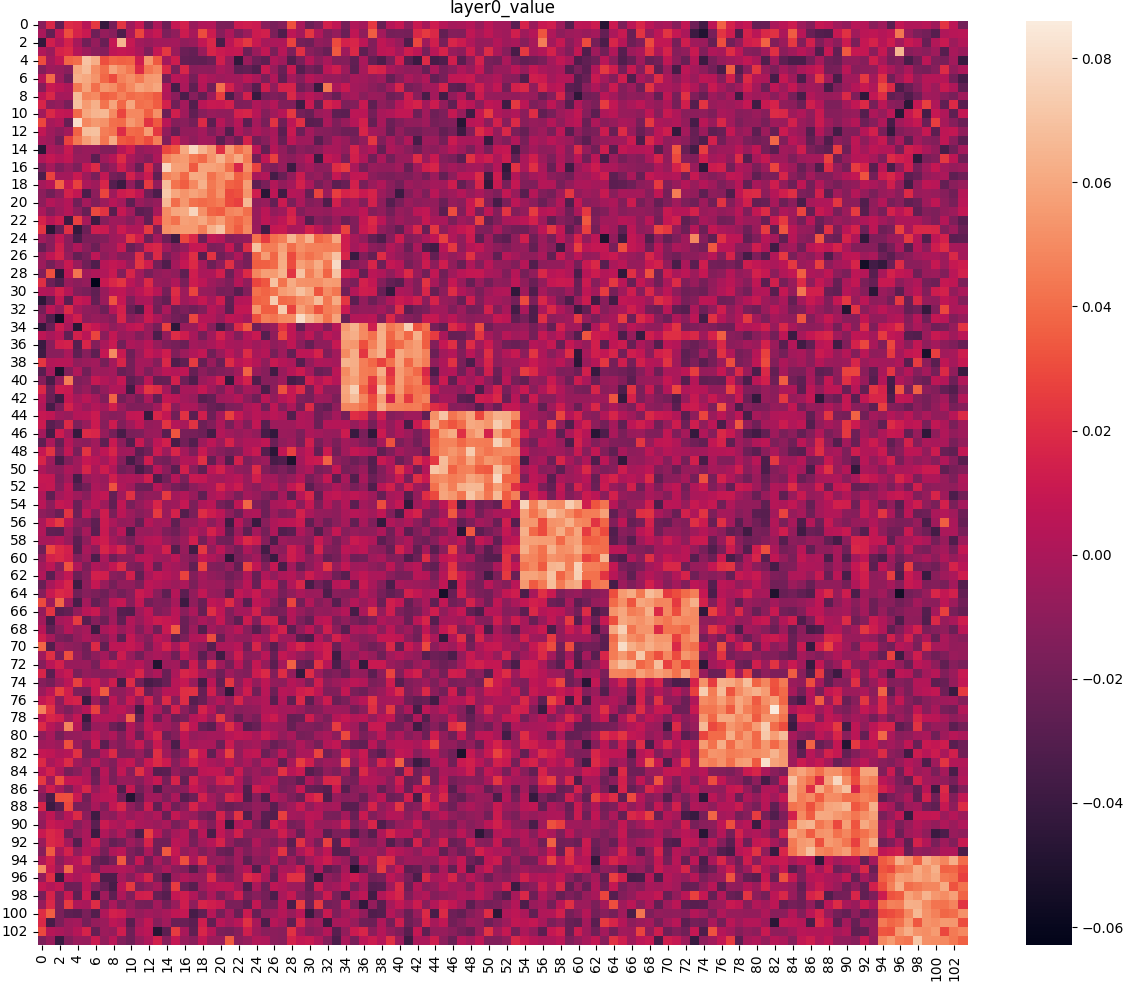}
  \end{minipage}
  \hfill
  \begin{minipage}[b]{0.23\textwidth}
    \includegraphics[width=\textwidth]{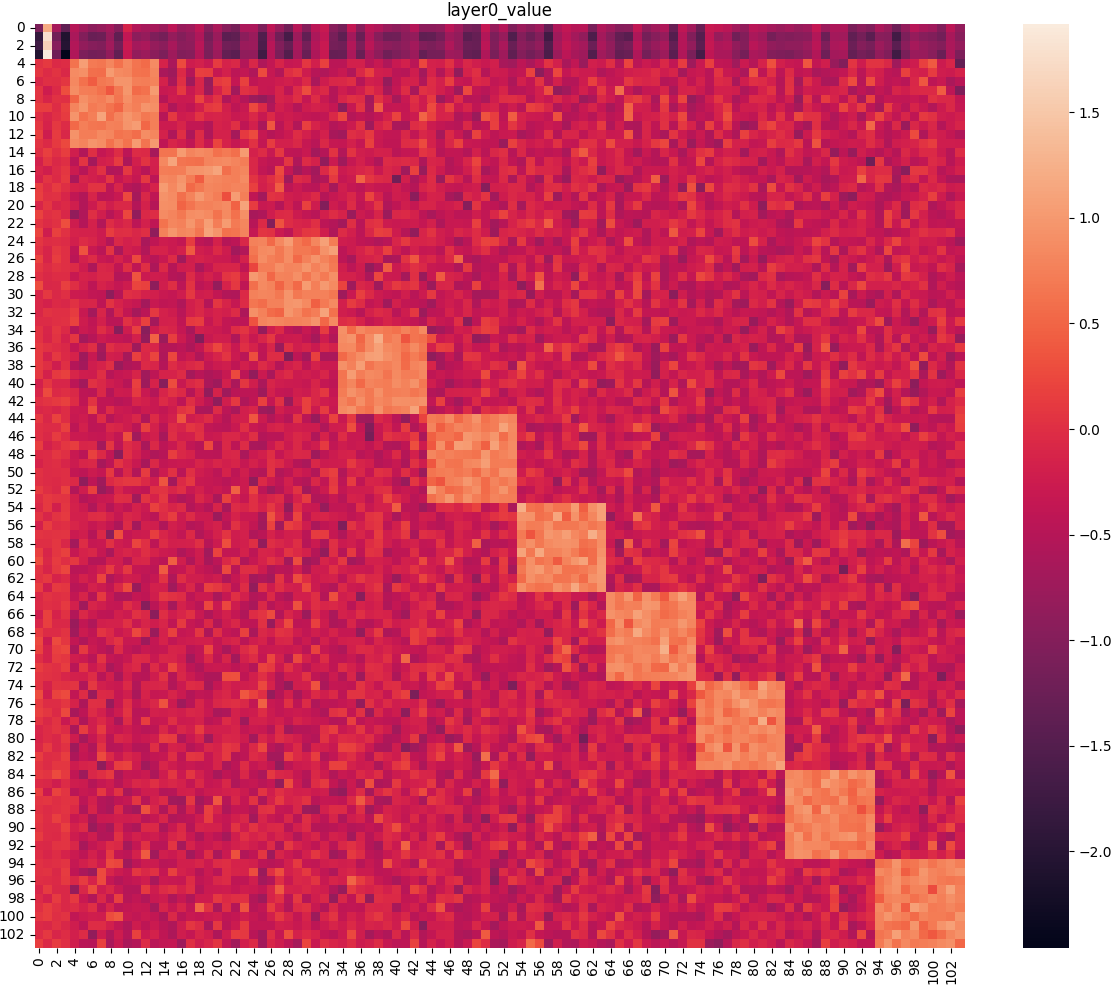}
  \end{minipage}
  \begin{minipage}[b]{0.23\textwidth}
    \includegraphics[width=\textwidth]{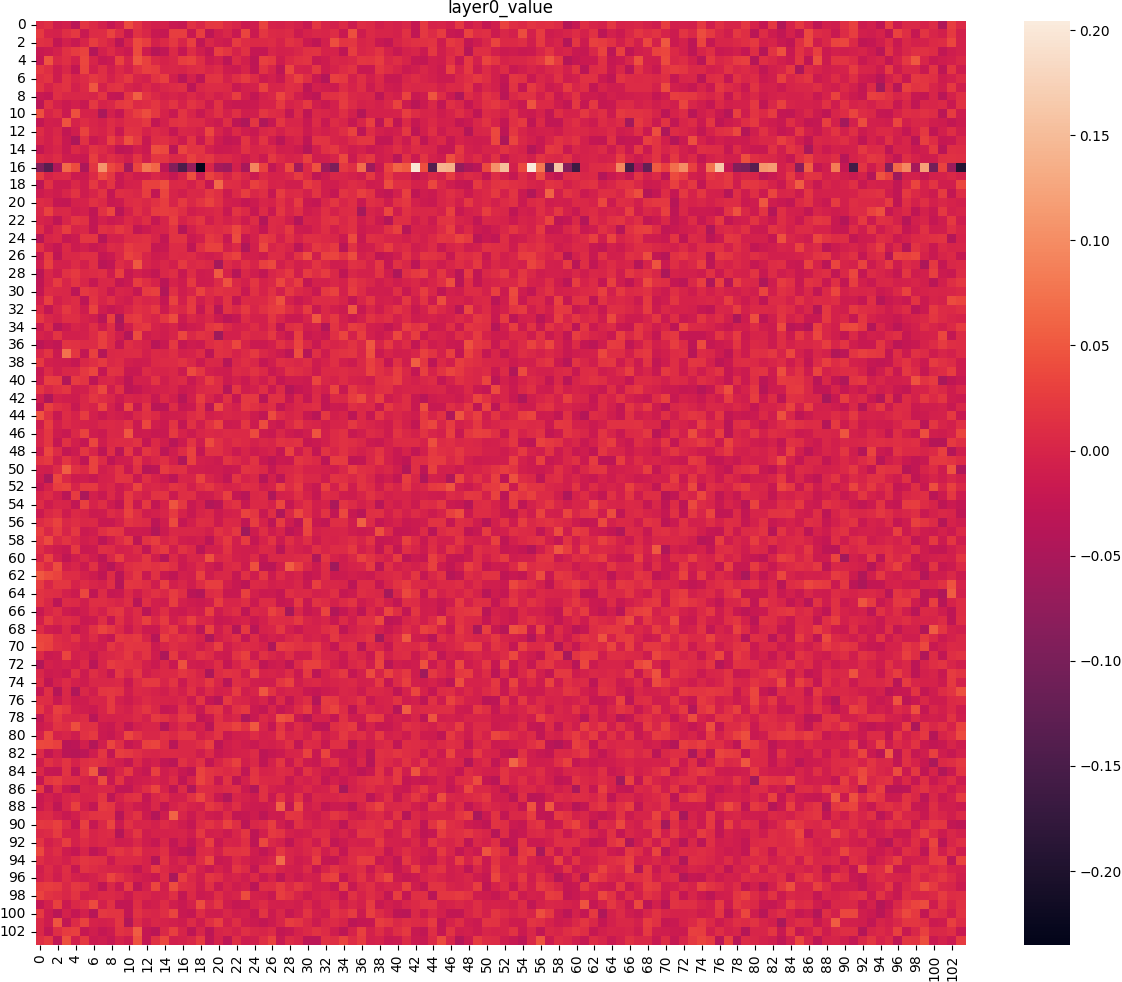}
  \end{minipage}
  \hfill
  \begin{minipage}[b]{0.23\textwidth}
    \includegraphics[width=\textwidth]{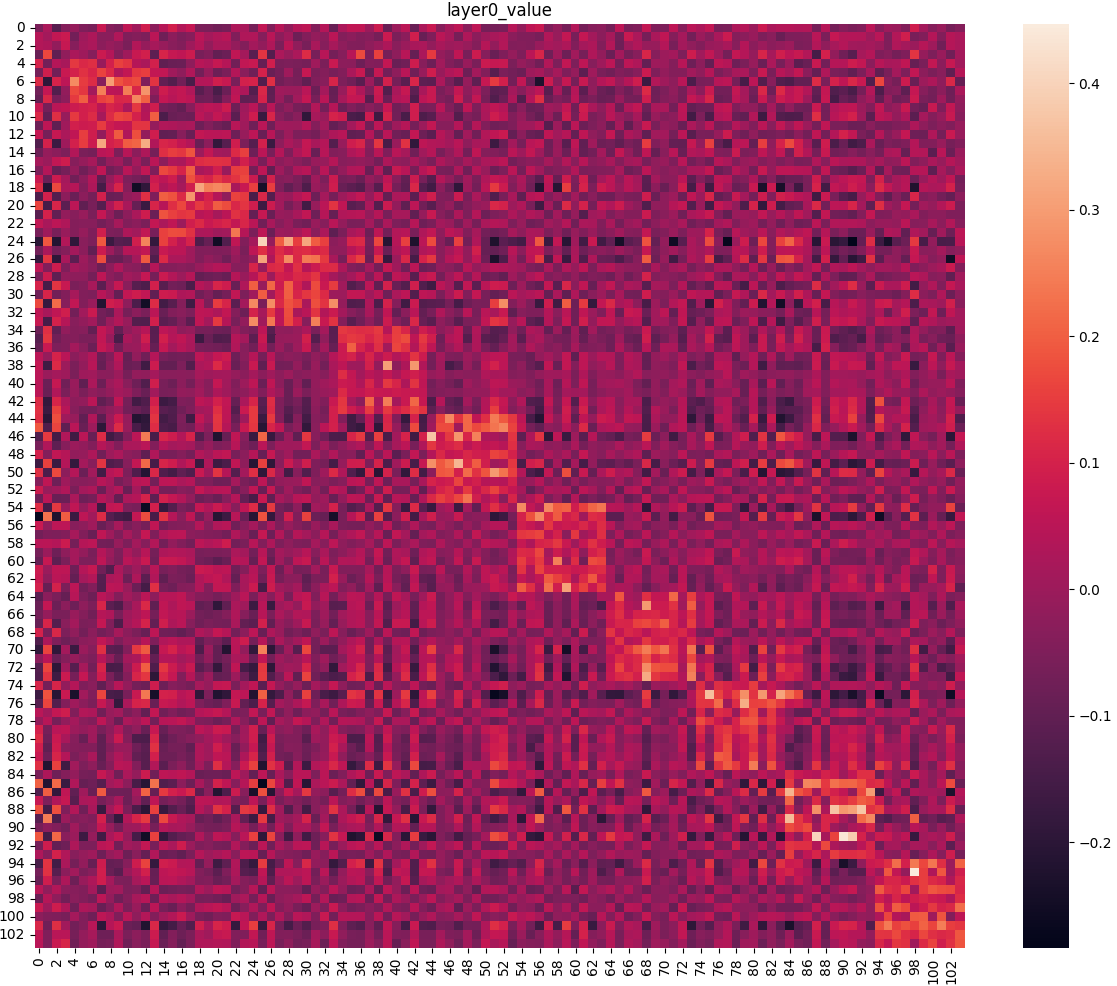}
  \end{minipage}
  \caption{Convergence point of trained $\mW^V$ (no $L_2$-regularization) when freezing uniform attention weights and one-hot word embedding. 
  The four plots correspond to different combinations of loss function and optimizer. 
  (Left to right) cross-entropy with SGD, cross-entropy with Adam, squared loss with SGD, squared loss with Adam,
  all using learning rate 0.01.
  The block-wise pattern verifies our theory in Section~\ref{sec:attention:value}.
  The 10 blocks correspond to the 10 topics in the data distribution.
  In particular, in the 
  third
  figure, the blocks are very weak and not easily visible, but we checked that the mean of the 1000 entries corresponding to the block positions is 0.00552563, which is over 10x the magnitude of the mean of a random subset of 1000 non-block entries (mean -0.00015675332, stdev 0.00060286524). 
  }
\label{fig:Wv_one_hot_freeze_uniform_attention_no_l2reg}
\end{figure}

Complementing our experimental results in Section~\ref{sec:experiments},
Figure~\ref{fig:Wv_one_hot_trained_attention} shows that even when the attention weights $\mW^K, \mW^Q$ are \emph{jointly trained} with $\mW^V$,
the model would still approximately converge to the type of block-wise $\mW^V$ described in our analyses in Section~\ref{sec:attention:value}. 

\begin{figure}[ht]
  \centering
  \begin{minipage}[b]{0.23\textwidth}
    \includegraphics[width=\textwidth]{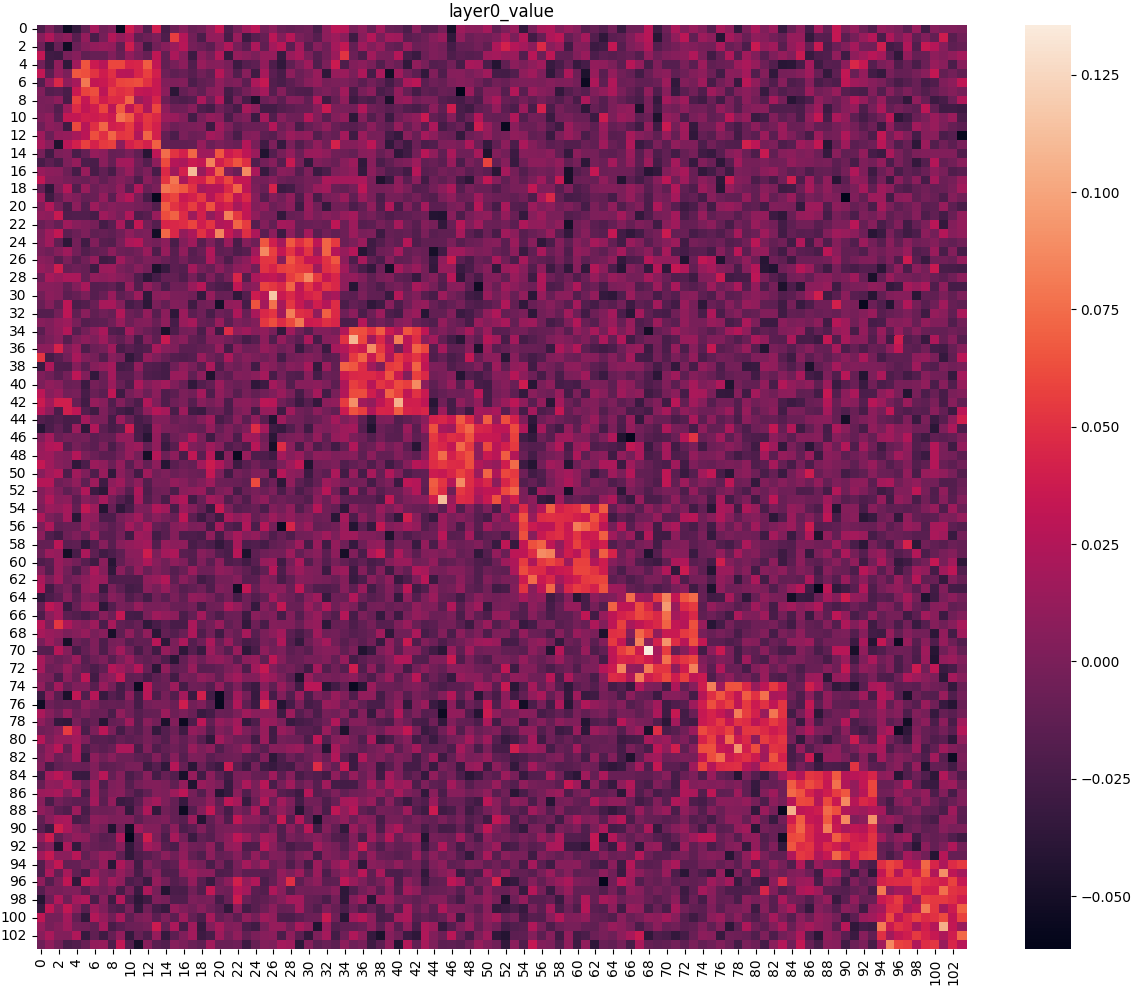}
  \end{minipage}
  \hfill
  \begin{minipage}[b]{0.23\textwidth}
    \includegraphics[width=\textwidth]{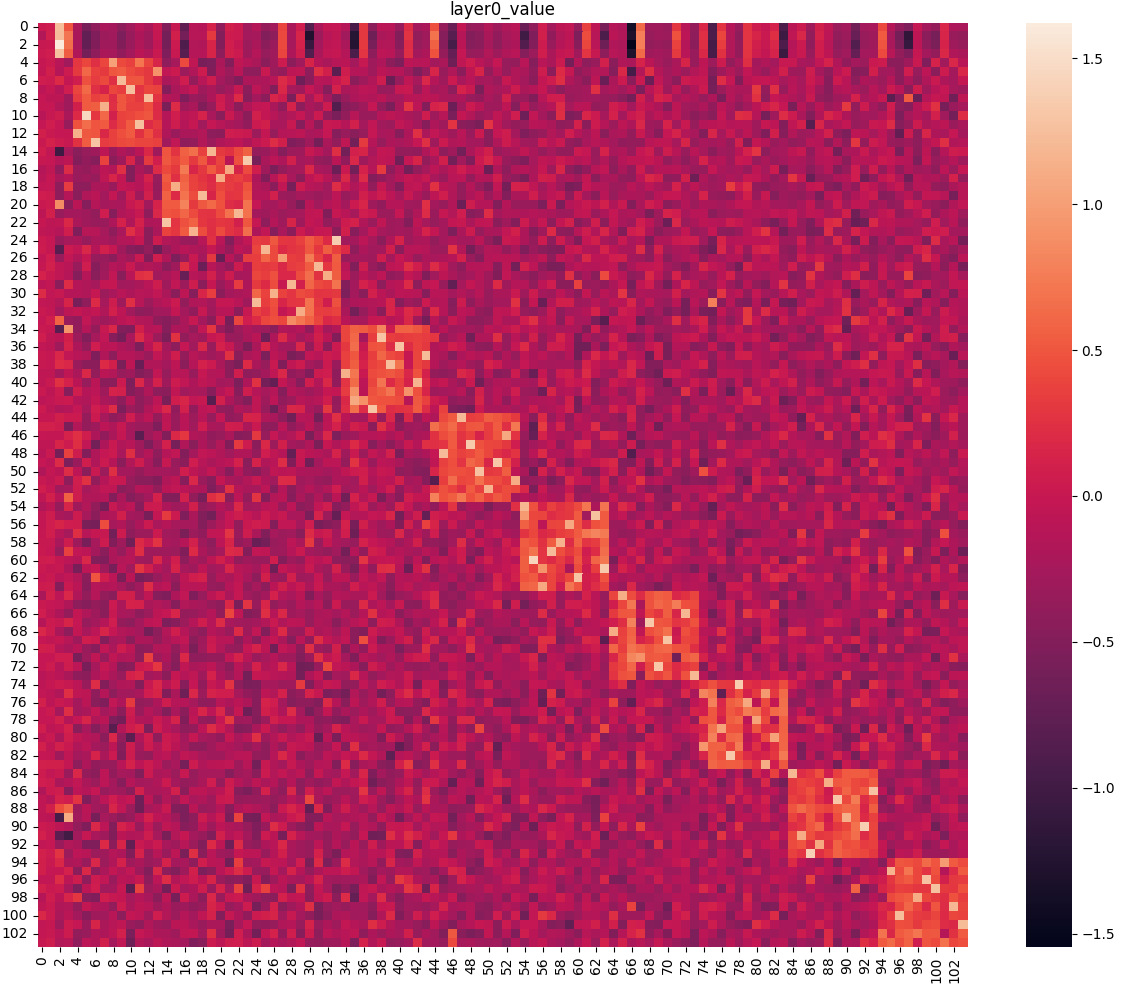}
  \end{minipage}
  \begin{minipage}[b]{0.23\textwidth}
    \includegraphics[width=\textwidth]{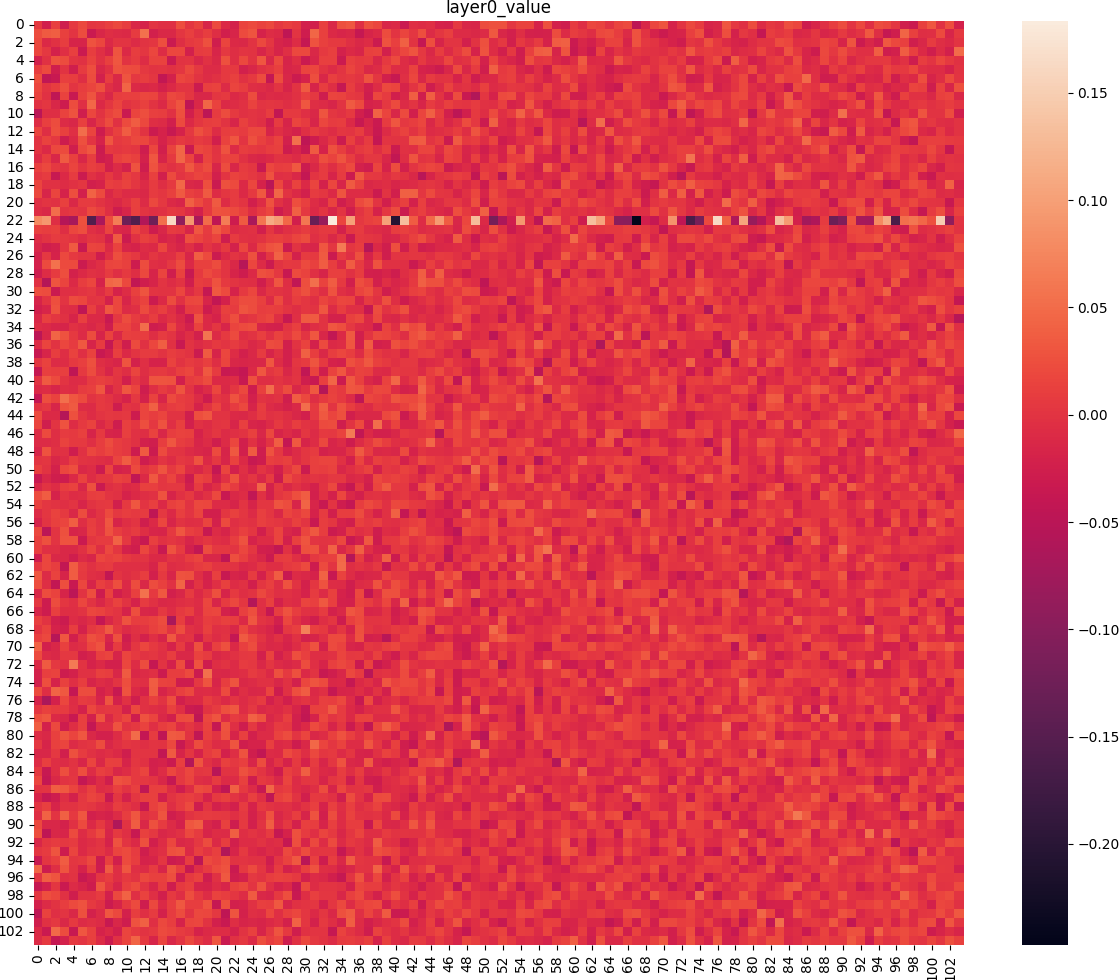}
  \end{minipage}
  \hfill
  \begin{minipage}[b]{0.23\textwidth}
    \includegraphics[width=\textwidth]{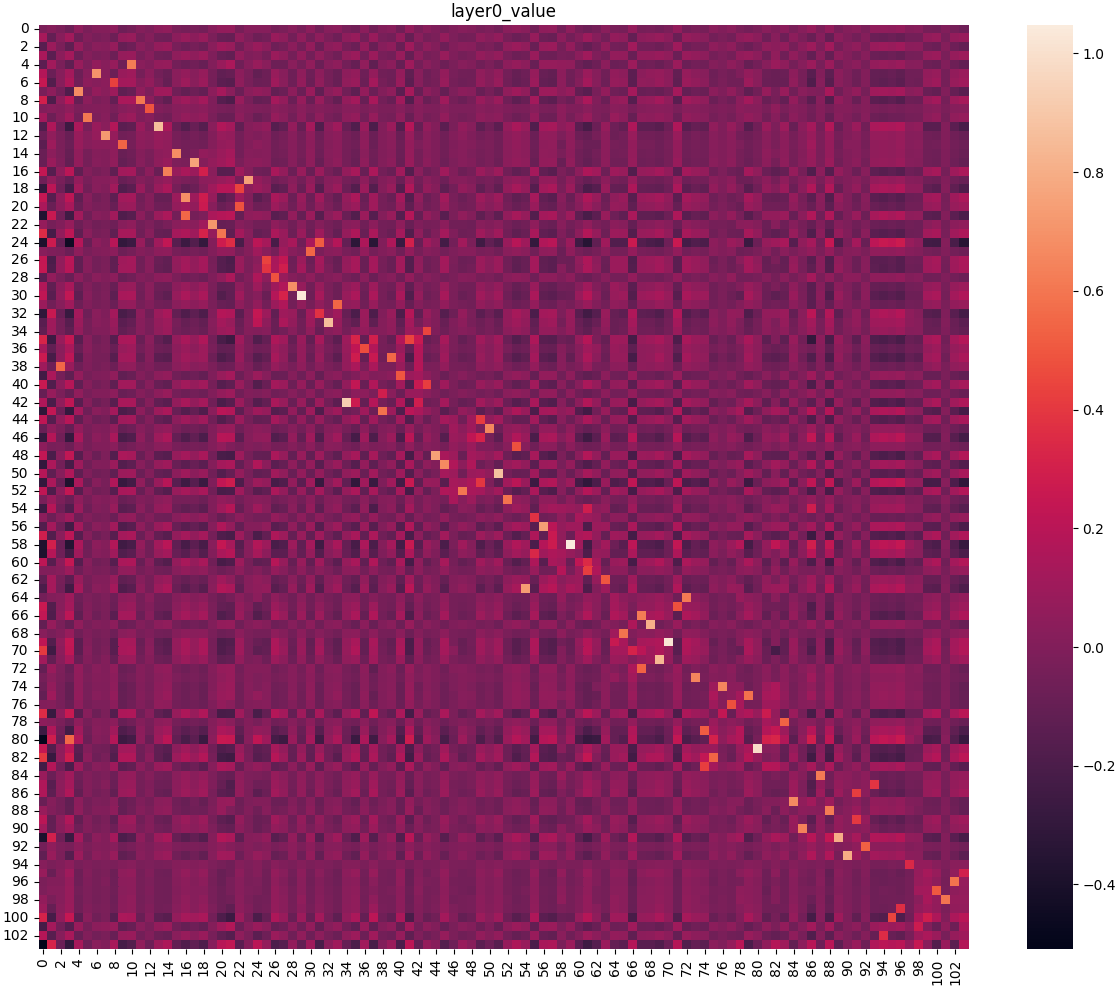}
  \end{minipage}
  \caption{Convergence point of trained $\mW^V$ when freezing one-hot word embedding but training attention weights. 
  (Left to right) cross-entropy with SGD, cross-entropy with Adam, squared loss with SGD, squared loss with Adam,
  all using learning rate 0.01.
  The block-wise pattern shows that our analysis in Section~\ref{sec:attention:value} closely approximates the empirical training dynamics when $\mW^K, \mW^Q, \mW^V$ are trained jointly. 
  The 10 blocks correspond to the 10 topics in the data distribution.
  In particular, in the 
  third
  figure, the blocks are very weak and not easily visible, but we checked that the mean of the 1000 entries corresponding to the block positions is 0.006545205, which is over 10x the magnitude of the mean of a random subset of 1000 non-block entries (mean -0.0006503917, stdev 0.0006370574).  
  }
\label{fig:Wv_one_hot_trained_attention}
\end{figure}

\clearpage
\subsection{Additional results on learned attention weights}
\label{sec:appendix:experiments:attn}

Complementing our experimental results in
Section~\ref{sec:experiments},
Table~\ref{tab:topic_attn_trained_Wv_block} shows that 
when the trained $\mW^V$ is closer to {\bf uniform within each block},
i.e. on average, each word pays more attention to \emph{different words of the same topic} than to words of \emph{different topics}.

\begin{table*}[ht]
\centering
\begin{tabular}{c|ccc}
\toprule
{\bf Optimizer and} & {\bf Avg Same-Word} & {\bf Avg Same-Topic-} & {\bf Avg Different-Topic}  \\
{\bf Learning Rate} & {\bf Attention} & {\bf -Different-Word Attention} & {\bf Attention} \\
\midrule
Adam 0.003 &  $0.00759 \pm 0.00171$ & $0.0108 \pm 0.000657$ & $0.00689 \pm 0.000160$ \\
Adam 0.01 & $0.00811 \pm 0.000705$ & $0.010 \pm 0.000392 $ & $0.00707 \pm 0.000178$ \\
Adam 0.03 & $0.00453 \pm 0.000346$ & $0.0116 \pm 0.000460$ & $0.00665 \pm 0.000200$ \\
SGD 0.01 & $0.0105$ & $0.0106$ & $0.00673$ \\
SGD 0.03 & $0.0140 \pm 0.00158$ & $0.0103 \pm 0.000357 $ & $0.00641 \pm 0.0000239$ \\
\bottomrule
\end{tabular}
\caption{\label{tab:topic_attn_trained_Wv_block} 
Average attention weights when the model (with one-hot word embeddings) is trained under the cross-entropy loss and $\mW^V$ converges to a block-wise pattern with closer to uniform blocks. 
We report mean $\pm$ std. deviation over 3 runs.
The row ``SGD 0.01" only contains 1 run, and the row ``SGD 0.003" is removed, because these models had much higher final train and dev losses than others.
For these failed runs, all three types of attention weights have similar averages, 
a sign that $\mW^K$ and $\mW^Q$ did not learn meaningful topical structures.
Note that under most settings, \emph{same-word} attention is larger than \emph{same-topic-different-word} attention, which is larger than \emph{different-topic} attention, 
verifying our conclusion in Theorem~\ref{thm:optimal_attention_weights_updated}.
The models trained using ``Adam 0.03" has larger \emph{same-topic-different-word} attention, which possibly made it unnecessary to rely on \emph{same-word} attention to achieve a low loss,
though our theory suggests that increasing \emph{same-word} attention could further reduce the loss.
}
\end{table*}

On the other hand,
when the trained $\mW^V$ is closer to {\bf a diagonal pattern},
the above ordering is partially reversed,
Table~\ref{tab:topic_attn_trained_Wv_diagonal} shows that 
on average, each word pays the most attention to the \emph{same word} in the document, followed by words of \emph{different topics}, and the least attention to \emph{different words of the same topic}.

\begin{table*}[ht]
\centering
\begin{tabular}{c|ccc}
\toprule
{\bf Learning Rate} & {\bf Avg Same-Word} & {\bf Avg Same-Topic-} & {\bf Avg Different-Topic}  \\
 & {\bf Attention} & {\bf -Different-Word Attention} &  {\bf Attention} \\
\midrule
0.003 &  $0.0916 \pm 0.000901$ & $0.00185 \pm 0.000170$ & $0.00256 \pm 0.0000332$ \\
0.01 & $0.0918 \pm 0.00244$ & $0.00182 \pm 0.000474 $ & $0.00256 \pm 0.000109$ \\
\bottomrule
\end{tabular}
\caption{\label{tab:topic_attn_trained_Wv_diagonal} Average attention weights when the model is trained under the cross-entropy loss with the Adam optimizer and $\mW^V$ converges to a diagonal pattern. 
We report mean $\pm$ std. deviation over 7 runs, 
selected out of 10, by removing the runs in which the diagonal pattern in $\mW^V$ is not visible or weak.
Note that on average, \emph{same-word} attention is larger than \emph{different-topic} attention, which is larger than \emph{same-topic-different-word} attention, 
verifying our conclusion in Theorem~\ref{thm:optimal_attention_weights_updated:WvI}.
}
\end{table*}

\clearpage
\subsection{Additional details and results on natural language data}
\label{sec:appendix:experiments:wiki}

In particular, for fair comparison, we should focus on the embedding similarity and attention weights between \emph{different words of the same topic} and \emph{different words of different topics}.
(This is because those metrics are less meaningful for a pair of two \emph{same words}, since their embeddings dot product is expected to be larger, which further biases the attention score comparisons. )

\paragraph{Ambiguity filter}
We also note that, for each word, an LDA model assigns some probability distribution of its topics.
To determine whether two words are of the same topic, it is more meaningful if they share a topic in which both words have high likelihood.
(By contrast, if two words each has some rarely-used topic that happens to overlap, we intuitively think of them as having different topics.)

To formalize such intuition, we filter out stop tokens, and other tokens that are not central to any topic (determined by the LDA).
That is, for each topic $t$,
LDA assigns to it a likelihood $p_i$ for each word $w_i$ in the vocabulary (of size $n$).
We sort these (word, likelihood) pairs by decreasing likelihood: 
\[ (w_1, p_1), \cdots, (w_n, p_n) \]
then for a pre-defined threshold parameter $\theta \in (0, 1)$ controlling the proportion of words to be assigned to each topic, 
we only consider the topic $t$ to contain the following words
\[ \{ w_i: i \le \theta n \} \]

\paragraph{Debiasing average attention weight}
Moreover, we note that sentence length may cause a bias in attention weights calculation:
intuitively, the average attention weight is the inverse of sentence length,
but longer sentences usually contain more topics (and hence a larger proportion of different-topic word pairs).
Thus, we expect that the average attention weight between different-topic word pairs are smaller than that between same-topic word pairs, 
\emph{even for a transformer with random parameters}.
(Empirically this bias indeed exists robustly, both on synthetic data and on Wikipedia data.)
Therefore, we debias the effect of sentence length on attention weights:
for each sentence, while computing the pairwise attention weights among its words,
we ``normalize the sentence length to 100",
that is, we multiply the raw attention weights by sentence length, and then divide the result by 100.
In this way, the average attention weight in each sentence is always $\frac{1}{100}$,
regardless of the proportion of same-topic and different-topic word pairs.
Indeed, as Table~\ref{tab:wiki_emb_attn_1topics_per_word} and Table~\ref{tab:wiki_emb_attn} show,
for a randomly initialized BERT model, after our debiasing, the average same-topic and different-topic attention weights are roughly equal.

\paragraph{Results}
For a set of pre-trained transformer-based models downloaded from Huggingface \citep{wolf2020transformers},
we compare the embedding similarity and attention weights between same-topic tokens and different-topic tokens.
The topics are determined by fitting an LDA model with 100 topics on a sample of tokenized Wikipedia corpus. 
We apply the above-mentioned ambiguity filter and debiasing.

\begin{itemize}
    \item When we further restrict to keeping only one topic for each word (to be consistent with the setting in our theoretical analysis): see Table~\ref{tab:wiki_emb_attn_1topics_per_word}.
    \item Without the last restriction above: see the following Table~\ref{tab:wiki_emb_attn}.
\end{itemize}

\begin{table*}[!ht]
\centering
\begin{tabular}{cc|ccc}
\toprule
{\bf Model} & {\bf Ambiguity} &  {\bf Avg embedding} &  {\bf Avg embedding} &  {\bf Avg attn weight}  \\ & {\bf Threshold}   & {\bf Cosine Similarity} & {\bf Dot Product} &  \bf{(Same-topic} \\ &  & {\bf (Same-topic/Diff-topic)} & {\bf (Same-topic/Diff-topic)} & \bf /Diff-topic)  \\
\midrule
Bert & 0.0005 & 1.14 & 1.04 &  1.23 \\
 & 0.001   & 0.97 & 1.05 & 1.17 \\
 & 0.002   & 0.99 & 0.93 & 1.13 \\
\hline
Albert & 0.0005  & 4.15 & 3.06 & 1.23 \\
 & 0.001  & 3.09 & 3.04 & 1.17 \\
 & 0.002  & 1.54 & 1.44 & 1.11 \\
\hline
Bart & 0.0005  & 2.51 & 1.76 & 1.27 \\
 & 0.001 & 1.63 & 1.12 & 1.20 \\
 & 0.002  & 1.06 & 0.85 & 1.11 \\
\hline
Electra & 0.0005   & 5.28 & 3.99 & 1.70 \\
 & 0.001   & 5.56 & 5.57 & 1.58 \\
 & 0.002   & 6.39 & 5.61 & 1.48 \\
\hline
Roberta & 0.0005  & 4.39 & 5.01 & 1.19 \\
 & 0.001  & 5.20 & 4.25 & 1.15 \\
 & 0.002  & 4.71 & 4.15 & 1.12 \\
\hline
Bert & 0.0005 & 0.99814 & 0.99957 & 1.00009 \\
(randomly & 0.001  & 0.99820 & 1.00167 & 1.00013  \\
initialized) & 0.002  & 0.99964 & 0.99928 & 0.99978  \\
\bottomrule
\end{tabular}
\caption{\label{tab:wiki_emb_attn}
For models pretained on Wikipedia dataset, their token embeddings and attention weights encode topic structure. 
The different columns are:
(1) The ``ambiguity threshold", i.e. the number of words per topic, divided by the vocabulary size; each word is only assigned one {\bf or more} topic(s)
(2) The average embedding cosine similarity between different words of the \emph{same topic}, divided by that between words of \emph{different topics}.
(3) The average embedding dot product between different words of the \emph{same topic}, divided by that between words of \emph{different topics}.
(4) The average attention weight between different words of the \emph{same topic}, divided by that between words of \emph{different topics}. (The attention weights are normalized for debiasing, see Appendix~\ref{sec:appendix:experiments:wiki}).
Different rows represent different evaluation settings, controlled by ``ambiguity threshold".
Note that the avg same-topic embedding similarity and attention weight are mostly greater than the avg diff-topic counterparts (with some exceptions).
Allowing multiple topics per word is different from our theoretical setup,
so our conclusions in Theorem~\ref{thm:optimal_embedding} and Theorem~\ref{thm:optimal_attention_weights_updated} do not cover this setting,
though we conjecture that some variants of these theoretical results can be proven using similar approaches to ours.
}
\end{table*}

\end{document}